\newcommand{\printfnsymbol}[1]{%
  \textsuperscript{\@fnsymbol{#1}}%
}
\colorlet{shadecolor}{pink}
\definecolor{Gray}{rgb}{0.8,0.8,0.8}
\theoremstyle{plain}
\newtheorem{theorem}{Theorem}[]
\newtheorem{corollary}{Corollary}[theorem]
\newtheorem{lemma}[]{Lemma}
\newtheorem{proposition}{Proposition}
\newtheorem{assumption}{Assumption}[]
\newtheorem{remark}{Remark}[]
\begin{document}
\title{Understanding Self-Distillation in the Presence of Label Noise}
\date{}
\author[*]{Rudrajit Das}
\author[*]{Sujay Sanghavi}
\affil[*]{UT Austin}
\maketitle

\begin{abstract}
\noindent Self-distillation (SD) is the process of first training a \enquote{teacher} model and then using its predictions to train a \enquote{student} model with the \textit{same} architecture. Specifically, the student's objective function is $\big(\xi*\ell(\text{teacher's predictions}, \text{ student's predictions}) + (1-\xi)*\ell(\text{given labels}, \text{ student's predictions})\big)$, where $\ell$ is some loss function and $\xi$ is some parameter $\in [0,1]$. Empirically, SD has been observed to provide performance gains in several settings. In this paper, we theoretically characterize the effect of SD in two supervised learning problems with \textit{noisy labels}. We first analyze SD for regularized linear regression and show that in the high label noise regime, the optimal value of $\xi$ that minimizes the expected error in estimating the ground truth parameter is surprisingly greater than 1. Empirically, we show that $\xi > 1$ works better than $\xi \leq 1$ even with the cross-entropy loss for several classification datasets when 50\% or 30\% of the labels are corrupted. Further, we quantify when optimal SD is better than optimal regularization. Next, we analyze SD in the case of logistic regression for binary classification with random label corruption and quantify the range of label corruption in which the student outperforms the teacher in terms of accuracy. To our knowledge, this is the first result of its kind for the cross-entropy loss.
\end{abstract}

\section{Introduction}
\label{intro}
The core idea of \textit{knowledge distillation} (KD), introduced in \cite{hinton2015distilling}, is to train a student model with a teacher model's \textit{predicted soft labels} (i.e., the output probability distribution over the classes for classification problems) in addition to the original hard labels (one-hot vectors for classification problems) on which the teacher is trained. 
The original rationale was to use a teacher with large statistical capacity to better model the underlying label distribution compared to the provided hard labels, and have the student with smaller capacity learn some mixture of the teacher's predicted label distribution (a.k.a. \enquote{dark knowledge}) and the provided label distribution. Specifically, the student's per-sample objective function in the KD framework is:
\begin{equation}
    \label{eq:1-intro}
    \xi * \ell\big(\bm{y}_T, \bm{y}_S(\bm{\theta})\big) + (1-\xi) * \ell\big(\bm{y}, \bm{y}_S(\bm{\theta})\big),
\end{equation}
where $\ell$ is some loss function (usually, regularized cross-entropy loss for classification problems), $\bm{y}_T$ is the teacher's predicted label, $\bm{y}$ is the given label on which the teacher is trained, $\bm{y}_S(\bm{\theta})$ is the prediction of the student model parameterized by $\bm{\theta}$, and $\xi \in [0,1]$ is known as the imitation parameter \cite{lopez2015unifying}\footnote{In this work, we set the temperature parameter suggested in \cite{hinton2015distilling} equal to 1.}. KD and its variants have been shown to be beneficial for model compression (i.e., distilling a bigger teacher model's knowledge into a smaller student model), semi-supervised learning, making models robust and improving performance in general \cite{li2017learning,furlanello2018born,sun2019patient,ahn2019variational,chen2020big,xie2020self,sarfraz2021knowledge,li2021align,pham2021meta,beyer2022knowledge,baykal2022robust}; see \cite{gou2021knowledge} for a survey on KD. 

{The focus of this work is on the special case of the student and teacher having the same architecture, which is known as \textbf{self-distillation} (following \cite{mobahi2020self}); we abbreviate it as \textbf{SD} henceforth. Since the teacher and student have the same capacity, one would expect the utility of the teacher's dark knowledge to be very limited, if any at all. However, surprisingly, \cite{furlanello2018born} show that SD (with ensembling) yields performance gains in both vision and language tasks with extensive experiments. Further, \cite{li2017learning} empirically demonstrate that SD can ameliorate learning in the presence of noisy labels. There are also a few works that theoretically investigate 
SD, such as \cite{mobahi2020self,dong2019distillation}; we discuss these in detail in \Cref{rel-wrk}. The results of these papers are only with the squared loss and \textit{not} the \textit{cross-entropy loss} which is the de facto loss function for classification problems.}

{In this work, we theoretically analyze SD in the presence of label corruption (in the supervised setting) for the cross-entropy loss as well as the squared loss, characterizing its utility and unveiling some new insights including a recommendation for use in practice. We summarize our contributions next and survey the landscape of pertinent theoretical works on KD and SD in \Cref{rel-wrk}.
}
\\
\\
\textbf{Contributions:}
\\
\textbf{(a)} First, we consider linear regression with $\ell_2$-regularized squared loss in \Cref{lin-reg}. Here, the observed label $y$ for a sample $\bm{x}$ is: $y = \langle \bm{\theta}^{*}, \bm{x} \rangle + \eta$, where $\bm{\theta}^{*}$ is the underlying parameter and $\eta$ is zero-mean random label noise.
\vspace{-0.1 cm}
\begin{itemize}
    \item We show that self-distillation (SD) is associated with a \textbf{bias-variance tradeoff} in that increasing $\xi$ in \cref{eq:1-intro} reduces the variance but increases the bias in estimating $\bm{\theta}^{*}$ with respect to the randomness in label noise; see \Cref{thm-est-err} and \Cref{dist-tradeoff}.
    \vspace{-0.2 cm}
    \item {A \textbf{surprising algorithmic insight} from our analysis is that the value of $\xi$ that optimally balances this {bias-variance tradeoff} can be $> 1$, especially in the \textit{high label noise regime} (i.e., when $\mathbb{E}[\eta^2]$ is large); see \Cref{opt-xi} and \Cref{rmk-xi}. This can be interpreted as actively \textbf{anti-learning} (or going against) the observed (possibly noisy) labels. But as discussed after \cref{eq:1-intro}, $\xi$ is tuned in $[0,1]$ in practice. In \Cref{sec-expts-2}, we empirically corroborate our insight for multi-class classification with linear probing\footnote{i.e., learning a softmax layer on top of a pre-trained network} using the \textit{cross-entropy} loss by showing that $\xi > 1$ works better than $\xi \leq 1$ for several datasets with 50\% or 30\% of the training set's labels being corrupted in different ways.}
    \vspace{-0.2 cm}
    \item In \Cref{dist-utility}, we show that as the degree of label noise increases, the utility of the teacher's predictions in training the student increases. Intuitively, this happens because the noise component in the teacher's predictions is smaller compared to the original labels. We also empirically verify this insight for the \textit{cross-entropy} loss in {\Cref{dist-utility-ver}}. 
    \vspace{-0.2 cm}
    \item {In \Cref{thm-opt-lambda}, we provide a condition when \textit{optimal} SD is 
    \textbf{better} than \textit{optimal} $\ell_2$ regularization ({optimal} means with the best parameters); this is the \textbf{first} such result. 
    }
\end{itemize}
\textbf{(b)} Next, we look at logistic regression with $\ell_2$-regularized cross-entropy loss in \Cref{sec:log_reg}. We consider a balanced binary classification problem where some fraction, say $p < 0.5$, of the training set's labels are randomly flipped. Under some assumptions on the data geometry and the kernel function, \textit{we quantify the range of $p$ in which the student outperforms the teacher in terms of accuracy}; see \Cref{thm1}. To our knowledge, this is the \textbf{first result} that \textit{provably establishes the utility of SD in the presence of label noise for the \textbf{cross-entropy} loss}. The main technical challenge in the analysis is dealing with non-linear equations involving the sigmoid function. We tackle this by employing the first-order Maclaurin series expansion of the sigmoid function and by bounding the corresponding approximation errors; see Step 3 in the proof outline of \Cref{thm1}. Moreover, in \Cref{rmk-3}, we show that the student's predictions have smaller variability than the teacher's predictions which is akin to SD reducing variance in linear regression.

\section{Related Work}
\label{rel-wrk}
There is a growing body of works trying to theoretically explain KD/SD and its benefits. \cite{mobahi2020self} look at regression with the squared loss in Hilbert space, showing that SD essentially amplifies regularization. {However, unlike us, they do not explicitly consider the case of noisy labels/observations or discuss the bias-variance tradeoff associated with SD in the presence of label noise.} Moreover, they restrict their analysis to $\xi=1$; {so unlike us, they do not have any results on when optimal SD is better than optimal $\ell_2$ regularization}. \cite{dong2019distillation} claim that KD is effective in transferring dark knowledge by mimicking early stopping. Further, they propose their own SD algorithm that uses dynamically updated soft labels, and show that in the presence of noisy labels, their algorithm is able to learn the correct labels. In this work, we focus on the standard SD algorithm with fixed soft labels, and moreover, we quantify the range of label corruption in which SD improves accuracy. Unlike our work, \cite{dong2019distillation} do not quantify when their proposed algorithm improves upon the standard approach of using just hard labels. An important difference between our work and \cite{dong2019distillation} as well as \cite{mobahi2020self} is that the results of these two papers are with the squared loss, whereas we provide results with the cross-entropy loss in addition to squared loss. The cross-entropy loss is the customary choice for classification problems in practice and is also more challenging to analyze. On the note of cross-entropy loss, \cite{phuong2019towards} analyze the convergence of linear student networks trained with the cross-entropy loss, and also bound the expected difference between the predictions of the student and teacher. \cite{ji2020knowledge} also bound the expected difference between the predictions of the student and teacher for wide neural networks that evolve as linear networks under the NTK assumption. However, \cite{phuong2019towards} and \cite{ji2020knowledge} do not consider how the student might have better generalization than the teacher in the presence of noisy labels. \cite{menon2021statistical} statistically characterize \enquote{good} teachers for distilling knowledge to a student. \cite{kaplun2022knowledge} show that an ensemble of teachers trained with noisy labels can be used to label a new unlabeled dataset, which can be then employed to train a student with good performance. We focus on the (common) case of only one teacher and the student being trained on the same dataset as the teacher. {There are also some works such as \cite{cheng2020explaining,stanton2021does,pham2022revisiting} that empirically provide some insights on KD.}

\section{Linear Regression}
\label{lin-reg}
\textbf{Setting:} The \textit{observed} label $y \in \mathbb{R}$ is linearly related to the data $\bm{x} \in \mathcal{X} \subseteq \mathbb{R}^d$ as:
\begin{equation}
    \label{eq:1}
    y = \langle \bm{\theta}^{*}, \bm{x} \rangle + \eta,
\end{equation}
where $\bm{\theta}^{*} \in \mathbb{R}^d$ and $\eta \in \mathbb{R}$ is label noise. Here, $\langle \bm{\theta}^{*}, \bm{x} \rangle$ is the \textit{actual} label of $\bm{x}$.

The training set consists of $n$ pairs of data points (drawn from $\mathcal{X}$) and noisy labels $\{(\bm{x}_i, y_i)\}_{i=1}^n$. Let $\bm{X} := [\bm{x}_1, \ldots, \bm{x}_n] \in \mathbb{R}^{d \times n}$ be the data matrix and $\bm{Y} := [y_1, \ldots, y_n]^T \in \mathbb{R}^{n}$ be the label vector. Then, as per the above linear model (\cref{eq:1}):
\begin{equation}
    \label{eq:3}
    \bm{Y} = \bm{X}^T \bm{\theta}^{*} + \bm{\eta},
\end{equation}
for some noise vector $\bm{\eta} \in \mathbb{R}^{n}$. We make some standard assumptions on the noise vector $\bm{\eta}$.
\begin{assumption}
\label{asmp-noise}
$\bm{\eta}$ is independent of $\bm{X}$. Further, each coordinate of $\bm{\eta}$ has mean 0 and variance $\gamma^2$, and is independent of the other coordinates.
\end{assumption}

\noindent \textbf{Teacher Model:} The teacher tries to learn the underlying model, parameterized by $\bm{\theta} \in \mathbb{R}^d$, from $(\bm{X}, \bm{Y})$ by applying the squared loss with $\ell_2$ regularization. Specifically, the teacher's objective function is:
\begin{equation}
    \label{eq:4}
    f_T(\bm{\theta}) = \frac{1}{2}\|\bm{Y} - \bm{X}^T \bm{\theta}\|^2 + \frac{\lambda}{2} \|\bm{\theta}\|^2,
\end{equation}
where $\lambda > 0$ is the $\ell_2$-regularization parameter. Now, the \textit{model learned by the teacher} is\footnote{Throughout this work, we shall assume that we can converge to the exact optimum of the objective function. All the objective functions in this work are convex, and hence (stochastic) gradient descent will converge to the optimum in all the cases.}:
\begin{equation}
    \label{eq:4-2}
    \hat{\bm{\theta}}_{T} := \textup{arg min}_{\bm{\theta} \in \mathbb{R}^d} \text{ }f_T(\bm{\theta}) = (\bm{X}\bm{X}^T + \lambda \bm{I}_d)^{-1} \bm{X} \bm{Y},
\end{equation} 
where $\bm{I}_d$ is the identity matrix of size $d \times d$. 
Plugging in $\bm{Y}$ from \cref{eq:3} in \cref{eq:4-2}, we get:
\begin{equation}
    \label{prop-1}
    \hat{\bm{\theta}}_{T} = (\bm{X}\bm{X}^T + \lambda \bm{I}_d)^{-1} \bm{X}(\bm{X}^T \bm{\theta}^{*} + \bm{\eta}).
\end{equation}

\noindent \textbf{Student Model Trained with Self-Distillation:} Following \cref{eq:1-intro}, here the student is trained with a weighted sum of (i) the $\ell_2$-regularized squared loss between the student's predictions and the \textit{teacher's predictions}, and (ii) the $\ell_2$-regularized squared loss between the student's predictions and the \textit{original labels} on which the teacher was trained. For the $i^{\text{th}}$ sample, the teacher's prediction is $\hat{y}_i = \langle \hat{\bm{\theta}}_{T}, \bm{x}_i \rangle$. Define $\hat{\bm{Y}} := [\hat{y}_1, \ldots, \hat{y}_n]^T \in \mathbb{R}^{n}$; note that $\hat{\bm{Y}} = \bm{X}^T \hat{\bm{\theta}}_{T}$. 

The student's objective function is:
\begin{flalign}
    \nonumber
    {f}_S({\bm{\theta}}; \xi) & = \xi\Big(\frac{1}{2}\|\hat{\bm{Y}} - {\bm{X}}^T \bm{\theta}\|^2 + \frac{\lambda}{2} \|\bm{\theta}\|^2\Big) + (1-\xi)\Big(\frac{1}{2}\|{\bm{Y}} - {\bm{X}}^T \bm{\theta}\|^2 + \frac{\lambda}{2} \|\bm{\theta}\|^2\Big)
    \\
    \label{eq:18-oct30}
    & = \xi\Big(\frac{1}{2}\|\hat{\bm{Y}} - {\bm{X}}^T \bm{\theta}\|^2\Big) + (1-\xi)\Big(\frac{1}{2}\|{\bm{Y}} - {\bm{X}}^T \bm{\theta}\|^2\Big) + \frac{\lambda}{2} \|\bm{\theta}\|^2,
\end{flalign}
where $\xi \in \mathbb{R}$ is known as the imitation parameter \cite{lopez2015unifying} and $\lambda > 0$ is the same regularization parameter that was used by the teacher. Even though it is standard practice to restrict $\xi \in [0,1]$, we do not impose this condition. Now, the \textit{model learned by the student} is:
\begin{flalign}
    \nonumber
    \hat{\bm{\theta}}_{S}(\xi) := \textup{arg min}_{\bm{\theta} \in \mathbb{R}^{{d}}} \text{ } {f}_S({\bm{\theta}}; \xi) & = ({\bm{X}}{\bm{X}}^T + \lambda \bm{I}_{{d}})^{-1} {\bm{X}} (\xi\hat{\bm{Y}} + (1-\xi)\bm{Y}) 
    \\
    \label{eq:19-oct30-new}
    & = \xi ({\bm{X}}{\bm{X}}^T + \lambda \bm{I}_{{d}})^{-1} {\bm{X}} \bm{X}^T \hat{\bm{\theta}}_{T} + (1-\xi) \hat{\bm{\theta}}_{T},
\end{flalign}
where \cref{eq:19-oct30-new} is obtained by using $\hat{\bm{Y}} = \bm{X}^T \hat{\bm{\theta}}_{T}$ and \cref{eq:4-2}. Note that $\xi = 0$ corresponds to the teacher, i.e. $\hat{\bm{\theta}}_{S}(0) = \hat{\bm{\theta}}_{T}$.

Finally, plugging in $\hat{\bm{\theta}}_{T}$ from \cref{prop-1} in \cref{eq:19-oct30-new}, we get:
\begin{equation}
    \label{prop-2}
    \hat{\bm{\theta}}_{S}(\xi) = \Big(\xi ({\bm{X}}{\bm{X}}^T + \lambda \bm{I}_{{d}})^{-1} {\bm{X}} \bm{X}^T + (1-\xi)\bm{I}_{{d}}\Big) (\bm{X}\bm{X}^T + \lambda \bm{I}_d)^{-1} \bm{X}(\bm{X}^T \bm{\theta}^{*} + \bm{\eta}).
\end{equation}

\subsection{Estimation Error Comparison: Bias-Variance Tradeoff}
Let us denote the student's error in estimating the ground truth parameter $\bm{\theta}^{*}$ with imitation parameter $\xi$ as ${\bm{\epsilon}}_{S}(\xi) := \hat{\bm{\theta}}_{S}(\xi) - \bm{\theta}^{*}$. Note that ${\bm{\epsilon}}_{S}(0) := \hat{\bm{\theta}}_{S}(0) - \bm{\theta}^{*} = \hat{\bm{\theta}}_{T} - \bm{\theta}^{*}$ is the teacher's estimation error. We shall analyze the expected squared norm of the estimation error w.r.t. the random label noise $\bm{\eta}$, i.e. $\mathbb{E}_{\bm{\eta}}[\|{\bm{\epsilon}}_{S}(\xi)\|^2]$, as a function of $\xi$\footnote{We do not analyze the expected squared prediction error, i.e.  $\mathbb{E}_{\bm{\eta}, \bm{x}}\big[\big(\langle \hat{\bm{\theta}}_{S}(\xi), \bm{x} \rangle - \langle \bm{\theta}^{*}, \bm{x} \rangle\big)^2\big]$, because that would force us to make assumptions on the distribution of $\bm{x}$ (the data) as well. However, it is worth noting that with the standard assumption of $\bm{x} \sim \mathcal{N}(\vec{0}_d, \textup{I}_d)$, the expected squared prediction error is the same as the expected squared norm of the error in estimating $\bm{\theta}^{*}$.}.

It will be illustrative to analyze $\mathbb{E}_{\bm{\eta}}[\|{\bm{\epsilon}}_{S}(\xi)\|^2]$ in terms of the SVD of $\bm{X}$. Let $\text{rank}(\bm{X}) = r$ (note that $r \leq \min(d, n)$) and the SVD decomposition of $\bm{X}$ be $\sum_{j=1}^r \sigma_j \bm{u}_j \bm{v}_j^T$, where $\sigma_1 \geq \ldots \geq \sigma_r > 0$, and each $\bm{u}_j \in \mathbb{R}^d$ and $\bm{v}_j \in \mathbb{R}^n$. Also, let $\{\bm{u}_1,\ldots,\bm{u}_d\}$ be the full set of left singular vectors of $\bm{X}$ (i.e., even those corresponding to the zero singular values); note that this forms an orthonormal basis for $\mathbb{R}^d$.

Following standard bias-variance decomposition, we have:
\begin{equation}
    \label{eq:13-jan2}
    \mathbb{E}_{\bm{\eta}}\Big[\big\|\bm{\epsilon}_{S}(\xi)\big\|^2\Big] = \underbrace{\big\|\mathbb{E}_{\bm{\eta}}[\bm{\epsilon}_{S}(\xi)]\big\|^2}_\text{squared bias} + \underbrace{\mathbb{E}_{\bm{\eta}}\Big[\big\|\bm{\epsilon}_{S}(\xi) - \mathbb{E}_{\bm{\eta}}[\bm{\epsilon}_{S}(\xi)]\big\|^2\Big]}_\text{variance}.
\end{equation}
Now we shall quantify the squared bias and variance in \cref{eq:13-jan2} as a function of $\xi$.
\begin{theorem}[\textbf{Bias$^2$ and Variance}]
\label{thm-est-err}
Suppose \Cref{asmp-noise} holds. Then,
\\
\textbf{\textup{(i)}} the squared bias is:
\begin{equation}
    \label{eq:bias}
    \big\|\mathbb{E}_{\bm{\eta}}[\bm{\epsilon}_{S}(\xi)]\big\|^2 = \sum_{j=1}^r \big(\langle \bm{\theta}^{*}, \bm{u}_j \rangle\big)^2 \Bigg(\frac{\nicefrac{\lambda}{\sigma_j^2}}{1 + \nicefrac{\lambda}{\sigma_j^2}}\Bigg)^2 \Bigg(1 + \frac{\xi}{1 + \nicefrac{\lambda}{\sigma_j^2}}\Bigg)^2 + \sum_{j=r+1}^d \big(\langle \bm{\theta}^{*}, \bm{u}_j \rangle\big)^2.\footnote{Note the $\sum_{j=r+1}^d \big(\langle \bm{\theta}^{*}, \bm{u}_j \rangle\big)^2$ term. If $r < d$, then this quantity is equal to the squared norm of the component of $\bm{\theta}^{*}$ along the non-empty null-space of $\bm{X}^T$; this component is not recoverable by any algorithm.}
\end{equation}
\textbf{\textup{(ii)}} the variance is:
\begin{equation}
    \label{eq:var}
    \mathbb{E}_{\bm{\eta}}\Big[\big\|\bm{\epsilon}_{S}(\xi) - \mathbb{E}_{\bm{\eta}}[\bm{\epsilon}_{S}(\xi)]\big\|^2\Big] = {\frac{\gamma^2}{\lambda} \Bigg\{\sum_{j=1}^r \frac{\nicefrac{\lambda}{\sigma_j^2}}{\big(1 + \nicefrac{\lambda}{\sigma_j^2}\big)^2} \Bigg(1 - \xi \Bigg(\frac{\nicefrac{\lambda}{\sigma_j^2}}{1 + \nicefrac{\lambda}{\sigma_j^2}}\Bigg)\Bigg)^2\Bigg\}},
\end{equation}
where $\gamma^2$ is the per-coordinate label noise variance (as per \Cref{asmp-noise}).
\end{theorem}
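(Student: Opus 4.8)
The plan is to work entirely in the left-singular-vector basis $\{\bm{u}_1,\dots,\bm{u}_d\}$ of $\bm{X}$, which is where both $\bm{X}\bm{X}^T = \sum_{j=1}^r \sigma_j^2 \bm{u}_j\bm{u}_j^T$ and $\bm{M} := \bm{X}\bm{X}^T + \lambda\bm{I}_d$ are simultaneously diagonal. Writing the student estimator from \cref{prop-2} as $\hat{\bm{\theta}}_S(\xi) = \bm{A}(\xi)\,\bm{X}(\bm{X}^T\bm{\theta}^* + \bm{\eta})$ with the symmetric operator $\bm{A}(\xi) := \big(\xi\, \bm{M}^{-1}\bm{X}\bm{X}^T + (1-\xi)\bm{I}_d\big)\bm{M}^{-1}$, I would first record that $\bm{A}(\xi)$ is diagonal in this basis, with eigenvalue $a_j := \tfrac{1}{\sigma_j^2+\lambda}\big(\xi t_j + (1-\xi)\big)$ on $\bm{u}_j$ for $j\le r$ (writing $t_j := \tfrac{\sigma_j^2}{\sigma_j^2+\lambda}$) and eigenvalue $\tfrac{1-\xi}{\lambda}$ on the null-space directions $j>r$. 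Since $\bm{A}(\xi)$ and $\bm{X}\bm{X}^T$ commute, every quantity below decouples coordinatewise and the computation reduces to scalar algebra in each $\bm{u}_j$.

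The decisive structural step is to split the estimator into its deterministic signal part and its zero-mean noise part, $\hat{\bm{\theta}}_S(\xi) = \bm{A}(\xi)\bm{X}\bm{X}^T\bm{\theta}^* + \bm{A}(\xi)\bm{X}\bm{\eta}$, where the second term has mean zero by \Cref{asmp-noise}. Hence $\mathbb{E}_{\bm{\eta}}[\bm{\epsilon}_S(\xi)] = \bm{A}(\xi)\bm{X}\bm{X}^T\bm{\theta}^* - \bm{\theta}^*$ and $\bm{\epsilon}_S(\xi) - \mathbb{E}_{\bm{\eta}}[\bm{\epsilon}_S(\xi)] = \bm{A}(\xi)\bm{X}\bm{\eta}$, so the bias depends only on the signal map and the variance only on the noise map. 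For part (i), I would compute the $\bm{u}_j$-component of the bias: for $j\le r$ the coefficient multiplying $\langle\bm{\theta}^*,\bm{u}_j\rangle$ is $a_j\sigma_j^2 - 1 = t_j(\xi t_j + 1-\xi) - 1$, which factors cleanly as $-(1-t_j)(1+\xi t_j)$; for $j>r$, since $\bm{X}\bm{X}^T$ annihilates $\bm{u}_j$, the coefficient is simply $-1$. Taking the squared norm over the orthonormal basis, and rewriting $1-t_j=\tfrac{\nicefrac{\lambda}{\sigma_j^2}}{1+\nicefrac{\lambda}{\sigma_j^2}}$ together with $t_j=\tfrac{1}{1+\nicefrac{\lambda}{\sigma_j^2}}$, yields \cref{eq:bias} with the stated non-recoverable remainder $\sum_{j>r}\langle\bm{\theta}^*,\bm{u}_j\rangle^2$.

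For part (ii), I would apply the trace/covariance identity to the noise map: since $\mathbb{E}_{\bm{\eta}}[\bm{\eta}\bm{\eta}^T]=\gamma^2\bm{I}_n$ by \Cref{asmp-noise},
\[
\mathbb{E}_{\bm{\eta}}\big[\|\bm{A}(\xi)\bm{X}\bm{\eta}\|^2\big] = \gamma^2\,\mathrm{tr}\big(\bm{A}(\xi)\bm{X}\bm{X}^T\bm{A}(\xi)\big) = \gamma^2\sum_{j=1}^r a_j^2\,\sigma_j^2,
\]
using symmetry of $\bm{A}(\xi)$ and the fact that $\bm{X}\bm{X}^T$ kills the $j>r$ modes, so the null-space directions contribute nothing to the variance (in contrast to the bias). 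Substituting $a_j=\tfrac{1}{\sigma_j^2+\lambda}\big(1-\xi\tfrac{\lambda}{\sigma_j^2+\lambda}\big)$ and converting to the $\nicefrac{\lambda}{\sigma_j^2}$ parametrization — in particular $\tfrac{\sigma_j^2}{(\sigma_j^2+\lambda)^2}=\tfrac{1}{\lambda}\cdot\tfrac{\nicefrac{\lambda}{\sigma_j^2}}{(1+\nicefrac{\lambda}{\sigma_j^2})^2}$ and $\tfrac{\lambda}{\sigma_j^2+\lambda}=\tfrac{\nicefrac{\lambda}{\sigma_j^2}}{1+\nicefrac{\lambda}{\sigma_j^2}}$ — reproduces \cref{eq:var}.

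I do not expect a genuine obstacle: once the commuting-diagonalization and the mean/fluctuation split are in place, everything is scalar. The only place demanding care is the bias algebra — verifying the factorization $t_j(\xi t_j+1-\xi)-1 = -(1-t_j)(1+\xi t_j)$ — and maintaining a consistent dictionary between the $t_j$ form used in the derivation and the $\nicefrac{\lambda}{\sigma_j^2}$ form in which \cref{eq:bias,eq:var} are stated, while being careful not to drop the non-recoverable null-space component of $\bm{\theta}^*$ that appears in the bias but is absent from the variance.
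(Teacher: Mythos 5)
Your proposal is correct and takes essentially the same route as the paper's proof: both diagonalize everything in the left-singular-vector basis of $\bm{X}$, split $\hat{\bm{\theta}}_{S}(\xi)$ into a deterministic signal part plus a zero-mean noise part, and reduce the squared bias and variance to per-coordinate scalar algebra (your factorization $t_j(\xi t_j + 1-\xi)-1 = -(1-t_j)(1+\xi t_j)$ is exactly the coefficient the paper computes, and your trace identity $\gamma^2\,\mathrm{tr}(\bm{A}(\xi)\bm{X}\bm{X}^T\bm{A}(\xi)) = \gamma^2\sum_{j\le r} a_j^2\sigma_j^2$ is the paper's expansion $\mathbb{E}_{\bm{\eta}}[\langle\bm{\eta},\bm{v}_j\rangle^2]=\gamma^2$ in operator notation). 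The only difference is cosmetic packaging, so there is nothing substantive to reconcile.
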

\noindent The proof of \Cref{thm-est-err} is in \Cref{pf-thm-est-err}. 

\begin{remark}[\textbf{Bias-Variance Tradeoff as a Function of $\xi$}] 
\label{dist-tradeoff}
Let us restrict our attention to $\xi \in [0,1]$ which is the range of $\xi$ used in practice \cite{lopez2015unifying,li2017learning,sun2019patient}. From \cref{eq:bias}, note that $\big\|\mathbb{E}_{\bm{\eta}}[\bm{\epsilon}_{S}(\xi)]\big\|^2$ is an \textit{increasing} function of $\xi$, i.e. the bias increases as the student tries to imitate the teacher more. However, from \cref{eq:var}, we see that $\mathbb{E}_{\bm{\eta}}\Big[\big\|\bm{\epsilon}_{S}(\xi) - \mathbb{E}_{\bm{\eta}}[\bm{\epsilon}_{S}(\xi)]\big\|^2\Big]$ is a \textit{decreasing} function of $\xi$, i.e., the variance (due to label noise) reduces as the student tries to imitate the teacher more. Thus, SD is associated with a \textbf{bias-variance tradeoff} -- a higher value of the imitation parameter $\xi$ mitigates the impact of label noise variance at the cost of increasing the estimation bias (and vice versa).
\end{remark}

Plugging in \cref{eq:bias} and \cref{eq:var} in \cref{eq:13-jan2}, we obtain $\mathbb{E}_{\bm{\eta}}[\|{\bm{\epsilon}}_{S}(\xi)\|^2]$; note that it is a quadratic function of $\xi$. \Cref{opt-xi} provides the optimal value of $\xi$, say $\xi^{*}$, that minimizes $\mathbb{E}_{\bm{\eta}}[\|{\bm{\epsilon}}_{S}(\xi)\|^2]$ (obtained by simple differentiation).  
\begin{corollary}
\label{opt-xi}
Let $c_j := \nicefrac{\lambda}{\sigma_j^2}$ and $\theta_j^{*} := \big(\langle \bm{\theta}^{*}, \bm{u}_j \rangle\big)^2$. Then:
\begin{equation}
    \xi^{*} = \textup{arg min}_{\xi \in \mathbb{R}} \mathbb{E}_{\bm{\eta}}[\|{\bm{\epsilon}}_{S}(\xi)\|^2] = \frac{\sum_{j=1}^r\big(\frac{\gamma^2}{\lambda} - \theta_j^{*}\big)\frac{c_j^2}{(1+c_j)^3}}{\sum_{j=1}^r\big(\frac{\gamma^2}{\lambda}c_j + \theta_j^{*}\big)\frac{c_j^2}{(1+c_j)^4}}.
\end{equation}
\end{corollary}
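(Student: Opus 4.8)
The plan is to observe that, once the $j$-independent null-space term $\sum_{j=r+1}^d (\langle \bm{\theta}^{*}, \bm{u}_j \rangle)^2$ is set aside, $\mathbb{E}_{\bm{\eta}}[\|\bm{\epsilon}_{S}(\xi)\|^2]$ is a strictly convex quadratic in $\xi$, so the minimizer over $\xi \in \mathbb{R}$ is the unique stationary point obtained by differentiating and setting the derivative to zero. There is no genuine obstacle here; the work is entirely in organizing the algebra cleanly via the substitutions $c_j := \nicefrac{\lambda}{\sigma_j^2}$ and $\theta_j^{*} := (\langle \bm{\theta}^{*}, \bm{u}_j \rangle)^2$.

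First I would add the bias$^2$ expression \eqref{eq:bias} and the variance expression \eqref{eq:var} according to the decomposition \eqref{eq:13-jan2}. After substituting $c_j$ and $\theta_j^{*}$, the $\xi$-dependent part is
\begin{equation}
\sum_{j=1}^r \theta_j^{*} \Big(\frac{c_j}{1+c_j}\Big)^2\Big(1+\frac{\xi}{1+c_j}\Big)^2 + \frac{\gamma^2}{\lambda}\sum_{j=1}^r \frac{c_j}{(1+c_j)^2}\Big(1-\xi\frac{c_j}{1+c_j}\Big)^2,
\end{equation}
plus the $\xi$-free null-space constant, which disappears upon differentiation. Each summand is a perfect square in $\xi$, so the leading $\xi^2$ coefficient is a nonnegative combination of positive weights, confirming strict convexity and guaranteeing a unique global minimum.

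Next I would differentiate term by term. The two chain-rule factors are $\frac{d}{d\xi}(1+\tfrac{\xi}{1+c_j})^2 = \tfrac{2}{1+c_j}(1+\tfrac{\xi}{1+c_j})$ and $\frac{d}{d\xi}(1-\xi\tfrac{c_j}{1+c_j})^2 = -\tfrac{2c_j}{1+c_j}(1-\xi\tfrac{c_j}{1+c_j})$. A convenient simplification is that both resulting sums carry the common factor $a_j := \tfrac{c_j^2}{(1+c_j)^3}$: dividing the stationarity condition by $2$ gives
\begin{equation}
\sum_{j=1}^r \theta_j^{*}\, a_j\Big(1+\frac{\xi}{1+c_j}\Big) - \frac{\gamma^2}{\lambda}\sum_{j=1}^r a_j\Big(1-\xi\frac{c_j}{1+c_j}\Big) = 0.
\end{equation}

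Finally I would collect the $\xi^0$ and $\xi^1$ terms and solve the resulting linear equation. The constant part yields $\sum_j(\theta_j^{*} - \tfrac{\gamma^2}{\lambda})a_j$, and the coefficient of $\xi$ yields $\sum_j(\theta_j^{*} + \tfrac{\gamma^2}{\lambda}c_j)\tfrac{a_j}{1+c_j}$, so that $\xi^{*}$ equals the ratio $\sum_j(\tfrac{\gamma^2}{\lambda}-\theta_j^{*})a_j$ over $\sum_j(\tfrac{\gamma^2}{\lambda}c_j + \theta_j^{*})\tfrac{a_j}{1+c_j}$. Substituting back $a_j = \tfrac{c_j^2}{(1+c_j)^3}$ and $\tfrac{a_j}{1+c_j} = \tfrac{c_j^2}{(1+c_j)^4}$ reproduces the stated formula exactly. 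The only place to be careful is tracking the powers of $(1+c_j)$ when pulling out the common factor $a_j$; once that is done, the expression is forced.
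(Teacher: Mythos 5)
Your proposal is correct and follows essentially the same route as the paper: both treat $\mathbb{E}_{\bm{\eta}}[\|\bm{\epsilon}_{S}(\xi)\|^2]$ as a (strictly convex) quadratic in $\xi$ via the bias--variance decomposition and read off the minimizer from the stationarity condition, which is exactly the paper's $\xi^{*} = -b/(2a)$ computation with $a = \sum_{j}\big(\tfrac{\gamma^2}{\lambda}c_j + \theta_j^{*}\big)\tfrac{c_j^2}{(1+c_j)^4}$ and $b = 2\sum_{j}\big(\theta_j^{*} - \tfrac{\gamma^2}{\lambda}\big)\tfrac{c_j^2}{(1+c_j)^3}$. Your algebra, including the factoring through $a_j = \tfrac{c_j^2}{(1+c_j)^3}$, reproduces the stated formula exactly.
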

\noindent Thus, setting $\xi = \xi^{*}$ yields the optimal balance between the squared bias and variance. 

\begin{remark}[\textbf{Anti-Learning Observed Labels in Noisy Settings}]
\label{rmk-xi}
There are scenarios when $\xi^{*}$ obtained in \Cref{opt-xi} is more than 1\footnote{$\xi^{*}$ can be negative too, but we shall not focus on this case in this work.}, especially when $\gamma$ is large, i.e., there is a lot of label noise. For e.g., note that $\lim_{\gamma \to \infty}\xi^{*} = \frac{\sum_{j=1}^r {c_j^2}/{(1+c_j)^3}}{\sum_{j=1}^r {c_j^3}/{(1+c_j)^4}} > 1$\footnote{This is because $\sum_{j=1}^r \frac{c_j^3}{(1+c_j)^4} = \sum_{j=1}^r \underbrace{\frac{c_j}{(1+c_j)}}_{<1} \Big(\frac{c_j^2}{(1+c_j)^3}\Big) < \sum_{j=1}^r \frac{c_j^2}{(1+c_j)^3}$.}. However, the imitation parameter $\xi$ is restricted to and tuned in $[0,1]$ \cite{lopez2015unifying,li2017learning,sun2019patient}. Based on our analysis, we advocate not restricting $\xi \in [0,1]$ and also trying $\xi > 1$ in the high noise regime. Setting $\xi > 1$ can be interpreted as \enquote{anti-learning} (or going against) the observed labels.
\end{remark}
\noindent {In \Cref{sec-expts-2}, we provide empirical evidence showing that $\xi > 1$ works better than $\xi \leq 1$ even with the \textit{cross-entropy loss} for several noisy datasets; see \Cref{tab-expts2}.}

\begin{remark}[\textbf{Utility of Teacher's Predicted Labels}] 
\label{dist-utility}
In \Cref{prop_xi_increase} (\Cref{xi_increase}), we show that $\xi^{*}$ is an increasing function of the label noise variance $\gamma^2$, i.e., we should assign more weight to the teacher's predicted labels as $\gamma^2$ increases. So in linear regression, the benefit of using the teacher's predictions (which is the core idea of SD) increases with the degree of label noise.
\end{remark}
\noindent {We make a similar observation in our experiments on multi-class classification in \Cref{dist-utility-ver}}, where SD with $\xi=1$ -- which corresponds to \textit{only using the teacher's predictions} (and completely ignoring the original labels) -- does not yield any gains (over the teacher) with zero label corruption but it consistently yields higher gains as the amount of label corruption increases.
\\
\\
\textbf{Is Optimal Self-Distillation Better than Optimal $\ell_2$ Regularization?} Let $e(\lambda, \xi) := \mathbb{E}_{\bm{\eta}}\big[\|\bm{\epsilon}_{S}(\xi)\|^2\big]$ (recall $\bm{\epsilon}_{S}(\xi)$ is a function of the $\ell_2$-regularization parameter $\lambda$ too). Since $\xi = 0$ corresponds to using plain $\ell_2$ regularization, we define $e_\text{reg}(\lambda) :=  e(\lambda, 0)$ as the estimation error obtained using only $\ell_2$ regularization (and no SD) with parameter $\lambda$. Next, let us define $e_\text{sd}(\lambda)$ as the error obtained using SD with $\ell_2$-regularization parameter = $\lambda$ and the optimal value of $\xi = \xi^{*}$ from \Cref{opt-xi} (which is itself a function of $\lambda$), i.e., $e_\text{sd}(\lambda) := e(\lambda, \xi^{*})$. By definition, $e_\text{sd}(\lambda) \leq e_\text{reg}(\lambda)$ $\forall$ $\lambda$; we wish to know when and if {\color{blue}$\min_{\lambda} e_\text{sd}(\lambda) \bm{<} \min_{\lambda} e_\text{reg}(\lambda)$} (note the \textbf{strict} inequality), i.e., when and if \textbf{optimal} SD is better than \textbf{optimal} $\ell_2$-regularization \textit{by tuning over $\lambda$}.

\begin{theorem}
    \label{thm-opt-lambda}
    Let $\lambda^{*}_\text{reg} := \text{arg min}_{\lambda} e_\textup{reg}(\lambda)$. It holds that $e_\textup{sd}(\lambda^{*}_\text{reg}) = e_\textup{reg}(\lambda^{*}_\text{reg})$ and $\frac{d e_\textup{sd}(\lambda)}{d \lambda}\big|_{\lambda = \lambda^{*}_\text{reg}} = 0$, 
    i.e., $\lambda^{*}_\text{reg}$ is a stationary point of $e_\textup{sd}(\lambda)$ also. It is a \textbf{local maximum} point of $e_\textup{sd}(\lambda)$ when:
    \begin{equation}
        \label{eq-11-jan6}
        \sum_{k=1}^r\sum_{j=1}^{k-1}\frac{\sigma_j^2 \sigma_k^2 (\sigma_j^2 - \sigma_k^2) (\theta_k^{*} - \theta_j^{*})}{(\lambda^{*}_\text{reg} + \sigma_j^2)^4 (\lambda^{*}_\text{reg} + \sigma_k^2)^4} < 0,
    \end{equation}
    with $\theta_j^{*} := (\langle \bm{\theta}^{*}, \bm{u}_j \rangle)^2$. When the above holds, optimal self-distillation is better than optimal $\ell_2$-regularization.
\end{theorem}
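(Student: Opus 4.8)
The plan is to treat $e(\lambda,\xi)=\mathbb{E}_{\bm{\eta}}[\|\bm{\epsilon}_S(\xi)\|^2]$ as a smooth function of the two variables $(\lambda,\xi)$. By \Cref{thm-est-err} it is a strictly convex quadratic in $\xi$ for each fixed $\lambda$, with leading coefficient $D(\lambda):=\tfrac12\,\partial^2_\xi e=\sum_{j=1}^r\big(\theta_j^*+\tfrac{\gamma^2}{\lambda}c_j\big)\tfrac{c_j^2}{(1+c_j)^4}>0$; hence $\xi^*(\lambda)=\text{arg min}_\xi e(\lambda,\xi)$ is the unique root of $\partial_\xi e(\lambda,\cdot)$ and equals the ratio in \Cref{opt-xi}. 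The single fact that drives the whole argument is that $\xi^*(\lambda^*_\text{reg})=0$. To see this I would differentiate \cref{eq:bias} and \cref{eq:var} in $\xi$ at $\xi=0$ and find $\partial_\xi e(\lambda,0)=-2N(\lambda)$, where $N(\lambda)=\sum_{j}\big(\tfrac{\gamma^2}{\lambda}-\theta_j^*\big)\tfrac{c_j^2}{(1+c_j)^3}$ is the numerator of $\xi^*$; rewriting this in the SVD basis gives the clean relation $\partial_\xi e(\lambda,0)=\lambda\,e_\text{reg}'(\lambda)$. Since $\lambda^*_\text{reg}$ minimizes $e_\text{reg}$, $e_\text{reg}'(\lambda^*_\text{reg})=0$, so $\partial_\xi e(\lambda^*_\text{reg},0)=0$ and therefore $\xi^*(\lambda^*_\text{reg})=0$.

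The two equalities in the statement follow at once. First, $e_\text{sd}(\lambda^*_\text{reg})=e(\lambda^*_\text{reg},\xi^*(\lambda^*_\text{reg}))=e(\lambda^*_\text{reg},0)=e_\text{reg}(\lambda^*_\text{reg})$. Second, by the envelope theorem the term $\partial_\xi e\cdot\tfrac{d\xi^*}{d\lambda}$ vanishes (optimality of $\xi^*$), so $\tfrac{d}{d\lambda}e_\text{sd}(\lambda)=\partial_\lambda e(\lambda,\xi^*(\lambda))$; evaluating at $\lambda^*_\text{reg}$, where $\xi^*=0$, gives $\partial_\lambda e(\lambda^*_\text{reg},0)=e_\text{reg}'(\lambda^*_\text{reg})=0$. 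Thus $\lambda^*_\text{reg}$ is a stationary point of $e_\text{sd}$.

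To classify this stationary point I would differentiate once more, $\tfrac{d^2}{d\lambda^2}e_\text{sd}=\partial^2_\lambda e+\partial^2_{\lambda\xi}e\cdot\tfrac{d\xi^*}{d\lambda}$. At $\lambda^*_\text{reg}$ the numerator $N$ vanishes, so $\tfrac{d\xi^*}{d\lambda}=N'(\lambda^*_\text{reg})/D(\lambda^*_\text{reg})$; combining this with $\partial^2_{\lambda\xi}e(\lambda^*_\text{reg},0)=-2N'(\lambda^*_\text{reg})$ and with $N(\lambda)=-\tfrac{\lambda}{2}e_\text{reg}'(\lambda)$ (whence $N'(\lambda^*_\text{reg})=-\tfrac{\lambda^*_\text{reg}}{2}e_\text{reg}''(\lambda^*_\text{reg})$) yields
\begin{equation*}
\frac{d^2 e_\text{sd}}{d\lambda^2}\bigg|_{\lambda^*_\text{reg}}=e_\text{reg}''(\lambda^*_\text{reg})-\frac{2\big(N'(\lambda^*_\text{reg})\big)^2}{D(\lambda^*_\text{reg})}=e_\text{reg}''\Big(1-\frac{(\lambda^*_\text{reg})^2\,e_\text{reg}''}{2D}\Big).
\end{equation*}
Assuming $\lambda^*_\text{reg}$ is a non-degenerate interior minimum so that $e_\text{reg}''(\lambda^*_\text{reg})>0$, the second derivative is negative exactly when $(\lambda^*_\text{reg})^2 e_\text{reg}''>2D$. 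Expanding both sides in the SVD basis and cancelling the common factor $4(\lambda^*_\text{reg})^2$, this inequality collapses to the compact condition $\sum_{j=1}^r \tfrac{\sigma_j^2(\gamma^2-\theta_j^*\lambda^*_\text{reg})}{(\sigma_j^2+\lambda^*_\text{reg})^4}>0$.

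The final and most delicate step is to recast this into the $\gamma^2$-free pairwise form of \cref{eq-11-jan6}, using the first-order condition $\sum_j \tfrac{\sigma_j^2(\gamma^2-\theta_j^*\lambda^*_\text{reg})}{(\sigma_j^2+\lambda^*_\text{reg})^3}=0$ to eliminate $\gamma^2$. Writing $p_j:=\sigma_j^2/(\sigma_j^2+\lambda^*_\text{reg})^3$, $g_j:=1/(\sigma_j^2+\lambda^*_\text{reg})$ and $\delta_j:=\gamma^2-\theta_j^*\lambda^*_\text{reg}$, optimality reads $\sum_j p_j\delta_j=0$ and the target inequality is $\sum_j p_jg_j\delta_j>0$. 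The covariance-type pairing identity $\sum_{j,k}p_jp_k(g_j-g_k)(\delta_j-\delta_k)=2\big(\sum_k p_k\big)\big(\sum_j p_jg_j\delta_j\big)$ (the two cross terms die because $\sum_j p_j\delta_j=0$) converts this to $\sum_{j<k}p_jp_k(g_j-g_k)(\delta_j-\delta_k)>0$; substituting $g_j-g_k=\tfrac{\sigma_k^2-\sigma_j^2}{(\sigma_j^2+\lambda^*_\text{reg})(\sigma_k^2+\lambda^*_\text{reg})}$ and $\delta_j-\delta_k=\lambda^*_\text{reg}(\theta_k^*-\theta_j^*)$ and dividing by $\lambda^*_\text{reg}>0$ produces exactly \cref{eq-11-jan6}. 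Once $\lambda^*_\text{reg}$ is a strict local maximum of $e_\text{sd}$, there exist nearby $\lambda$ with $e_\text{sd}(\lambda)<e_\text{sd}(\lambda^*_\text{reg})=e_\text{reg}(\lambda^*_\text{reg})=\min_\lambda e_\text{reg}(\lambda)$, so $\min_\lambda e_\text{sd}<\min_\lambda e_\text{reg}$, which is the claimed superiority of optimal SD over optimal $\ell_2$ regularization. I expect the main obstacle to be precisely this last reduction: keeping the SVD-basis bookkeeping straight and recognising the weighted pairing identity that simultaneously eliminates $\gamma^2$ and generates the antisymmetric $(\sigma_j^2-\sigma_k^2)(\theta_k^*-\theta_j^*)$ structure.
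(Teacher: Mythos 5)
Your proof is correct and, at its core, follows the same route as the paper's (\Cref{thm-opt-lambda-2} in \Cref{pf-opt-lambda}): both hinge on the identity that the linear-in-$\xi$ coefficient of $e(\lambda,\xi)$ at $\xi=0$ is proportional to $e_\textup{reg}'(\lambda)$ --- your $\partial_\xi e(\lambda,0)=\lambda\,e_\textup{reg}'(\lambda)$ is exactly the paper's $g(\lambda)=\tfrac12 e_\textup{reg}'(\lambda)$ --- which yields $\xi^{*}(\lambda^{*}_\text{reg})=0$, the value and stationarity claims, and the identical second-derivative formula (your $D$ equals the paper's $\lambda^2 h(\lambda)/4$, so your criterion coincides with $e_\textup{sd}''(\lambda^{*}_\text{reg})=e_\textup{reg}''(1-2e_\textup{reg}''/h)$). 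Where you deviate is execution, in two places, and both deviations are sound. First, you obtain the derivatives of $e_\textup{sd}$ via the envelope theorem, whereas the paper first proves the closed form $e_\textup{sd}(\lambda)=e_\textup{reg}(\lambda)-(e_\textup{reg}'(\lambda))^2/h(\lambda)$ (\Cref{prop-sd-err}) and differentiates it; your route is leaner for this theorem alone, but the paper's explicit formula is reused to characterize the other stationary points and to prove \Cref{thm-ex}, which the envelope argument does not provide. Second, to pass from $\sum_j \sigma_j^2(\gamma^2-\lambda^{*}_\text{reg}\theta_j^{*})/(\sigma_j^2+\lambda^{*}_\text{reg})^4>0$ to \cref{eq-11-jan6}, you pair the sum against the first-order condition $\sum_j p_j\delta_j=0$ through a covariance-type identity, while the paper substitutes the first-order condition in ratio form to eliminate $\gamma^2$ and then antisymmetrizes over index pairs --- the same manipulation in different packaging. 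Two small points you should make explicit in a polished write-up: since $g_j-g_k$ carries the factor $(\sigma_k^2-\sigma_j^2)$, your pairwise inequality ``$>0$'' becomes the theorem's ``$<0$'' only after absorbing that sign flip into $(\sigma_j^2-\sigma_k^2)$; and your division by $\lambda^{*}_\text{reg}$ and use of $e_\textup{reg}''(\lambda^{*}_\text{reg})>0$ require $\lambda^{*}_\text{reg}>0$ and a nondegenerate interior minimum, assumptions the paper also makes.
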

{\noindent The detailed version and proof of \Cref{thm-opt-lambda} appear in \Cref{pf-opt-lambda}.

One case when \cref{eq-11-jan6} holds is $\theta_1^{*} > \ldots > \theta_r^{*}$ (since $\sigma_1 \geq \ldots \geq \sigma_r$). In general, when the squared projections of $\bm{\theta}^{*}$ along the most significant left singular vectors of $\bm{X}$ (i.e., the ones with \enquote{large} singular values) follow the same ordering as the corresponding singular values and the noise variance is large enough, $\lambda^{*}_\text{reg}$ will be a local maximum point of $e_\textup{sd}(\lambda)$. We formalize this next.
\begin{theorem}
\label{cond-simpler}
Without loss of generality, let $\|\bm{\theta}^{*}\| = 1$ and $\sigma_1 = 1$. Further, suppose $\sigma_j \leq \delta$ for $j \in \{q+1,\ldots,r\}$ and $\theta_1^{*} > \ldots > \theta_{q}^{*}$. Then, $\lambda = \lambda^{*}_\text{reg}$ is a \textbf{local maximum} point of $e_\textup{sd}(\lambda)$ when $\delta \leq \mathcal{O}(\frac{1}{r})$ and $\gamma^2 \geq \frac{\max_{j \in \{1,\ldots,r\}} \theta_{j}^{*}}{r-1}$.
\end{theorem}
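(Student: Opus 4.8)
The plan is to verify the local-maximum criterion \cref{eq-11-jan6} of \Cref{thm-opt-lambda}: it suffices to show that the double sum appearing there, call it $S$, is strictly negative under the stated hypotheses. The first step is a lower bound on $\lambda^{*}_{\textup{reg}}$ extracted from the $\gamma^2$ assumption. Since $\xi=0$ is pure regularization, substituting $\xi=0$ into \cref{eq:bias} and \cref{eq:var} gives $e_{\textup{reg}}(\lambda)=\sum_{j=1}^r\theta_j^{*}\tfrac{\lambda^2}{(\sigma_j^2+\lambda)^2}+\gamma^2\sum_{j=1}^r\tfrac{\sigma_j^2}{(\sigma_j^2+\lambda)^2}$ up to an additive constant; differentiating, the stationarity condition defining $\lambda^{*}_{\textup{reg}}$ is $\sum_{j=1}^r\tfrac{\sigma_j^2(\theta_j^{*}\lambda-\gamma^2)}{(\sigma_j^2+\lambda)^3}=0$. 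Every prefactor $\sigma_j^2/(\sigma_j^2+\lambda)^3$ is positive, so this sum can vanish only if $\theta_j^{*}\lambda^{*}_{\textup{reg}}\ge\gamma^2$ for at least one $j$, which forces $\lambda^{*}_{\textup{reg}}\ge \gamma^2/\max_j\theta_j^{*}$. Combined with $\gamma^2\ge \max_j\theta_j^{*}/(r-1)$ this yields the key bound $\lambda^{*}_{\textup{reg}}\ge 1/(r-1)$.

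With $\lambda:=\lambda^{*}_{\textup{reg}}$ fixed, I would write each summand of \cref{eq-11-jan6} as $T_{jk}=p_jp_k(\sigma_j^2-\sigma_k^2)(\theta_k^{*}-\theta_j^{*})$, where $p_j:=\sigma_j^2/(\lambda+\sigma_j^2)^4>0$, so that $S=\sum_{j<k}T_{jk}$. (Symmetrizing yields $S=-\big(\sum_j p_j\big)^2\,\mathrm{Cov}_p(\sigma^2,\theta^{*})$, so the claim is exactly that $\sigma_j^2$ and $\theta_j^{*}$ are positively correlated under the weights $p_j$; I will argue this directly on the sum.) Partition the indices into $G=\{1,\dots,q\}$ and $B=\{q+1,\dots,r\}$. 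For a pair $j<k$ with both indices in $G$, the orderings $\theta_1^{*}>\dots>\theta_q^{*}$ and $\sigma_1\ge\dots\ge\sigma_q$ make $\sigma_j^2-\sigma_k^2\ge 0$ and $\theta_k^{*}-\theta_j^{*}\le 0$, hence $T_{jk}\le 0$; these concordant terms supply the strictly negative contribution $S_G$ that the remaining terms must not overturn.

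The remaining terms are those with at least one index in $B$, and these are the ones to be absorbed by $\delta$. Whenever an index lies in $B$ its singular value obeys $\sigma^2\le\delta^2$, and this factor appears in the numerator of the corresponding $T_{jk}$. After normalizing out the common denominator scale (equivalently, working with the rescaled weights $\tilde p_j:=\sigma_j^2/(1+\sigma_j^2/\lambda)^4\le\sigma_j^2$, which leaves the sign of $S$ unchanged), each such term is $O(\delta^2)$, while the bound $\lambda^{*}_{\textup{reg}}\ge 1/(r-1)$ keeps the reference weight $p_1$ from collapsing (it gives $\tilde p_1\ge r^{-4}$). Since there are at most $\binom{r}{2}$ terms touching $B$, their aggregate is $O(r^2\delta^2)$; choosing the hidden constant in $\delta\le\mathcal{O}(1/r)$ small enough makes this aggregate strictly smaller than $|S_G|$, so that $S<0$ and \Cref{thm-opt-lambda} certifies $\lambda^{*}_{\textup{reg}}$ to be a local maximum of $e_{\textup{sd}}$.

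The hard part is this last balancing step rather than the sign bookkeeping. One must produce a genuine lower bound on the concordant block $|S_G|$ and calibrate the constant hidden in $\delta\le\mathcal{O}(1/r)$ against it. This is delicate because the hypotheses pin down only $\sigma_1=1$ and the ordering of $\theta_1^{*},\dots,\theta_q^{*}$, leaving the interior singular values and weights $p_2,\dots,p_q$ free; consequently $|S_G|$ depends on the interior spectral gaps, and the role of $\lambda^{*}_{\textup{reg}}\ge 1/(r-1)$ is precisely to prevent the weights from degenerating so that the accumulated $O(r^2\delta^2)$ error can be dominated. Making this trade-off quantitative is the crux of the argument.
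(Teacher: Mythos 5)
Your outline reproduces the skeleton of the paper's proof (\Cref{app=cond-simpler}) step for step: the same criterion \cref{eq-11-jan6} from \Cref{thm-opt-lambda}, the same bound $\lambda^{*}_\text{reg} \geq \gamma^2/\max_j \theta_j^{*}$ read off from the stationarity equation \cref{eq:37-jan9} (hence $\lambda^{*}_\text{reg} \geq 1/(r-1)$ under the $\gamma^2$ hypothesis, which is exactly the paper's choice $\nu = r$ in \Cref{cond-simpler-full}), the same partition into the ordered block $\{1,\dots,q\}$ and the pairs touching $\{q+1,\dots,r\}$, and essentially the same $\mathcal{O}(r^2\delta^2)$-type control of the latter (modulo how the denominators are normalized). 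But the step you explicitly defer --- a quantitative lower bound on the concordant block $|S_G|$ --- cannot be waved at with ``choose the hidden constant small enough''; it is the content of the theorem. As you yourself observe, the hypotheses fix only $\sigma_1 = 1$ and the ordering $\theta_1^{*} > \dots > \theta_q^{*}$, so $|S_G|$ can be arbitrarily small; indeed it is exactly zero if $\sigma_1 = \dots = \sigma_q$, since every factor $\sigma_j^2 - \sigma_k^2$ in the block vanishes, and then no positive $\delta$, however small, makes the sum negative. Hence there is no absolute constant for which your final inequality holds: the constant in $\delta \leq \mathcal{O}(1/r)$ must degenerate with the data, and a complete proof has to exhibit it. This is a genuine gap, not an omitted computation.

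For comparison, here is how the paper closes it. In \Cref{cond-simpler-full}, the negative block is lower-bounded in magnitude by discarding all concordant pairs except those anchored at index $1$ --- the only index whose singular value the hypotheses actually pin down --- which gives $\sum_{k \leq q} v_k \leq -q \min_{k}\big(\sigma_k^2(1-\sigma_k^2)(\theta_1^{*}-\theta_k^{*})\big)\big/(\lambda^{*}_\text{reg}+1)^8$ as in \cref{eq:jan21-55}; the admissible $\delta$ is then \emph{defined} through this very quantity, $\delta \leq \frac{1}{\sqrt{2\nu r}}\sqrt{\min_k \big(\sigma_k^2(1-\sigma_k^2)(\theta_1^{*}-\theta_k^{*})\big)}$, i.e., the ``hidden constant'' is explicitly data-dependent. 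The two blocks are then compared through the ratio $\big((\lambda^{*}_\text{reg}+1)/\lambda^{*}_\text{reg}\big)^8$ of their denominators, and this is where the lower bound on $\lambda^{*}_\text{reg}$ coming from the $\gamma^2$ hypothesis is actually consumed --- not merely, as in your sketch, to keep the single weight $\tilde p_1$ from collapsing. Your covariance identity and the sign bookkeeping are correct and consistent with the paper, but without anchoring $|S_G|$ on the $(1,k)$ pairs (or some equivalent explicit lower bound) and calibrating $\delta$ against it, the balancing step that you correctly identify as the crux remains unproven, so the proposal does not establish the theorem.
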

\noindent The detailed statement and proof of \Cref{cond-simpler} appear in \Cref{app=cond-simpler}. 
In practice, $\bm{X}$ is usually low rank and only a few of its singular values are large. So, the assumption of \Cref{cond-simpler} is realistic and that too with $q \ll r$. 

{To the best of our knowledge, \textbf{there are no results} comparable to Theorems \ref{thm-opt-lambda} and \ref{cond-simpler} quantifying when \textbf{optimal} SD is better than \textbf{optimal} $\ell_2$ regularization.}
Now we consider a synthetic example to verify the previous discussion. Suppose $\bm{\theta}^{*} = \frac{1}{\sqrt{2}}\big(\bm{u}_1 + \bm{u}_2\big)$, $n > d = 100$ and $\sigma_j = \frac{1}{j}$ for $j \in \{1,\ldots,d\}$ (so only few singular values are large). Note that \cref{eq-11-jan6} is satisfied. We consider 3 values of $\gamma = \{0.125, 0.25, 0.5\}$ \& 10 values of $\lambda = \{2^{i-3} \gamma^2\}$ with $i \in \{1,\ldots,10\}$. In \Cref{fig:0}, we plot $e_\textup{reg}(\lambda)$ and $e_\textup{sd}(\lambda)$ for these values of $\gamma$ and $\lambda$; see the figure caption for discussion.

\begin{figure*}[!htb]
\centering 
\begin{subfigure}[b]{0.48\textwidth}
 \centering
 \includegraphics[width=\textwidth]{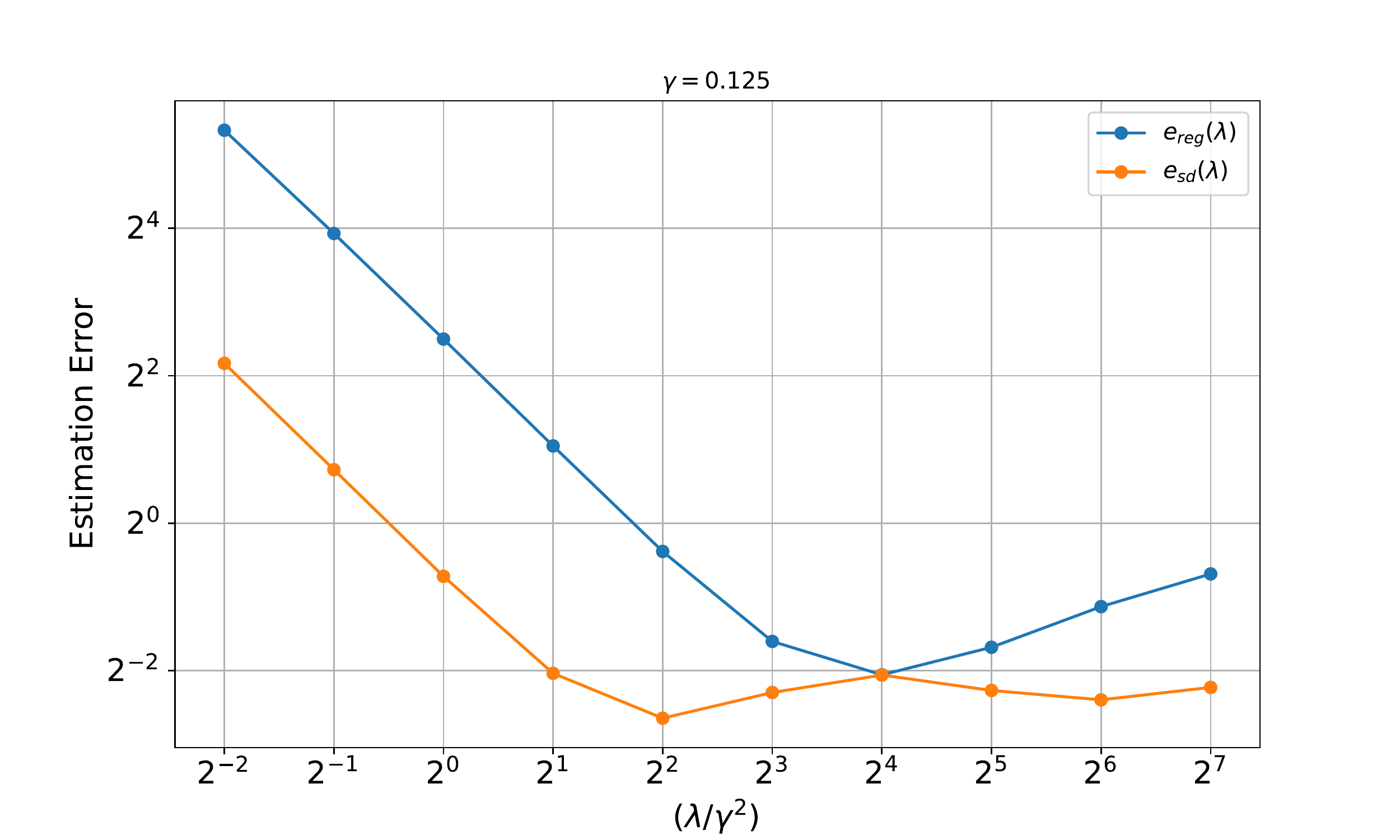}
 \caption{$\gamma=0.125$}
 \label{fig:gamma-0.125}
\end{subfigure}
\begin{subfigure}[b]{0.48\textwidth}
 \centering
 \includegraphics[width=\textwidth]{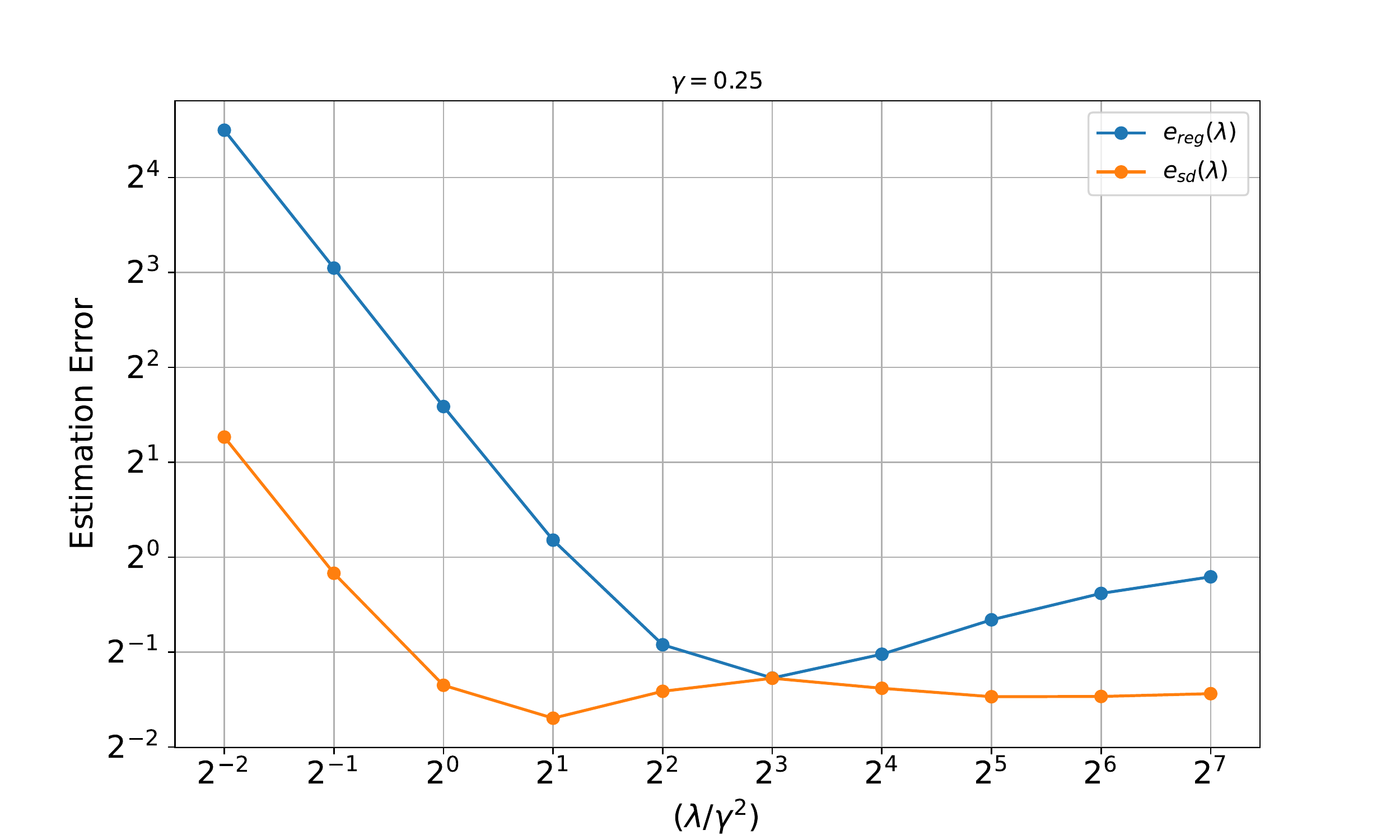}
 \caption{$\gamma=0.25$}
 \label{fig:gamma-0.25}
\end{subfigure}
\begin{subfigure}[b]{0.48\textwidth}
 \centering
 \includegraphics[width=\textwidth]{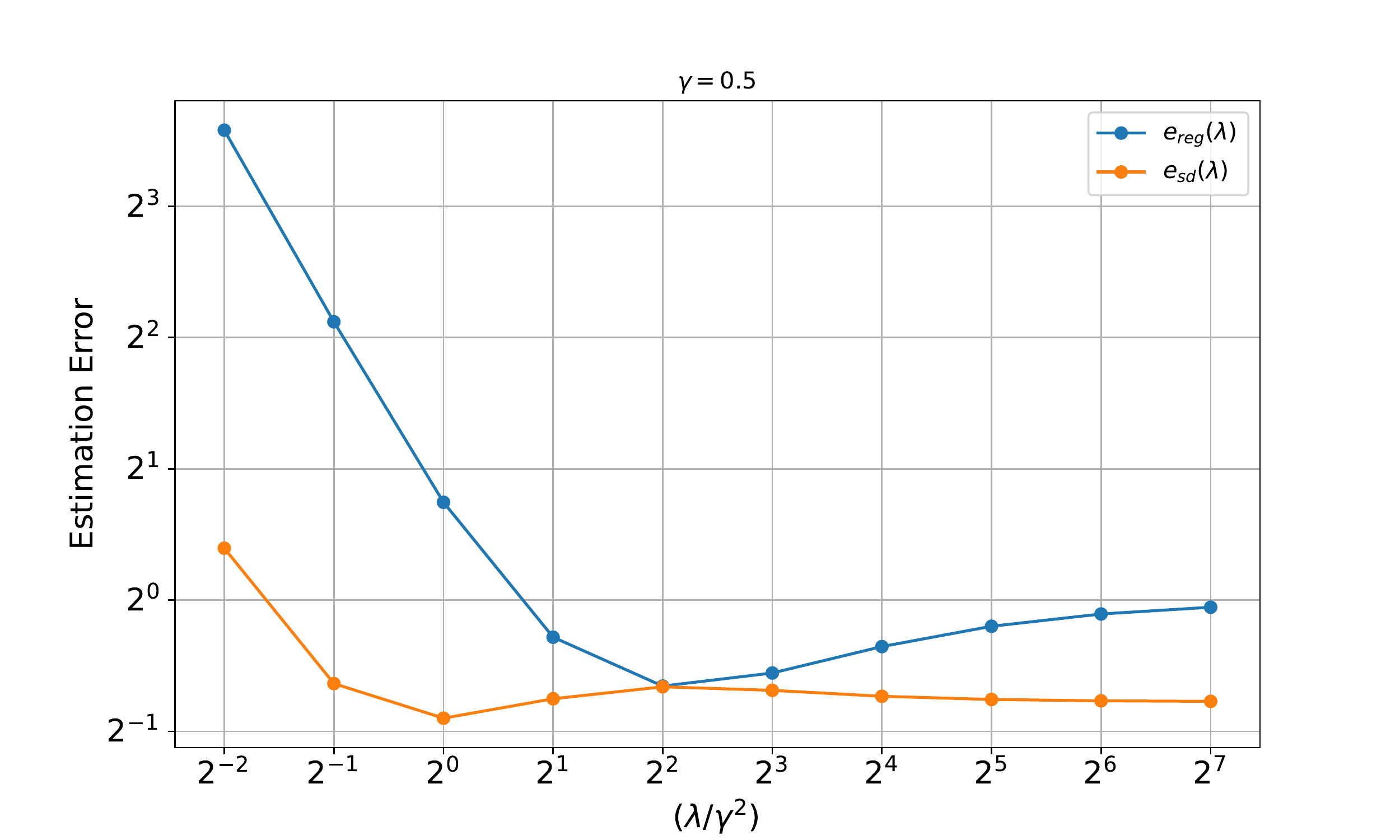}
 \caption{$\gamma=0.5$}
 \label{fig:gamma-0.5}
\end{subfigure}
\caption{Estimation errors of vanilla $\ell_2$ regularization $e_\textup{reg}(\lambda)$ and SD $e_\textup{sd}(\lambda)$ vs. $\lambda$ for the synthetic example at the end of \Cref{lin-reg}. {As per \Cref{thm-opt-lambda}, note that the global minimum of $e_\textup{reg}(\lambda)$ is a local maximum of $e_\textup{sd}(\lambda)$. Observe that $\min_{\lambda} e_\text{sd}(\lambda) < \min_{\lambda} e_\text{reg}(\lambda)$. So, optimal SD does better than optimal $\ell_2$-regularization here.}}
\label{fig:0}
\end{figure*}

If $e_\textup{sd}(\lambda)$ does not have a {local maximum} at $\lambda^{*}_\text{reg}$, it is difficult to say whether $\lambda^{*}_\text{reg}$ is a sub-optimal \textit{local} minimum point or the \textit{global} minimum point of $e_\textup{sd}(\lambda)$; also see \Cref{pf-opt-lambda}. If $\lambda^{*}_\text{reg}$ is the global minimum point of $e_\textup{sd}(\lambda)$, then optimal SD is \textbf{not} better than (i.e., does not yield any improvement over) optimal regularization because $e_\textup{sd}(\lambda^{*}_\text{reg}) = e_\textup{reg}(\lambda^{*}_\text{reg})$.} {To complement this, we present the following result (proved in \Cref{thm-ex-pf}).
\begin{theorem}
    \label{thm-ex}
    There exists $\bm{\theta}^{*}$ and $\bm{X}$ s.t. for any noise variance $\gamma^2$, $\lambda^{*}_\text{reg}$ is the \textbf{global minimum} point of $e_\textup{sd}(\lambda)$.
\end{theorem}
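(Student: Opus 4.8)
The plan is to exhibit an explicit rank-one instance for which $e_\textup{sd}(\lambda)$ turns out to be \emph{constant} in $\lambda$. Since Theorem \ref{thm-opt-lambda} already guarantees $e_\textup{sd}(\lambda^{*}_\text{reg}) = e_\textup{reg}(\lambda^{*}_\text{reg})$ and $\lambda^{*}_\text{reg}$ is a stationary point of $e_\textup{sd}$, a flat $e_\textup{sd}$ makes $\lambda^{*}_\text{reg}$ automatically a global minimizer. Concretely, I would take $\bm{X}$ of rank $r=1$ (e.g.\ $d=1$, or $\bm{X}=\sigma_1\bm{u}_1\bm{v}_1^T$) and choose $\bm{\theta}^{*}$ in the column space of $\bm{X}$, i.e.\ $\bm{\theta}^{*}\parallel \bm{u}_1$, so that the null-space term $\sum_{j=r+1}^d(\langle\bm{\theta}^{*},\bm{u}_j\rangle)^2$ in \cref{eq:bias} vanishes (keeping it nonzero only adds a $\lambda$- and $\xi$-independent constant and changes nothing).

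With $r=1$, \Cref{thm-est-err} collapses to a single index $j=1$. Writing $c := c_1 = \lambda/\sigma_1^2$ (a strictly increasing reparametrization of $\lambda$) and $\widetilde{\gamma}^2 := \gamma^2/\sigma_1^2$, and combining \cref{eq:bias} and \cref{eq:var} via \cref{eq:13-jan2}, I would simplify to the clean form $e(\lambda,\xi)\,(1+c)^4 = \theta_1^{*} c^2 (1+c+\xi)^2 + \widetilde{\gamma}^2 (1+c - c\,\xi)^2$, where $\theta_1^{*} = (\langle\bm{\theta}^{*},\bm{u}_1\rangle)^2 = \|\bm{\theta}^{*}\|^2$. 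The right-hand side is a quadratic in $\xi$.

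Next I would minimize over $\xi$ in closed form (the optimizer is exactly $\xi^{*}$ from \Cref{opt-xi} specialized to $r=1$). Using the identity that the minimum of $A\xi^2+B\xi+C$ equals $C - B^2/(4A)$, the cross terms conspire so that the $(1+c)^4$ factor cancels and one is left with $e_\textup{sd}(\lambda) = \dfrac{\theta_1^{*}\,\widetilde{\gamma}^2}{\theta_1^{*}+\widetilde{\gamma}^2}$, which is independent of $c$, hence of $\lambda$. Since $e_\textup{sd}$ is constant in $\lambda$, every $\lambda>0$—and in particular $\lambda^{*}_\text{reg}$, which for this instance is the unique minimizer $\lambda^{*}_\text{reg}=\gamma^2/\theta_1^{*}$ of $e_\textup{reg}$ (a one-line differentiation of $e_\textup{reg}(\lambda)=(\theta_1^{*}c^2+\widetilde{\gamma}^2)/(1+c)^2$ confirms this)—is a global minimum point of $e_\textup{sd}$, for every noise level $\gamma^2>0$. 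This establishes the claim.

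I do not expect a genuine obstacle: the entire difficulty is in \emph{choosing} the instance so that the $\lambda$-dependence cancels, which is precisely the regime in which the double sum in \cref{eq-11-jan6} is \emph{empty} (no pairs $j<k\le r$ exist when $r=1$), the borderline opposite of the local-maximum condition. The only mild care needed is the algebraic simplification exhibiting the cancellation; as a sanity check one verifies the resulting constant equals $e_\textup{reg}(\lambda^{*}_\text{reg})$, consistent with \Cref{thm-opt-lambda}. I would also note that the same cancellation occurs more generally whenever all nonzero singular values of $\bm{X}$ are equal ($\sigma_1=\cdots=\sigma_r$), yielding a whole family of such examples, though the rank-one case already suffices.
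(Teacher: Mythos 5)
Your proposal is correct, and it proves the theorem by a genuinely different construction than the paper. The paper's proof (\Cref{thm-ex-pf}) takes a \emph{two}-dimensional example with distinct singular values and equal projections --- $\bm{\theta}^{*} = \frac{1}{\sqrt{2}}(\bm{u}_1+\bm{u}_2)$, $\sigma_1 = 1$, $\sigma_2 = \frac{1}{2}$ --- computes $\lambda^{*}_\text{reg} = 2\gamma^2$, and then shows after heavy algebra that $e_\textup{sd}'(\lambda) \propto (\lambda - 2\gamma^2)^3$ times positive factors, so that $e_\textup{sd}$ is strictly decreasing before and strictly increasing after $\lambda^{*}_\text{reg}$, making it the \emph{unique} global minimizer. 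You instead take $r=1$ with $\bm{\theta}^{*} \parallel \bm{u}_1$, and your algebra is right: $e(\lambda,\xi)(1+c)^4 = \theta_1^{*}c^2(1+c+\xi)^2 + \widetilde{\gamma}^2(1+c-c\xi)^2$, whose minimum over $\xi$ is $\frac{\theta_1^{*}\widetilde{\gamma}^2}{\theta_1^{*}+\widetilde{\gamma}^2}$, independent of $\lambda$ (this also agrees with \Cref{prop-sd-err} specialized to $r=1$, and your equal-singular-value generalization checks out as well). Your route is more elementary --- the cancellation is a three-line computation rather than the paper's \enquote{algebraic heavy lifting} --- and it isolates the structural reason the local-maximum condition \cref{eq-11-jan6} cannot trigger: the double sum is empty (or vanishes termwise). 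What it gives up is strictness: in your instance $e_\textup{sd}$ is constant, so $\lambda^{*}_\text{reg}$ is \emph{a} global minimizer but every other $\lambda$ is too, whereas the paper's example exhibits $\lambda^{*}_\text{reg}$ as the unique global minimum. If one reads \enquote{the global minimum point} in \Cref{thm-ex} as asserting uniqueness, your example falls just short of that reading; but it fully delivers the conclusion the theorem is used for, namely that $\min_\lambda e_\textup{sd}(\lambda) = \min_\lambda e_\textup{reg}(\lambda)$, i.e., there are cases where optimal SD yields no improvement over optimal $\ell_2$ regularization.
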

\noindent So there are cases when optimal SD does not yield any improvement over optimal regularization.
}

\section{Logistic Regression}
\label{sec:log_reg}
We now move onto logistic regression with the cross-entropy loss. Note that linear probing \cite{alain2016understanding,kumar2022fine} is the same as logistic regression with features obtained from a pre-trained model. It is also worth mentioning here that our analysis for logistic regression is significantly different from and harder than linear regression.
\\
\\
\textbf{Setting:} We consider a binary classification problem where each sample $\bm{x} \in \mathcal{X}$ has a discrete label ${y}(\bm{x}) \in \{0,1\}$. Let the marginal distribution of the sample space (with support $\mathcal{X}$) be denoted by $\mathcal{P}$.
We assume that there is a feature map $\phi: \mathcal{X} \xrightarrow{} \widetilde{\mathcal{X}}$ and we have access to a sample in terms of its features. We are given $2n$ pairs of data points in terms of features and \textbf{corrupted} labels $\{({\phi(\bm{x}_i)}, \hat{y}_i)\}_{i=1}^{2n}$, where each $\hat{y}_i \in \{0,1\}$ and $\bm{x}_i \underset{\text{iid}}{\sim} \mathcal{P}$. Let the corresponding \textbf{actual} labels be $\{y_i\}_{i=1}^{2n}$; we assume that the dataset is balanced, i.e., $|i: y_i=1| = |i: y_i=0| = n$. Specifically, without loss of generality (w.l.o.g.), let $y_i = 1$ for $i \in \mathcal{S}_1 := \{1,\ldots,n\}$ and $y_i = 0$ for $i \in \mathcal{S}_0 := \{n+1,\ldots,2n\}$; our training algorithms are not privy to this. We consider the following corruption model: $\hat{n} < n/2$ samples of \textit{each} class, chosen \textit{randomly}, are provided to us with flipped labels (again, our training algorithms are not privy to this). Specifically, w.l.o.g., let: 
\begin{equation*}
    \hat{y}_i = 
    \begin{cases}
    1 - {y}_i \text{ for } i \in \underbrace{\{1,\ldots,\hat{n}\}}_{:=\mathcal{S}_{1,\text{bad}}} \cup \underbrace{\{n+1,\ldots,n+\hat{n}\}}_{:=\mathcal{S}_{0,\text{bad}}},
    \\
    {y}_i \text{ for } i \in \underbrace{\{\hat{n}+1,\ldots,n\}}_{:=\mathcal{S}_{1,\text{good}}} \cup \underbrace{\{n+\hat{n}+1,\ldots,2n\}}_{:=\mathcal{S}_{0,\text{good}}}.
    \end{cases}
\end{equation*}
Define $p := \frac{\hat{n}}{n}$ as the \textbf{label corruption fraction}; note that $p < \frac{1}{2}$. 

Our goal is to learn a separator for the data \textit{w.r.t. the actual labels} by training a logistic regression model on $\{({\phi(\bm{x}_i)}, \hat{y}_i)\}_{i=1}^{2n}$. Specifically, for a sample $\bm{x}$ with feature $\phi(\bm{x}) \in \widetilde{\mathcal{X}}$, the prediction for the label $y(\bm{x})$ is modeled as:
\begin{equation}
    \mathbb{P}(y(\bm{x}) = 1) = \sigma(\langle {\bm{\theta}}, \phi(\bm{x}) \rangle),\footnote{The bias term can be absorbed within the feature vector $\phi(.)$ itself.}
\end{equation}
where $\bm{\theta} \in \widetilde{\mathcal{X}}$ is the parameter that we wish to learn, and $\sigma(z) = \frac{1}{1 + e^{-z}}$ for $z \in \mathbb{R}$ is the sigmoid function. We use the binary cross-entropy loss for training; we denote this by $\text{BCE}:[0,1] \times (0,1) \xrightarrow{} \mathbb{R}_{\geq 0}$ and it is defined as:
\begin{equation}
    \text{BCE}(q, \hat{q}) = -\big(q \log(\hat{q}) + (1-q)\log(1-\hat{q})\big).
\end{equation}
Next, we state our assumptions on the feature map $\phi(.)$.

\begin{assumption}[\textbf{Orthonormality}]
\label{a-ortho}
The features have unit norm, i.e., $\|\phi(\bm{x})\|_2 = 1$ $\forall$ $\bm{x} \in \mathcal{X}$. Further, the space of samples in feature space with labels $0$ and $1$ are orthogonal, i.e., $\langle \phi(\bm{x}), \phi(\bm{x}') \rangle = 0$ $\forall$ $\bm{x} \in \mathcal{X}, \bm{x}' \in \mathcal{X}$ with \textbf{different} labels.
\end{assumption}
\noindent \Cref{a-ortho} ensures that the data is separable and indeed there exists a separator.

\begin{assumption}[\textbf{Feature Correlation in the Training Set}]
\label{a3}
$\langle \phi(\bm{x}_i), \phi(\bm{x}_{i'}) \rangle = c \in (0, 1)$ $\forall$ $i \neq i'$ such that  $y_i = y_{i'}$.
\end{assumption}
\noindent It is true that at face value, \Cref{a3} seems strong. Instead, an assumption in expectation like $\mathbb{E}_{\bm{x}, \bm{x}'}\Big[\langle \phi(\bm{x}), \phi(\bm{x}') \rangle \Big| \bm{x} \text{ and } \bm{x}' \text{ have the same label}\Big] = c$ is more realistic; let us call this \Cref{a3}$'$ for the sake of discussion. For $n \to \infty$ and when the labels are corrupted randomly, we hypothesize that the average\footnote{This is taken over the training set.} prediction (i.e., soft score $\in (0,1)$ assigned to a particular class) of a model under \Cref{a3}$'$ is the same as that under \Cref{a3}. We provide empirical evidence to support this hypothesis in \Cref{a3-motiv}. Thus, for large $n$, we argue that \Cref{a3} is reasonable and an important case to analyze.
\\
\\
\noindent \textbf{Teacher Model:} To learn the logistic regression parameter, the teacher minimizes the $\ell_2$-regularized binary cross-entropy loss with the provided labels as its targets, i.e., the teacher's objective is:
\begin{equation}
    \label{eq:57}
    f_{\text{T}}(\bm{\theta}) = \frac{1}{2n} \sum_{i=1}^{2n} \text{BCE}\Big(\hat{y}_i, \sigma\big(\langle \bm{\theta}, \phi(\bm{x}_i) \rangle\big)\Big) + \frac{\lambda \|\bm{\theta}\|^2}{2}.
\end{equation}
In \cref{eq:57}, $\lambda > 0$ is the $\ell_2$-regularization parameter. The teacher's estimated parameter is $\bm{\theta}_{\text{T}}^{\ast} := \text{arg min}_{\bm{\theta}} f_{\text{T}}(\bm{\theta})$. The teacher's predicted \textit{soft} label for the $i^{\text{th}}$ sample is $y_i^{\textup{(T)}} := \sigma(\langle \bm{\theta}_{\text{T}}^{\ast}, \phi(\bm{x}_i) \rangle)$; these are used to train the student. 
\\
\\
\noindent \textbf{Student Model Trained Only with Teacher's Soft Labels:} Here we set the imitation parameter $\xi=1$ in \cref{eq:1-intro}. Thus, the student minimizes the $\ell_2$-regularized binary cross-entropy loss with the teacher's predicted \textit{soft} labels as its targets, i.e., the student's objective is:
\begin{equation}
    \label{eq:83}
    f_{\text{S}}(\bm{\theta}) = \frac{1}{2n} \sum_{i=1}^{2n} \text{BCE}\Big(y_i^{\text{(T)}}, \sigma\big(\langle \bm{\theta}, \phi(\bm{x}_i) \rangle\big)\Big) + \frac{\lambda \|\bm{\theta}\|^2}{2}.
\end{equation}
In \cref{eq:83}, $\lambda$ is the same $\ell_2$-regularization parameter that is used by the teacher. The student's estimated parameter is $\bm{\theta}_{\text{S}}^{\ast} := \text{arg min}_{\bm{\theta}} f_{\text{S}}(\bm{\theta})$. 

\subsection{Comparison of Student and Teacher}
\label{comp}
We shall now characterize the conditions under which the student outperforms the teacher w.r.t. classification accuracy; to our knowledge, this is the \textbf{first result} of its kind. For the sake of avoiding any ambiguity, the teacher's population accuracy is defined as $100*\mathbb{E}_{\bm{x} \sim \mathcal{P}}\Big[\mathds{1}\Big(y(\bm{x}) = \mathds{1}\Big(\sigma(\langle \bm{\theta}_{\text{T}}^{\ast}, \phi(\bm{x}) \rangle) > \frac{1}{2}\Big)\Big)\Big]$\%\footnote{$\mathds{1}(.)$ is the indicator function. Specifically, $\mathds{1}(z) = 1$ if $z$ is true and 0 if $z$ is false.}. The student's accuracy is defined similarly with $\bm{\theta}_{\text{S}}^{\ast}$ replacing $\bm{\theta}_{\text{T}}^{\ast}$.
\begin{theorem}[\textbf{When is Student's Accuracy > Teacher's Accuracy?}]
\label{thm1}
Suppose we have access to the population, i.e., $n \to \infty$. Further, let Assumptions \ref{a-ortho} and \ref{a3} hold with $c = \Theta(1)$ in \Cref{a3} (recall that $c < 1$). Define $\hat{\lambda} := 2 n  \lambda$ and $r := \frac{(1-c)}{4 \hat{\lambda}}$. Suppose $\lambda$ is chosen so that $\hat{\lambda} \in \Big[\frac{1-c}{2.16}, \frac{1-c}{0.40}\Big]$, which corresponds to $r \in [0.10,0.54]$. If the label corruption fraction 
\begin{equation*}
    p \in \Bigg(\max \Big(\frac{1.08-r}{2.08}, \frac{1+r}{3.7}\Big), 1 -  \frac{0.51 (1+r)^2}{1 + 2r} \Bigg),
\end{equation*}
then the student achieves 100\% population accuracy (w.r.t. the true labels), while the teacher only achieves a population accuracy of 100(1-p)\% (again, w.r.t. the true labels). 
\end{theorem}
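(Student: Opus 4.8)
The plan is to reduce the two high-dimensional optimization problems \cref{eq:57} and \cref{eq:83} to small scalar fixed-point systems by exploiting the feature geometry, solve them in the population limit, and then compare the signs of the resulting scores. First I would write the stationarity condition for the teacher: since the gradient of $\mathrm{BCE}(\hat y,\sigma(\langle\bm\theta,\phi\rangle))$ in $\bm\theta$ is $(\sigma(\langle\bm\theta,\phi\rangle)-\hat y)\phi$, the minimizer of \cref{eq:57} lies in the span of the features, $\bm\theta_T^\ast=\frac{1}{\hat\lambda}\sum_{i=1}^{2n}\big(\hat y_i-\sigma(\langle\bm\theta_T^\ast,\phi(\bm x_i)\rangle)\big)\phi(\bm x_i)$ with $\hat\lambda=2n\lambda$, and the identical representation holds for the student's $\bm\theta_S^\ast$ with each $\hat y_i$ replaced by the soft target $y_i^{(T)}=\sigma(\langle\bm\theta_T^\ast,\phi(\bm x_i)\rangle)$. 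By \Cref{a-ortho} the class-$0$ and class-$1$ feature blocks are orthogonal, and by \Cref{a3} every within-class pair of features has inner product $c$; together with the symmetry between the two classes and the exchangeability within the good and bad groups, the representer coefficients collapse to one value $\alpha_{\mathrm{good}}$ on correctly-labeled and one value $\alpha_{\mathrm{bad}}$ on flipped samples. Hence every in-sample score equals one of two numbers $z_{\mathrm{good}},z_{\mathrm{bad}}$, a fresh same-class point receives a single cross-term $t$, and I obtain $z_{\mathrm{good}}=(1-c)\alpha_{\mathrm{good}}+t$ and $z_{\mathrm{bad}}=(1-c)\alpha_{\mathrm{bad}}+t$ (with the model's own $t$).

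Next I would pass to $n\to\infty$ with $\hat\lambda$ held fixed. Because $t$ equals $c$ times the sum of the $n(1-p)$ good and $np$ bad coefficients of a class, keeping $t$ finite in the limit forces the balance equation $(1-p)\big(\tau_{\mathrm{good}}-\sigma(z_{\mathrm{good}})\big)+p\big(\tau_{\mathrm{bad}}-\sigma(z_{\mathrm{bad}})\big)=0$, where $(\tau_{\mathrm{good}},\tau_{\mathrm{bad}})=(1,0)$ for the teacher and $(\sigma(z_{\mathrm{good}}^{(T)}),\sigma(z_{\mathrm{bad}}^{(T)}))$ for the student. Together with the two representer identities this closes a $3\times3$ system in $(z_{\mathrm{good}},z_{\mathrm{bad}},t)$ for each model. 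The population accuracy with respect to the true labels is then read off from signs: a correctly-labeled class-$1$ sample is classified correctly iff its score is positive, and a flipped class-$1$ sample (whose true label is still $1$) iff its score is positive. The target is to prove that in the stated window the teacher satisfies $z_{\mathrm{good}}^{(T)}>0>z_{\mathrm{bad}}^{(T)}$ — it recovers the good majority but not the corrupted minority, giving accuracy $1-p$ — whereas the student satisfies both $z_{\mathrm{good}}^{(S)}>0$ and $z_{\mathrm{bad}}^{(S)}>0$, giving $100\%$.

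To handle the sigmoid I would substitute its first-order Maclaurin expansion $\sigma(z)\approx\frac12+\frac z4$. With $r=\frac{1-c}{4\hat\lambda}$ the teacher's linearized system solves in closed form to $t=2-4p$, $z_{\mathrm{good}}^{(T)}=\frac{2r+t}{1+r}$ and $z_{\mathrm{bad}}^{(T)}=\frac{t-2r}{1+r}$, so that $z_{\mathrm{bad}}^{(T)}<0\iff p>\frac{1-r}{2}$ while $z_{\mathrm{good}}^{(T)}$ and $t$ stay positive for every $p<\frac12$. Substituting the teacher's soft targets (using $y^{(T)}-\frac12=\frac14 z^{(T)}$) into the student's system gives, to the same order, $t^{(S)}=t$ and the key relation $z_{\mathrm{bad}}^{(S)}=z_{\mathrm{bad}}^{(T)}+\frac{4r(1-p)}{(1+r)^2}$. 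The strictly positive correction $\frac{4r(1-p)}{(1+r)^2}$ is the heart of the argument: it lifts the flipped-sample score above the teacher's, so there is a band of $p$ just below $\frac12$ on which $z_{\mathrm{bad}}^{(T)}<0<z_{\mathrm{bad}}^{(S)}$, i.e. precisely the corrupted samples the teacher gets wrong are recovered by the student, while the good and fresh scores never bind ($z_{\mathrm{bad}}^{(S)}>0\iff p<\frac12-\frac{r^2}{2(1+2r)}$, and this threshold always exceeds $\frac{1-r}{2}$).

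The hard part is that the linearization is only a proxy and the signal driving the separation, $\frac{4r(1-p)}{(1+r)^2}$, is of order $r(1-2p)$ and therefore small precisely in the regime of interest; I must control the Maclaurin remainder tightly enough that it cannot swamp this signal. Concretely, I would first use the regime constraints to show every score is $O(1)$ and in fact small (because $t=2-4p$ is small and $r\le0.54$), bound $\big|\sigma(z)-(\frac12+\frac z4)\big|\le\frac{|z|^3}{48}$ on that range, and propagate these remainders through both $3\times3$ systems to obtain two-sided \emph{intervals} for $z_{\mathrm{bad}}^{(T)}$ and $z_{\mathrm{bad}}^{(S)}$ rather than exact values. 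Requiring the upper end of the teacher's interval to stay negative and the lower end of the student's to stay positive, simultaneously and uniformly over $r\in[0.10,0.54]$, is what replaces the clean thresholds by the conservative constants $\frac{1.08-r}{2.08}$, $\frac{1+r}{3.7}$ and $1-\frac{0.51(1+r)^2}{1+2r}$ and forces the admissible $p$-window to be narrow; the final step is to verify that this window is nonempty for every such $r$. The delicate bookkeeping is keeping the error incurred in the teacher's soft labels from feeding back and destroying the $O(r(1-p))$ gain in the student's equation.
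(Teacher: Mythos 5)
Your proposal is correct and follows essentially the same route as the paper's proof: the representer/dual-coordinate collapse to two values per class under Assumptions \ref{a-ortho} and \ref{a3} (the paper's Lemmas \ref{thm-teacher} and \ref{thm-student}), the coupled sigmoid fixed-point equations, the first-order Maclaurin expansion with explicit remainder control in the $n \to \infty$ limit, and the sign analysis of the good/bad-point predictions to extract the window of $p$ — your logit-and-cross-term parametrization $(z_{\textup{good}}, z_{\textup{bad}}, t)$ with the balance equation is just a change of variables from the paper's $(\alpha, \hat{\alpha})$, and your linearized closed forms (e.g., $t = 2-4p$, the lift $\frac{4r(1-p)}{(1+r)^2}$, and the clean thresholds $\frac{1-r}{2}$ and $1 - \frac{0.5(1+r)^2}{1+2r}$) match exactly what the paper's Step 3 makes rigorous and conservative. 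One small caution: the paper handles the passage from training to population accuracy via a generalization argument rather than by assigning fresh points the cross-term score $t$ (which, taken literally as the population accounting, would give the teacher $100\%$ rather than $100(1-p)\%$), and it also exploits the \emph{sign} of the Maclaurin remainder (yielding one-sided bounds like $\zeta \in (0, 0.04)$), which is what produces the exact constants $\frac{1.08-r}{2.08}$, $\frac{1+r}{3.7}$, and $0.51$ in the statement.
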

\noindent{\textbf{Discussion:} In our setup, there exists $0< p_\text{low} < p_\text{high} < 0.5$ such that (i) when $p \leq p_\text{low}$, the teacher attains $100$\% accuracy and so there is no need for SD, (ii) when $p \in (p_\text{low}, p_\text{high})$, the student attains $100$\% accuracy while the teacher attains $100(1-p)$\% accuracy, and (iii) when $p \geq p_\text{high}$, both the teacher and student attain $100(1-p)$\% accuracy.} The range of $p$ in \Cref{thm1} $\subseteq (p_\text{low}, p_\text{high})$; our range is more conservative than the actual range because we had to impose some more restrictions on $p$ in order to control certain error terms in our analysis. 
In \Cref{fig:1}, we plot the teacher's and student's accuracies as a function of $p$ for $r=\{0.2,0.3,0.4\}$ obtained by exactly solving for $\bm{\theta}_{\text{T}}^{\ast}$ and $\bm{\theta}_{\text{S}}^{\ast}$ (through a computer). In all the cases, it can be seen that the range of $p$ where the student outperforms the teacher as per \Cref{thm1} falls within the actual range of $p$ where the student outperforms the teacher.
\\
\begin{figure*}[t]
\centering 
\subfloat[$r=0.2$. Derived bound in \Cref{thm1}: \\ $p \in (0.423,  0.475)$.]{
	\includegraphics[width=0.46\textwidth]{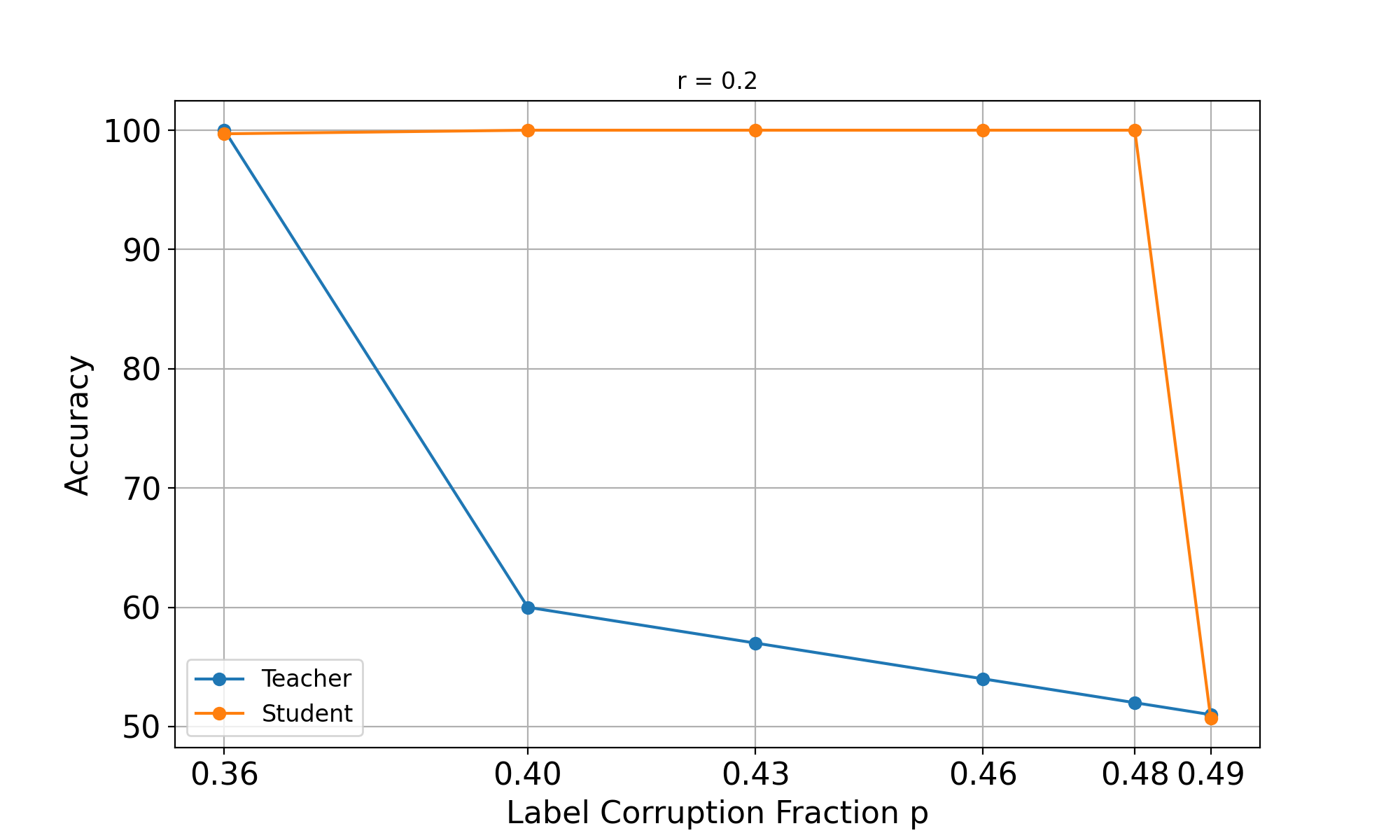}
}
\subfloat[$r=0.3$. Derived bound in \Cref{thm1}: \\ $p \in (0.375,  0.461)$.]{
	\includegraphics[width=0.46\textwidth]{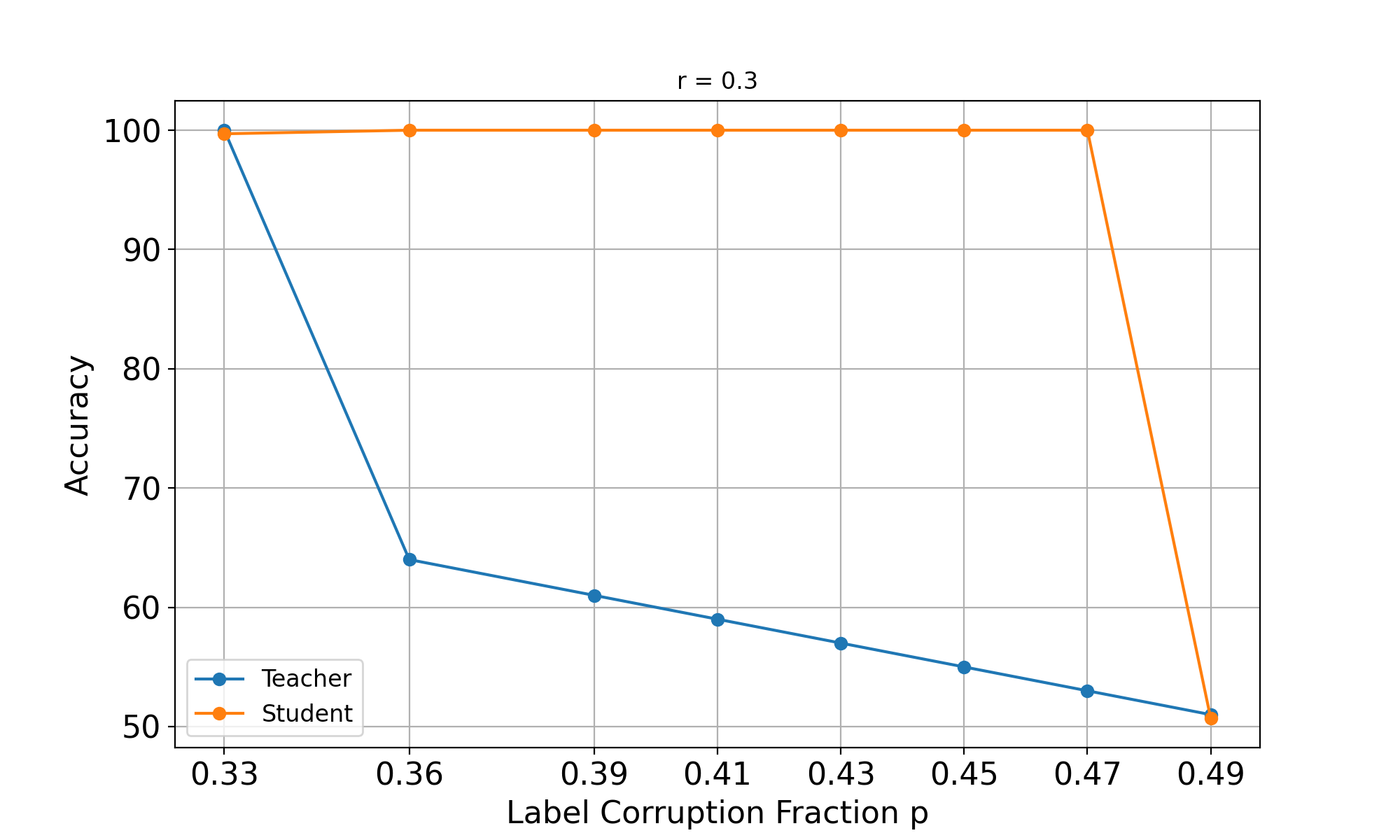}
	}
\\
\subfloat[$r=0.4$. Derived bound in \Cref{thm1}: \\ $p \in (0.378,  0.444)$.]{
	\includegraphics[width=0.46\textwidth]{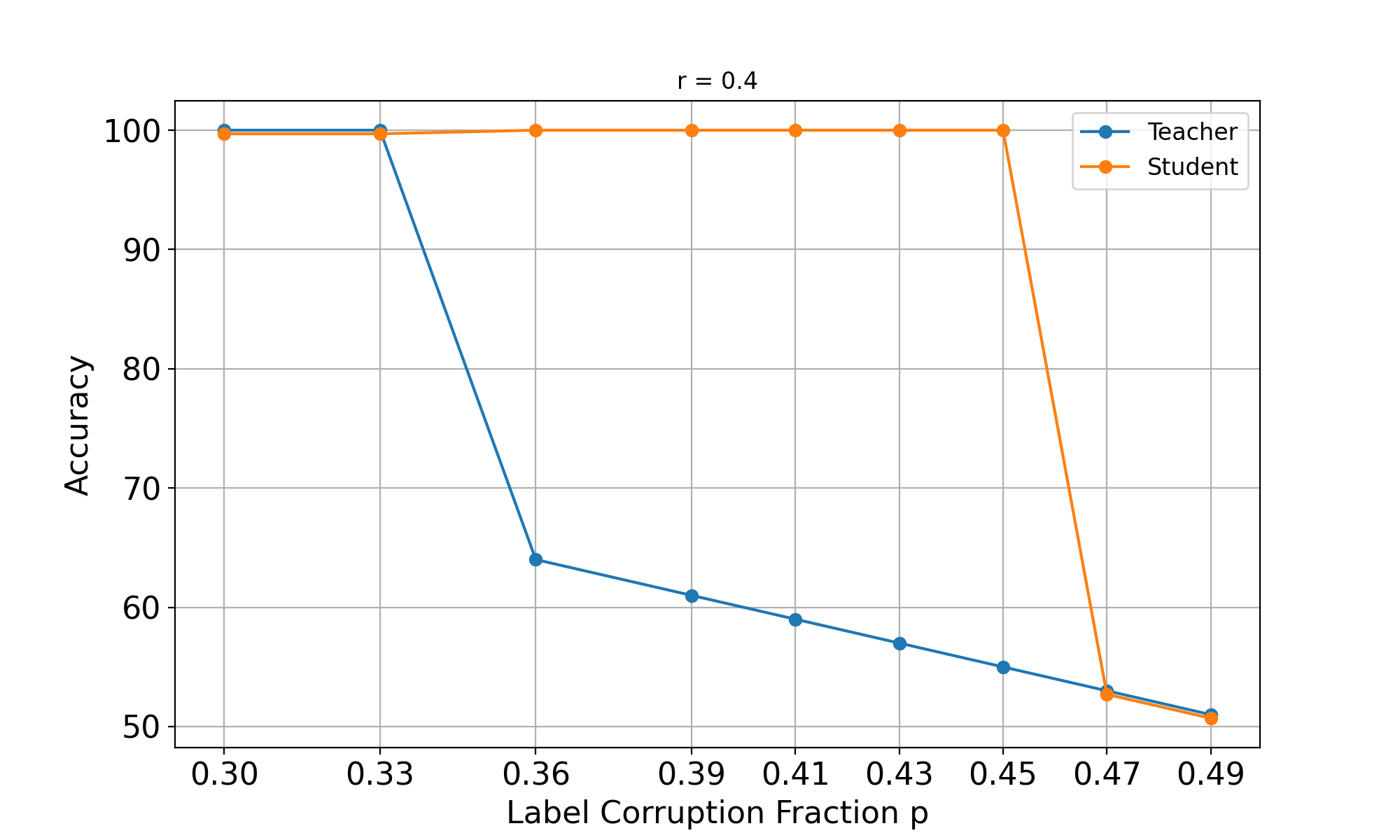}
}
\caption{Comparison of student's and teacher's accuracies for different values of label corruption fraction $p$ obtained by exactly solving \cref{eq:71-jan6} and \cref{eq:72-jan6} for the teacher and \cref{eq:87-jan6} and \cref{eq:88-jan6} for the student. We set $c = 0.1$ and $n = 5000$ here. In all the cases, note that our predicted range of $p$ where the student outperforms the teacher as per \Cref{thm1} falls within the actual range of $p$ where the student outperforms the teacher.}
\label{fig:1}
\end{figure*}

\noindent The detailed proof of \Cref{thm1} can be found in \Cref{pf-thm1}; we now outline the \textbf{key steps in the proof}.
\\
\textbf{Step 1 (Details in \Cref{step-1}).}
It can be shown that the teacher's learned parameter $\bm{\theta}_{\text{T}}^{\ast} = \text{arg min}_{\bm{\theta}} f_{\text{T}}(\bm{\theta}) = \sum_{i=1}^{2n} \alpha_i \phi(\bm{x}_i)$ for some real numbers $\{\alpha_i\}_{i=1}^{2n}$ which are known as the teacher's dual-space coordinates. In \Cref{thm-teacher}, we obtain expressions for $\{\alpha_i\}_{i=1}^{2n}$ which then enables us to obtain the teacher's predicted soft labels $\Big\{y_i^{\textup{(T)}}\Big\}_{i=1}^{2n}$. Specifically, we get:
\begin{equation}
    \label{eq:80-jan6}
    y_i^{\textup{(T)}} = 
    \begin{cases}
    \hat{\lambda} \hat{\alpha} \text{ for } i \in \mathcal{S}_{1,\text{bad}}, 
    \\
    1 - \hat{\lambda} {\alpha} \text{ for } i \in \mathcal{S}_{1,\text{good}}, 
    \\
    1 - \hat{\lambda} \hat{\alpha} \text{ for } i \in \mathcal{S}_{0,\text{bad}}, 
    \\
    \hat{\lambda} {\alpha} \text{ for } i \in \mathcal{S}_{0,\text{good}}, 
    \end{cases}
\end{equation}
where $\alpha \geq 0$ and $\hat{\alpha} \geq 0$ are obtained by jointly solving:
\begin{equation}
    \label{eq:71-jan6}
     \sigma\Big(c n \big(\alpha - (\alpha + \hat{\alpha})p) - (1-c)\hat{\alpha}\Big) = \hat{\lambda} \hat{\alpha},
\end{equation}
and
\begin{equation}
    \label{eq:72-jan6}
     \sigma\Big(c n \big(\alpha - (\alpha + \hat{\alpha})p) + (1-c){\alpha}\Big) = 1 - \hat{\lambda} {\alpha}.
\end{equation}
We focus on the interesting case of:
\\
\textbf{(a)} $p$ being large enough so that the teacher misclassifies the incorrectly labeled points ($\mathcal{S}_{1,\text{bad}} \cup \mathcal{S}_{0,\text{bad}}$) because otherwise, there is no need for SD, and
\\
\textbf{(b)} $\hat{\lambda}$ being chosen sensibly so that the teacher at least correctly classifies the correctly labeled points ($\mathcal{S}_{1,\text{good}} \cup \mathcal{S}_{0,\text{good}}$) because otherwise, SD is hopeless.
\\
\\
Later in Step 3, we impose conditions on $p$ (a lower bound) and $\hat{\lambda}$ such that \textbf{(a)} and \textbf{(b)} hold by requiring $\hat{\lambda} \hat{\alpha} < \frac{1}{2}$ and $\hat{\lambda} {\alpha} < \frac{1}{2}$.
\\
\\
\textbf{Step 2 (Details in \Cref{step-2}).} Similar to the teacher in Step 1, in \Cref{thm-student}, we show that the student's predicted soft label for the $i^{\text{th}}$ sample, $y_i^{\textup{(S)}}$, turns out to be:
\begin{equation}
    \label{eq:97-jan6}
    y_i^{\textup{(S)}} = 
    \begin{cases}
    \hat{\lambda} \hat{\alpha} + \hat{\lambda} \hat{\beta} \text{ for } i \in \mathcal{S}_{1,\text{bad}}, 
    \\
    1 - \hat{\lambda} {\alpha} - \hat{\lambda} {\beta} \text{ for } i \in \mathcal{S}_{1,\text{good}}, 
    \\
    1 - \hat{\lambda} \hat{\alpha} - \hat{\lambda} \hat{\beta} \text{ for } i \in \mathcal{S}_{0,\text{bad}}, 
    \\
    \hat{\lambda} {\alpha} + \hat{\lambda} {\beta} \text{ for } i \in \mathcal{S}_{0,\text{good}}, 
    \end{cases}
\end{equation}
where $\beta \geq 0$ and $\hat{\beta} \geq 0$ (assuming $\hat{\lambda} \hat{\alpha} < \frac{1}{2}$ and $\hat{\lambda} {\alpha} < \frac{1}{2}$) are obtained by jointly solving:
\begin{equation}
    \label{eq:87-jan6}
     \sigma\Big(c n \big(\beta - (\beta + \hat{\beta})p) - (1-c)\hat{\beta}\Big) = \hat{\lambda} \hat{\alpha} + \hat{\lambda} \hat{\beta},
\end{equation}
and
\begin{equation}
    \label{eq:88-jan6}
     \sigma\Big(c n \big(\beta - (\beta + \hat{\beta})p) + (1-c){\beta}\Big) = 1 - \hat{\lambda} {\alpha} - \hat{\lambda} {\beta}.
\end{equation}
Now note that if $\hat{\lambda} \hat{\alpha} + \hat{\lambda} \hat{\beta} > \frac{1}{2}$ and $\hat{\lambda} {\alpha} + \hat{\lambda} {\beta} < \frac{1}{2}$, then the student has managed to correctly classify all the points in the training set; we ensure this in Step 3 by upper bounding $p$.
The tradeoff here is that the (1-0) accuracy of the student increases at the cost of decreased confidence in classifying the correctly labeled points compared to the teacher.
\\
\\
\textbf{Step 3 (Details in \Cref{step-3}).} Now we come to the \textit{challenging} part of the proof. To obtain a range for $p$, we need to analytically solve \cref{eq:71-jan6} and \cref{eq:72-jan6} for the teacher and then \cref{eq:87-jan6} and \cref{eq:88-jan6} for the student, which is particularly \textit{challenging} due to the non-linearity of the sigmoid function present in these equations. Our novel proof technique involves employing the first-order Maclaurin series expansion of the sigmoid function which enables us to bound $\alpha, \hat{\alpha}, \beta$ and $\hat{\beta}$ as a function of $p$, $\hat{\lambda}$ and $c$ in a small range (while imposing some conditions on $p$ and $\hat{\lambda}$ to ensure the range is small). Using this, we can bound the teacher's and student's predictions, and then impose conditions on $p$ and $\hat{\lambda}$ such that the teacher only correctly classifies the correctly labeled points and errs on all the incorrectly labeled points (i.e., $\hat{\lambda} {\alpha} < \frac{1}{2}$ and $\hat{\lambda} \hat{\alpha} < \frac{1}{2}$; see Step 1) but the student correctly classifies all the points (i.e., $\hat{\lambda} {\alpha} + \hat{\lambda} {\beta} < \frac{1}{2}$ and $\hat{\lambda} \hat{\alpha} + \hat{\lambda} \hat{\beta} > \frac{1}{2}$; see Step 2). {Finally, since $n \to \infty$, population accuracy $\to$ training accuracy (we formalize this at the end in \Cref{step-3}).}

\subsection{Variability of Predictions of Student and Teacher}
\begin{corollary}[\textbf{Variability of predictions of points within the same class}]
\label{rmk-3}
Define $\Delta_{\textup{T}} := \max_{i \neq i', y_i = y_{i'}} |y_i^{\textup{(T)}} - y_{i'}^{\textup{(T)}}|$ as the teacher's variability, i.e., the maximum difference between the teacher's predictions on two points having the same ground truth label. Similarly, $\Delta_{\textup{S}} := \max_{i \neq i', y_i = y_{i'}} |y_i^{\textup{(S)}} - y_{i'}^{\textup{(S)}}|$ is defined as the student's variability. Under the conditions of \Cref{thm1}, 
$\Delta_{\textup{S}} < \Delta_{\textup{T}}$.
\end{corollary}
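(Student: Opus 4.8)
The plan is to read off $\Delta_{\textup{T}}$ and $\Delta_{\textup{S}}$ directly from the closed forms of the soft labels in \eqref{eq:80-jan6} and \eqref{eq:97-jan6}, and then reduce the claimed inequality to a two-sided bound on $\hat{\lambda}(\beta + \hat{\beta})$. Within each ground-truth class the soft labels take only two distinct values, one for the corrupted points and one for the clean points, so the maximum over same-class pairs is just the gap between these two values. From \eqref{eq:80-jan6}, both classes give the same teacher gap
\begin{equation*}
\Delta_{\textup{T}} = \big|(1 - \hat{\lambda}\alpha) - \hat{\lambda}\hat{\alpha}\big| = 1 - \hat{\lambda}(\alpha + \hat{\alpha}),
\end{equation*}
which is positive because the teacher correctly classifies clean points and misclassifies corrupted ones, forcing $\hat{\lambda}\alpha < \tfrac{1}{2}$ and $\hat{\lambda}\hat{\alpha} < \tfrac{1}{2}$. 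Likewise, from \eqref{eq:97-jan6},
\begin{equation*}
\Delta_{\textup{S}} = \big|(1 - \hat{\lambda}(\alpha+\beta)) - \hat{\lambda}(\hat{\alpha}+\hat{\beta})\big| = \big|\Delta_{\textup{T}} - \hat{\lambda}(\beta + \hat{\beta})\big|.
\end{equation*}

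With this, $\Delta_{\textup{S}} < \Delta_{\textup{T}}$ is equivalent (since $\Delta_{\textup{T}} > 0$ and $\hat{\lambda}(\beta+\hat{\beta}) \geq 0$) to the two-sided bound $0 < \hat{\lambda}(\beta + \hat{\beta}) < 2\Delta_{\textup{T}}$. The lower bound is immediate: $\beta \geq 0$, and since under the conditions of \Cref{thm1} the student correctly classifies the corrupted points, $\hat{\lambda}(\hat{\alpha} + \hat{\beta}) > \tfrac{1}{2}$, whereas the teacher misclassifies them, $\hat{\lambda}\hat{\alpha} < \tfrac{1}{2}$, we get $\hat{\lambda}\hat{\beta} > \tfrac{1}{2} - \hat{\lambda}\hat{\alpha} > 0$; hence $\hat{\lambda}(\beta + \hat{\beta}) > 0$. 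Geometrically, distillation pushes the corrupted-point score up and the clean-point score down relative to the teacher, so the two same-class values move toward each other; the entire content of the upper bound is that this motion does not over-cross, i.e. the student does not open a gap on the far side larger than the teacher's original gap.

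The upper bound $\hat{\lambda}(\beta + \hat{\beta}) < 2\Delta_{\textup{T}} = 2\big(1 - \hat{\lambda}(\alpha+\hat{\alpha})\big)$ is the crux. Using only the sign constraints $\hat{\lambda}(\alpha + \beta) < \tfrac{1}{2}$ (clean points stay correct) and $\hat{\lambda}(\hat{\alpha}+\hat{\beta}) < 1$ (a soft label is a valid probability) gives $\hat{\lambda}(\beta + \hat{\beta}) < \tfrac{3}{2} - \hat{\lambda}(\alpha + \hat{\alpha})$, which already settles the claim whenever $\hat{\lambda}(\alpha + \hat{\alpha}) \leq \tfrac{1}{2}$, since then $\tfrac{3}{2} - \hat{\lambda}(\alpha+\hat{\alpha}) \leq 2\big(1 - \hat{\lambda}(\alpha+\hat{\alpha})\big) = 2\Delta_{\textup{T}}$. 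The difficulty lies in the high-corruption sub-regime where $\hat{\lambda}(\alpha + \hat{\alpha})$ approaches $1$ (the teacher's two same-class scores crowd toward $\tfrac{1}{2}$ as $p \uparrow \tfrac{1}{2}$), so $\Delta_{\textup{T}}$ is small and this crude estimate is too weak.

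To handle that regime I would invoke the quantitative control established in \Cref{step-3} for the proof of \Cref{thm1}: the first-order Maclaurin bounds there pin $\alpha, \hat{\alpha}, \beta, \hat{\beta}$ to explicit narrow intervals in terms of $p$, $\hat{\lambda}$ and $c$ (equivalently $r$). Substituting these into $\hat{\lambda}(\beta+\hat{\beta})$ and into $\Delta_{\textup{T}}$ reduces $\hat{\lambda}(\beta+\hat{\beta}) < 2\Delta_{\textup{T}}$ to a scalar inequality in $(p, r)$ that can be verified to hold throughout the admissible range of \Cref{thm1}. I expect this final verification to be the main obstacle, since it is exactly where the loose probability and sign bounds break down and one must use the sharper \Cref{step-3} estimates to guarantee that the student's correction of the corrupted points does not overshoot the teacher's gap.
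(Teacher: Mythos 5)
Your reduction of the claim to the two-sided bound $0 < \hat{\lambda}(\beta+\hat{\beta}) < 2\Delta_{\textup{T}}$ is valid (given $\Delta_{\textup{T}} = 1 - \hat{\lambda}(\alpha+\hat{\alpha}) > 0$), and your lower bound is correctly settled. But the proposal is incomplete exactly where the content lies: under the conditions of \Cref{thm1}, your "easy" regime $\hat{\lambda}(\alpha+\hat{\alpha}) \leq \tfrac{1}{2}$ is \emph{empty}. Step 3 of the paper's proof (\cref{sept8-eq:115}) gives $\hat{\lambda}(\alpha+\hat{\alpha}) = \frac{1+\zeta}{1+r}$ with $\zeta \in (0,0.04)$ and $r \in [0.10,0.54]$, so $\hat{\lambda}(\alpha+\hat{\alpha}) \geq \frac{1}{1.54} > \frac{1}{2}$ always. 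Hence the crude probability/sign bounds buy nothing, and the entire corollary rests on precisely the step you defer -- the scalar verification you describe but never carry out. As written, this is a proof plan rather than a proof.

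The good news is that the deferred verification does go through, in two lines, with the quantities you point to. From \cref{eq:122-sept8} and \cref{eq:122-2-sept8}, $\hat{\lambda}(\alpha+\beta+\hat{\alpha}+\hat{\beta}) = \frac{1}{1+r}\big(\frac{r(1+\zeta)}{1+r} + 1 + \zeta'\big)$, hence $\hat{\lambda}(\beta+\hat{\beta}) = \frac{1}{1+r}\big(\frac{r(1+\zeta)}{1+r} - \zeta + \zeta'\big)$ while $2\Delta_{\textup{T}} = \frac{2(r-\zeta)}{1+r}$; note $p$ cancels entirely. The required inequality reduces to $\frac{r(1+\zeta)}{1+r} < 2r - \zeta - \zeta'$, and since $\frac{r(1+\zeta)}{1+r} \leq \frac{1.04}{1.10}\,r < 0.95\,r$ and $2r - \zeta - \zeta' > 2r - 0.06$, it suffices that $1.05\,r > 0.06$, which holds for all $r \geq 0.10$. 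For comparison, the paper's proof in \Cref{pf-rmk-3} never forms your two-sided bound: it computes $\Delta_{\textup{T}} = 1 - \frac{1+\zeta}{1+r}$ and $\Delta_{\textup{S}} = 1 - \frac{1}{1+r}\big(\frac{r(1+\zeta)}{1+r} + 1+\zeta'\big)$ directly and observes $\frac{r}{1+r} + \frac{1+\zeta'}{1+\zeta} \geq \frac{0.1}{1.1} + \frac{0.98}{1.04} > 1$. One genuine merit of your framing: the paper writes $\Delta_{\textup{S}}$ without the absolute value, implicitly assuming the student's predictions do not over-cross, whereas your formulation $\Delta_{\textup{S}} = |\Delta_{\textup{T}} - \hat{\lambda}(\beta+\hat{\beta})|$ covers that possibility explicitly; had you finished the computation above, your argument would be slightly more airtight than the paper's on this point.
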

\noindent In other words, the student's predictions are more homogeneous than the teacher's predictions as per \Cref{rmk-3}. This is analogous to \textit{SD mitigating the variance term due to label noise} in linear regression (\Cref{dist-tradeoff}) leading to smaller variability.
\\
\\
We prove \Cref{rmk-3} in \Cref{pf-rmk-3} and corroborate it with empirical evidence in \Cref{sec-expts-1}.

\section{Empirical Results}
\label{sec-expts}
For our experiments, we consider multi-class classification with the cross-entropy loss on several vision datasets available in PyTorch's torchvision, namely, CIFAR-100 with 100 classes, Caltech-256 \cite{griffin2007caltech} with 257 classes, Food-101 \cite{bossard14} with 101 classes, StanfordCars \cite{KrauseStarkDengFei-Fei_3DRR2013} with 196 classes and Flowers-102 \cite{nilsback2008automated} with 102 classes. Since Caltech-256 does not have any train/test split provided by default, we pick 25k random images from the full dataset to form the training set, while the remaining images form the test set. For all the datasets, we train a softmax layer on top of a pre-trained ResNet-34/VGG-16 model on ImageNet which is kept fixed, i.e., we do \textit{linear probing} on ResNet-34/VGG-16. No data augmentation is involved. Next, we describe the different types of label corruption that we experiment on. 
\\
\\
\textbf{Label Corruption Type 1 (Random Corruption):} Suppose the set of labels is $[C] := \{1,\ldots,C\}$. Consider a sample whose true label is $c \in [C]$. A corruption level of $100p$ \% means we observe this sample’s label as $c$ with a probability of $(1-p)$ or some random $i \in [C] \setminus c$ with a probability of $p/(C-1)$ for each such $i \neq c$. We call this \textbf{random} corruption\footnote{This has been also called symmetric noise in prior work; see for e.g., \cite{chen2019understanding}}.
\\
\\
\noindent \textbf{Label Corruption Type 2 (Hierarchical Corruption \cite{hendrycks2018using}):} Here, the label corruption only occurs between semantically similar classes. This is a more realistic type of corruption compared to random corruption. By default, CIFAR-100 comes with 20 super-classes each containing 5 semantically similar classes; for e.g., the super-class \enquote{fish} consists of aquarium fish, flatfish, ray, shark and trout, while the super-class \enquote{small mammals} consists of hamster, mouse, rabbit, shrew and squirrel. Unfortunately, the other datasets do not have any semantically similar classes provided by default.

Now, we describe the exact corruption scheme. Consider a sample whose true class is $c$ and super-class is $S = \{c_1,\ldots,c_{|S|}\}$. A corruption level of $100p$ \% means we observe this sample’s label as $c$ with a probability of $(1-p)$ or some random $c' \in S \setminus c$ with a probability of $p/(|S|-1)$ (for each such $c' \neq c$). Following \cite{hendrycks2018using}, we call this \textbf{hierarchical} corruption.
\\
\\
\noindent \textbf{Label Corruption Type 3 (Adversarial Corruption):} Instead of semantically similar classes, we determine \enquote{hard} classes for each class by looking at the output of the teacher in the noiseless case (i.e., when there is no corruption) and induce label corruption only among these hard classes. Specifically, in the noiseless case, for a sample $\bm{x}$, 
let $p_{\text{T}}(\bm{x}, c)$ be the teacher's predicted probability of $\bm{x}$ belonging to class $c  \in \{1,\ldots,C\}$. Also, let $\mathcal{X}_c$ be the set of samples in the training set belonging to class $c$. Now, for each class $c$, we compute $\bm{\nu}_c = \Big[\frac{1}{|\mathcal{X}_c|}\sum_{\bm{x} \in \mathcal{X}_c} p_{\text{T}}(\bm{x}, 1), \ldots, \frac{1}{|\mathcal{X}_c|}\sum_{\bm{x} \in \mathcal{X}_c} p_{\text{T}}(\bm{x}, C)\Big] \in \mathbb{R}^C$, and define the $k$ hardest classes for class $c$ to be the indices in $\{1,\ldots,C\} \setminus c$ corresponding to the $k$ largest values in $\bm{\nu}_c$. For our experiments, we take $k = 5$.

Now, we describe the corruption scheme. Consider a sample whose true class is $c$ and the set of hardest 5 classes for $c$ is $S$. A corruption level of $100p$ \% means we observe this sample’s label as $c$ with a probability of $(1-p)$ or some random $c' \in S$ with a probability of $p/5$. We call this \textbf{adversarial} corruption. 

\subsection{Verifying Remark~\ref{rmk-xi}}
\label{sec-expts-2}
In \Cref{rmk-xi}, we advocated trying $\xi > 1$ in the high noise regime. We shall now test our recommendation on several noisy datasets. The teacher is trained with the $\ell_2$-regularized cross-entropy loss and the student's per-sample loss is given by \cref{eq:1-intro} where $\ell$ is the $\ell_2$-regularized cross-entropy loss. {Following our theory setting, the teacher and student are both trained with the {same} $\ell_2$-regularization parameter; the common weight decay value (PyTorch's $\ell_2$-regularization parameter) is set to $5 \times 10^{-4}$. Note that this weight decay value was the first one that we tried (i.e., it was not cherry-picked); in fact, we show results with other weight decay values in \Cref{new-expts=jan25}.} We defer the remaining experimental details to \Cref{expt-detail}. In \Cref{tab-expts2}, we list the student's improvement over the teacher (i.e., student's test accuracy - teacher's test accuracy)\footnote{The individual accuracies of the teacher and student can be found in \Cref{expt-detail}; we omit them in the main text for brevity.} averaged across 3 different runs for different values of $\xi$ with ResNet-34 and VGG-16 in the case of 50\% random, hierarchical and adversarial corruption. In all these experiments, note that the value of $\xi$ yielding the biggest improvement is $> 1$. \Cref{tab-expts2-supp-jan11} (in \Cref{sec-expts-2-supp}) shows results with 30\% corruption in Stanford Cars and Flowers-102; even there, $\xi > 1$ does better than $\xi \leq 1$. 

\subsection{Verifying Remark~\ref{dist-utility}}
\label{dist-utility-ver}
{In \Cref{dist-utility}, we claimed that the utility of the teacher's predictions increases with the amount of label noise. To demonstrate this, we train the student with $\xi=1$ which corresponds to setting the teacher's predicted \textit{soft} labels as the student's targets (just as we did in \Cref{sec:log_reg}) and completely ignoring the provided labels. All other experimental details (including weight decay) are the same as in \Cref{sec-expts-2}. In \Cref{tab-expts1}, we show the student's improvement over the teacher averaged across 3 different runs for varying degrees and types of label corruption with ResNet-34; see the table caption for discussion.}

\subsection{Verifying Corollary~\ref{rmk-3}}
\label{sec-expts-1}
We now provide empirical evidence for our claim of the student's predictions being more homogeneous than the teacher's predictions in \Cref{rmk-3}. Since our experiments are for the multi-class (and not binary) case, we look at a slightly different metric to quantify variability which we introduce next. For a sample $\bm{x}$ belonging to class $c(\bm{x})$, let $\hat{p}_{\text{T}}(\bm{x})$ and $\hat{p}_{\text{S}}(\bm{x})$ be the teacher's and student's predicted probability of $\bm{x}$ belonging to $c(\bm{x})$, respectively. Also, let $\mathcal{X}_c'$ be the set of samples in the test set belonging to class $c$. To quantify the variability of the teacher and student for class $c$, we look at $\max_{\bm{x}_1, \bm{x}_2 \in \mathcal{X}_c'} |\hat{p}_{\text{T}}(\bm{x}_1) - \hat{p}_{\text{T}}(\bm{x}_2)|$ and $\max_{\bm{x}_1, \bm{x}_2 \in \mathcal{X}_c'} |\hat{p}_{\text{S}}(\bm{x}_1) - \hat{p}_{\text{S}}(\bm{x}_2)|$, i.e., the range of $\hat{p}_{\text{T}}(\bm{x})$ and $\hat{p}_{\text{S}}(\bm{x})$ w.r.t. $\bm{x} \in \mathcal{X}_c'$, respectively. {In \Cref{fig:2}, we plot the per-class variability as defined here for three of the cases of \Cref{tab-expts1} covering all three types of label corruption; please see the caption for discussion.}

\begin{table}[!htb]
{
\begin{subtable}{.5\linewidth}\centering
{\begin{tabular}{|c|c|}
\hline
$\xi$ & \makecell{Improvement of \\ student over teacher}
\\
\hline
0.2 &  $2.22 \pm 0.12$ \% 
\\
0.5 & $5.18 \pm 0.03$ \%
\\
0.7 & $6.84 \pm 0.06$ \%
\\
1.0 & $8.54 \pm 0.29$ \%
\\
{1.2} & ${9.66 \pm 0.23}$ \%
\\
\rowcolor{Gray}
\textbf{1.5} & $\bm{10.04 \pm 0.51}$ \% 
\\
\rowcolor{Gray}
\textbf{1.7} & $\bm{9.81 \pm 0.55}$ \% 
\\
2.0 & $8.56 \pm 0.73$ \% 
\\
\hline
\end{tabular}}
\caption{50\% Random Corruption in Caltech-256 \\ with {ResNet-34}}
\label{tab:2a}
\vspace{0.2 cm}
\end{subtable}%
}
{
\begin{subtable}{.5\linewidth}\centering
{\begin{tabular}{|c|c|}
\hline
$\xi$ & \makecell{Improvement of \\ student over teacher}
\\
\hline
0.5 & $0.89 \pm 0.10$ \%
\\
1.0 & $2.01 \pm 0.14$ \%
\\
1.5 & $3.13 \pm 0.11$ \%
\\
2.0 & $4.22 \pm 0.20$ \%
\\
2.5 & $5.28 \pm 0.13$ \%
\\
\rowcolor{Gray}
\textbf{3.0} & $\bm{5.78 \pm 0.12}$ \%
\\
\rowcolor{Gray}
\textbf{3.5} & $\bm{5.86 \pm 0.18}$ \%
\\
4.0 & $5.32 \pm 0.33$ \%
\\
\hline
\end{tabular}}
\caption{50\% Random Corruption in Caltech-256 \\ with {VGG-16}}
\label{tab:3a}
\vspace{0.2 cm}
\end{subtable}%
}
\\
\\
{
\begin{subtable}{.5\linewidth}\centering
{\begin{tabular}{|c|c|}
\hline
$\xi$ & \makecell{Improvement of \\ student over teacher}
\\
\hline
0.2 & $0.98 \pm 0.12$ \%
\\
0.5 & $2.46 \pm 0.11$ \%
\\
0.7 & $3.38 \pm 0.02$ \%
\\
{1.0} & ${4.19 \pm 0.09}$ \%
\\
\rowcolor{Gray}
\textbf{1.2} & $\bm{4.46 \pm 0.19}$ \%
\\
\rowcolor{Gray}
\textbf{1.5} & $\bm{4.46 \pm 0.17}$ \%
\\
\rowcolor{Gray}
\textbf{1.7} & $\bm{4.32 \pm 0.18}$ \%
\\
2.0 & $3.52 \pm 0.23$ \%
\\
\hline
\end{tabular}}
\caption{50\% Hierarchical Corruption in \\ CIFAR-100 with {ResNet-34}}
\label{tab:2b}
\vspace{0.2 cm}
\end{subtable}%
}
{
\begin{subtable}{.5\linewidth}\centering
{\begin{tabular}{|c|c|}
\hline
$\xi$ & \makecell{Improvement of \\ student over teacher}
\\
\hline
0.2 & $1.10 \pm 0.09$ \%
\\
0.5 & $2.69 \pm 0.02$ \%
\\
0.7 & $3.72 \pm 0.05$ \%
\\
1.0 & $5.29 \pm 0.11$ \%
\\
1.2 & $6.26 \pm 0.09$ \%
\\
\rowcolor{Gray}
\textbf{1.5} & $\bm{7.20 \pm 0.14}$ \%
\\
\rowcolor{Gray}
\textbf{1.7} & $\bm{7.23 \pm 0.17}$ \%
\\
2.0 & $6.42 \pm 0.26$ \%
\\
\hline
\end{tabular}}
\caption{50\% Hierarchical Corruption in \\ CIFAR-100 with VGG-16}
\label{tab:3b}
\vspace{0.2 cm}
\end{subtable}%
}
\\
\\
{
\begin{subtable}{.5\linewidth}\centering
\begin{tabular}{|c|c|}
\hline
$\xi$ & \makecell{Improvement of \\ student over teacher}
\\
\hline
0.2 & $0.13 \pm 0.08$ \%
\\
0.5 & $0.97 \pm 0.04$ \% 
\\
0.7 & $1.45 \pm 0.01$ \%
\\
\rowcolor{Gray}
\textbf{1.0} & $\bm{1.85 \pm 0.09}$ \% 
\\
\rowcolor{Gray}
\textbf{1.2} & $\bm{1.87 \pm 0.06}$ \%
\\
\rowcolor{Gray}
\textbf{1.5} & $\bm{1.86 \pm 0.08}$ \%
\\
\rowcolor{Gray}
\textbf{1.7} & $\bm{1.80 \pm 0.05}$ \%
\\
2.0 & $1.53 \pm 0.02$ \%
\\
\hline
\end{tabular}
\caption{50\% Adversarial Corruption in Food-101 \\ with ResNet-34}
\label{tab:2c}
\vspace{0.2 cm}
\end{subtable}%
}
{
\begin{subtable}{.5\linewidth}\centering
\begin{tabular}{|c|c|}
\hline
$\xi$ & \makecell{Improvement of \\ student over teacher}
\\
\hline
0.2 & $0.79 \pm 0.23$ \%
\\
0.5 & $2.14 \pm 0.09$ \%
\\
0.7 & $2.96 \pm 0.04$ \%
\\
{1.0} & $3.85 \pm 0.05$ \%
\\
\rowcolor{Gray}
\textbf{1.2} & $\bm{4.22 \pm 0.15}$ \%
\\
\rowcolor{Gray}
\textbf{1.5} & $\bm{4.39 \pm 0.29}$ \%
\\
\rowcolor{Gray}
\textbf{1.7} & $\bm{4.20 \pm 0.34}$ \%
\\
2.0 & $3.53 \pm 0.49$ \%
\\
\hline
\end{tabular}
\caption{50\% Adversarial Corruption in Food-101 \\ with VGG-16}
\label{tab:3c}
\vspace{0.2 cm}
\end{subtable}%
}
\caption{Average ($\pm$ 1 std.) improvement of student over teacher (i.e., student's test set accuracy - teacher's test set accuracy) with different values of the imitation parameter $\xi$; recall that $\xi=0$ corresponds to the teacher. Observe that in all cases, the value of $\xi$ yielding the biggest improvement is more than 1 (although in Food-101 with ResNet-34, $\xi=1$ does just as well as $\xi > 1$). This is consistent with our message in \Cref{rmk-xi}, where we advocate trying $\xi > 1$ in the high noise regime.}
\label{tab-expts2}
\end{table}

\clearpage

\begin{table}[!htb]
{
\begin{subtable}{\linewidth}\centering
{\begin{tabular}{|c|c|c|}
\hline
\makecell{Corruption level} & \makecell{\textbf{Random} corruption: \\ Improvement of student} & \makecell{\textbf{Adversarial} corruption: \\ Improvement of student} 
\\
\hline 
0\% & $-0.04 \pm 0.02$ \% & $-0.04 \pm 0.02$ \%
\\
10\% & $2.51 \pm 0.11$ \% & $2.32 \pm 0.10$ \%
\\
30\% & $6.14 \pm 0.16$ \% & $5.08 \pm 0.25$ \%
\\
50\% & $8.54 \pm 0.29$ \% & $5.77 \pm 0.19$ \%
\\
\hline
\end{tabular}}
\caption{Caltech-256 (Random and Adversarial Corruption)}
\vspace{0.2 cm}
\label{tab:1a}
\end{subtable}%
}
{
\begin{subtable}{\linewidth}\centering
{\begin{tabular}{|c|c|c|}
\hline
\makecell{Corruption level} & \makecell{\textbf{Random} corruption: \\ Improvement of student} & \makecell{\textbf{Hierarchical} corruption: \\ Improvement of student}
\\
\hline 
0\% & $-0.23 \pm 0.06$ \% & $-0.23 \pm 0.06$ \%
\\
10\% & $0.63 \pm 0.11$ \% & $1.19 \pm 0.08$ \%
\\
30\% & $1.34 \pm 0.13$ \% & $2.80 \pm 0.06$ \%
\\
50\% & $2.11 \pm 0.15$ \% & $4.19 \pm 0.09$ \%
\\
\hline
\end{tabular}}
\caption{CIFAR-100 (Random and Hierarchical Corruption)}
\vspace{0.2 cm}
\label{tab:1c}
\end{subtable}%
}
{
\begin{subtable}{\linewidth}\centering
{\begin{tabular}{|c|c|c|}
\hline
\makecell{Corruption level} & \makecell{\textbf{Random} corruption: \\ Improvement of student} & \makecell{\textbf{Adversarial} corruption: \\ Improvement of student}
\\
\hline 
0\% & $-0.37 \pm 0.10$ \% & $-0.37 \pm 0.10$ \%
\\
10\% & $0.10 \pm 0.04$ \% & $0.25 \pm 0.05$ \%
\\
30\% & $0.47 \pm 0.04$ \% & $0.77 \pm 0.06$ \%
\\
50\% & $1.12 \pm 0.08$ \% & $1.85 \pm 0.09$ \%
\\
\hline
\end{tabular}}
\caption{Food-101 (Random and Adversarial Corruption)}
\label{tab:1b}
\end{subtable}%
}
\caption{\textbf{ResNet-34 with $\xi=1$:} Average ($\pm$ 1 std.) improvement of student over teacher (i.e., student's test set accuracy - teacher's test set accuracy) with different kinds and varying levels of label corruption. Observe that \textit{as the corruption level increases, so does the improvement of the student over the teacher} for all types of corruption. This shows that the utility of the teacher's predictions (which is the core idea of SD) increases with the amount of label noise corroborating our claim in \Cref{dist-utility}.}
\label{tab-expts1}
\end{table}

\begin{figure*}[!htb]
\centering 
\subfloat[Caltech-256 with 50\% \\ random corruption]{
	\includegraphics[width=0.39\textwidth]{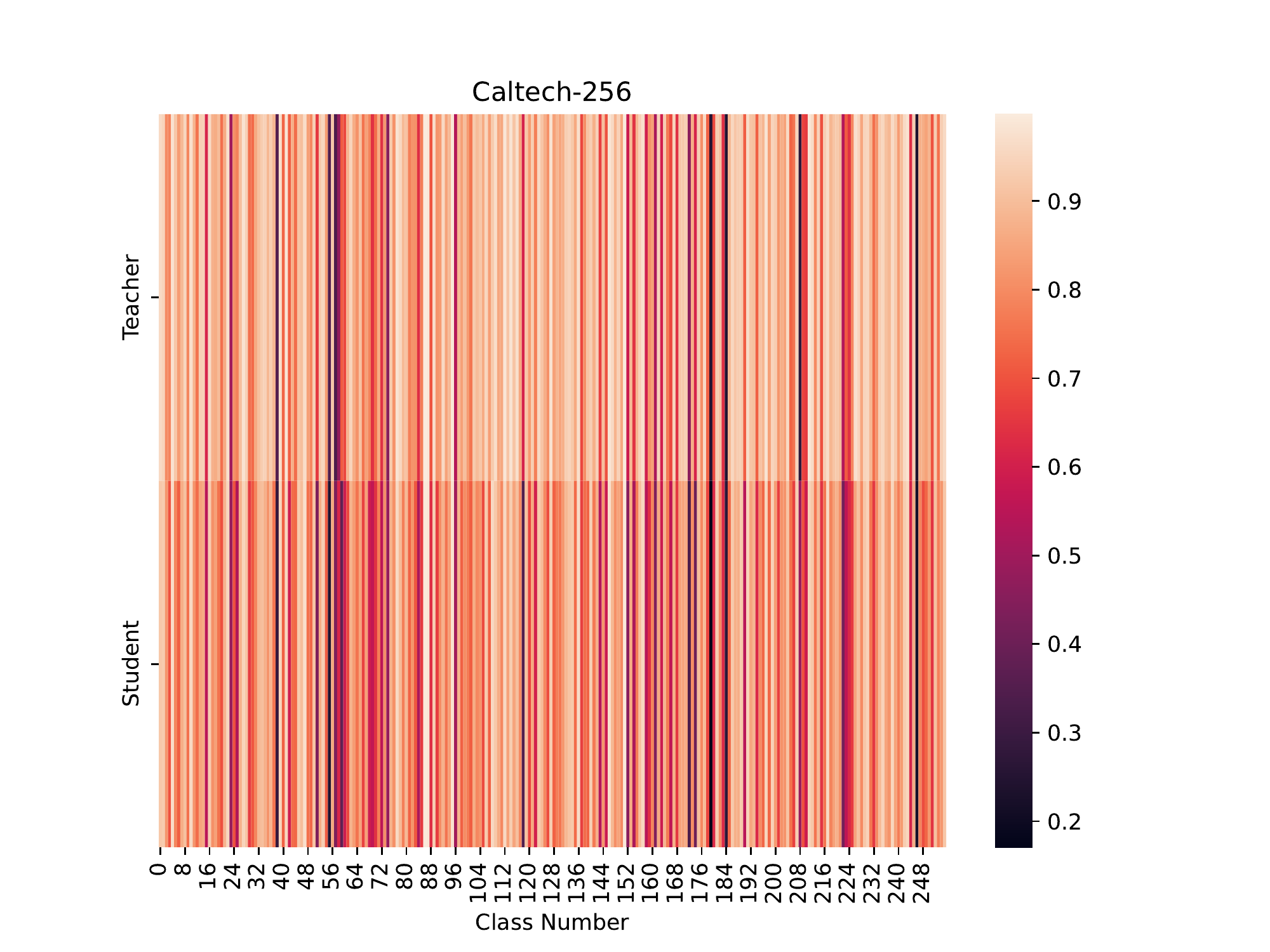}
}
\kern-2em
\subfloat[CIFAR-100 with 50\% \\ hierarchical corruption]{
	\includegraphics[width=0.39\textwidth]{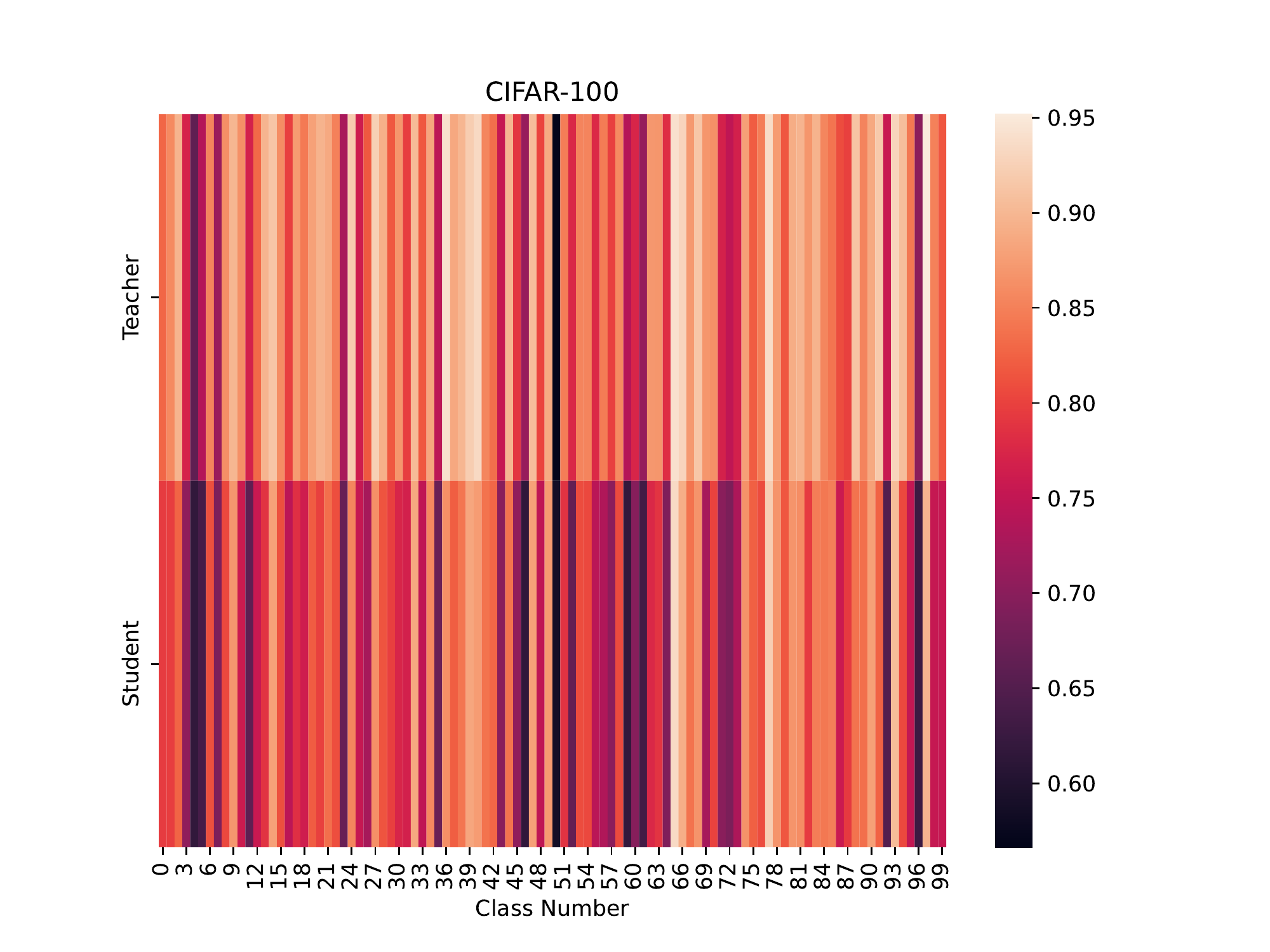}
}
\kern-2em
\subfloat[Food-101 with 50\% \\ adversarial corruption]{
	\includegraphics[width=0.39\textwidth]{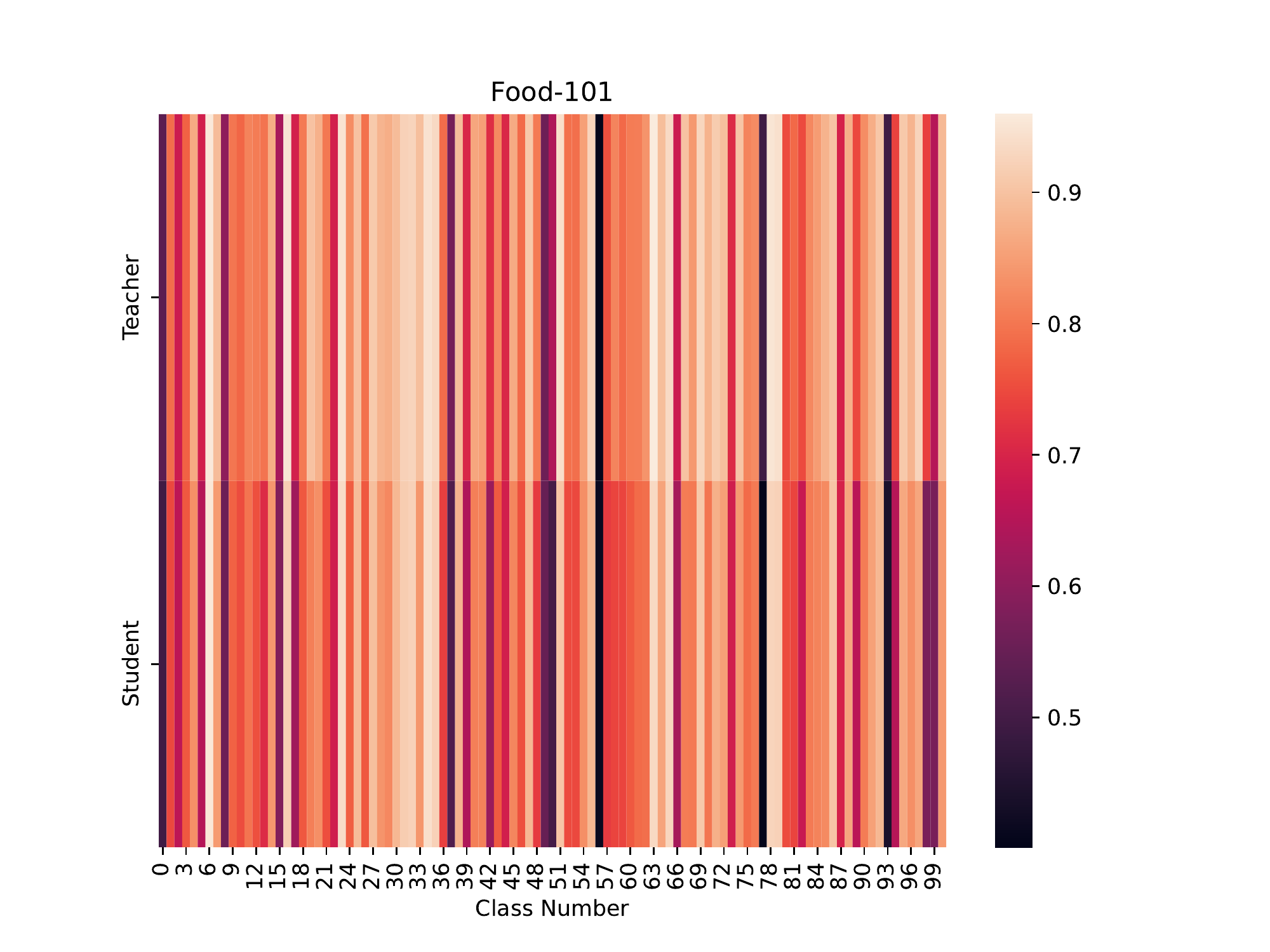}
}
\caption{
{\textbf{ResNet-34 with $\xi=1$:} Comparison of the per-class variability of the teacher and student (i.e., range of the teacher's and student's predictions of belonging to the correct class, as defined in \Cref{sec-expts-1}) for three of the cases of \Cref{tab-expts1} as a heat map. Note that a darker shade corresponds to a lower value; in all the cases, the student's heat map has a darker shade than the teacher's heat map which means that the student has a smaller variability than the teacher. This is consistent with the claim in  \Cref{rmk-3}.}
}
\label{fig:2}
\end{figure*}

\clearpage

\section{Conclusion}
In this work, we analyzed the utility of self-distillation (SD) in supervised learning with noisy labels. Our main algorithmic contribution was introducing the idea of trying $\xi > 1$ in the high label noise regime. {On the theoretical side, for a binary classification problem where some fraction of the sample's labels are flipped, we quantified the range of label corruption fraction in which the student outperforms the teacher under some assumptions on the data. We also characterized when optimal SD is better than optimal regularization in linear regression.}

There are some limitations of our work which pave the way for interesting directions of future work. Our results in \Cref{sec:log_reg} for logistic regression are under \Cref{a3}; it would be nice to derive similar results under a weaker assumption such as in expectation (see \Cref{a3}$'$ in the discussion after \Cref{a3}) or by assuming that the feature inner products are bounded in some range. Also, our results for logistic regression are with $\xi = 1$; one could try to obtain results with a general $\xi$ to shed some light on how to better tune $\xi$ for noisy datasets, like we did for linear regression. Further, our empirical results are with linear probing; experiments with full network fine-tuning are left for future work.

\section{Acknowledgement}
This work was supported by NSF TRIPODS grant 1934932.


\bibliography{refs}
\bibliographystyle{apalike}

\clearpage

\appendix

\begin{center}
    \textbf{\Large Appendix}\vspace{5mm}
\end{center}

\section*{Contents}
\begin{itemize}
    \item \textbf{\Cref{pf-thm-est-err}:} Proof of Theorem~\ref{thm-est-err}
    \item \textbf{\Cref{xi_increase}:} Behavior of {$\xi^{*}$} w.r.t. {$\gamma^2$}
    \item \textbf{\Cref{pf-opt-lambda}:} Detailed Version and Proof of Theorem~\ref{thm-opt-lambda}
    \item \textbf{\Cref{app=cond-simpler}:} Detailed Version and Proof of \Cref{cond-simpler}
    \item \textbf{\Cref{thm-ex-pf}:} Proof of \Cref{thm-ex}
    \item \textbf{\Cref{a3-motiv}:} Empirical Motivation for Assumption~\ref{a3}
    \item \textbf{\Cref{pf-thm1}:} Proof of Theorem~\ref{thm1}
    \item \textbf{\Cref{pf-rmk-3}:} Proof of Corollary~\ref{rmk-3}
    \item \textbf{\Cref{more-expts-supp}:} {More Empirical Results}
    \item \textbf{\Cref{expt-detail}:} {Detailed Empirical Results}
\end{itemize}

\clearpage

\section{Proof of Theorem~\ref{thm-est-err}}
\label{pf-thm-est-err}
With the SVD notation of $\bm{X}$, we can rewrite $\hat{\bm{\theta}}_{S}(\xi)$ (from \cref{prop-2}) as:
\begin{equation}
    \label{student-svd}
    \hat{\bm{\theta}}_{S}(\xi) = {\sum_{j=1}^r \frac{\langle \bm{\theta}^{*}, \bm{u}_j \rangle}{\big(1 + \nicefrac{\lambda}{\sigma_j^2}\big)} {\Bigg(1 - \xi\Big(\frac{\nicefrac{\lambda}{\sigma_j^2}}{1+\nicefrac{\lambda}{\sigma_j^2}}\Big) \Bigg)} \bm{u}_j} + {\sum_{j=1}^r \frac{{\langle \bm{\eta}, \bm{v}_j \rangle}/{\sigma_j}}{{\big(1 + \nicefrac{\lambda}{\sigma_j^2}\big)}} {\Bigg(1 - \xi\Big(\frac{\nicefrac{\lambda}{\sigma_j^2}}{1+\nicefrac{\lambda}{\sigma_j^2}}\Big)\Bigg)} \bm{u}_j}.
\end{equation}
Also, since $\{\bm{u}_1,\ldots,\bm{u}_d\}$ forms an orthonormal basis for $\mathbb{R}^d$, we have: 
\[\bm{\theta}^{*} = \sum_{j=1}^d \langle \bm{\theta}^{*}, \bm{u}_j \rangle \bm{u}_j.\]
So, using \cref{student-svd}:
\begin{multline}
    \bm{\epsilon}_{S}(\xi) = -{\sum_{j=1}^r \langle \bm{\theta}^{*}, \bm{u}_j \rangle \Bigg(\frac{\nicefrac{\lambda}{\sigma_j^2}}{1 + \nicefrac{\lambda}{\sigma_j^2}}\Bigg) \Bigg(1 + \frac{\xi}{1 + \nicefrac{\lambda}{\sigma_j^2}}\Bigg) \bm{u}_j} - \sum_{j=r+1}^d \langle \bm{\theta}^{*}, \bm{u}_j \rangle \bm{u}_j 
    \\
    + {\sum_{j=1}^r \frac{{\langle \bm{\eta}, \bm{v}_j \rangle}/{\sigma_j}}{{\big(1 + \nicefrac{\lambda}{\sigma_j^2}\big)^2}} {\Bigg(1 - \xi\Big(\frac{\nicefrac{\lambda}{\sigma_j^2}}{1+\nicefrac{\lambda}{\sigma_j^2}}\Big)\Bigg)} \bm{u}_j}.
\end{multline}
Using \Cref{asmp-noise}, we have:
\begin{equation}
    \mathbb{E}_{\bm{\eta}}[\bm{\epsilon}_{S}(\xi)] = -{\sum_{j=1}^r \langle \bm{\theta}^{*}, \bm{u}_j \rangle \Bigg(\frac{\nicefrac{\lambda}{\sigma_j^2}}{1 + \nicefrac{\lambda}{\sigma_j^2}}\Bigg) \Bigg(1 + \frac{\xi}{1 + \nicefrac{\lambda}{\sigma_j^2}}\Bigg) \bm{u}_j} - \sum_{j=r+1}^d \langle \bm{\theta}^{*}, \bm{u}_j \rangle \bm{u}_j.
\end{equation}
Thus, using the orthonormality of $\{\bm{u}_1,\ldots,\bm{u}_d\}$, we get:
\begin{flalign}
    \label{eq:37-oct23}
    \big\|\mathbb{E}_{\bm{\eta}}[\bm{\epsilon}_{S}(\xi)]\big\|^2 & = \sum_{j=1}^r \big(\langle \bm{\theta}^{*}, \bm{u}_j \rangle\big)^2 \Bigg(\frac{\nicefrac{\lambda}{\sigma_j^2}}{1 + \nicefrac{\lambda}{\sigma_j^2}}\Bigg)^2 \Bigg(1 + \frac{\xi}{1 + \nicefrac{\lambda}{\sigma_j^2}}\Bigg)^2 + \sum_{j=r+1}^d \big(\langle \bm{\theta}^{*}, \bm{u}_j \rangle\big)^2.
\end{flalign}
Next:
\begin{flalign}
    \mathbb{E}_{\bm{\eta}}\Big[\big\|\bm{\epsilon}_{S}(\xi) - \mathbb{E}_{\bm{\eta}}[\bm{\epsilon}_{S}(\xi)]\big\|^2\Big] &= \mathbb{E}_{\bm{\eta}}\Bigg[\Bigg\|{\sum_{j=1}^r \frac{{\langle \bm{\eta}, \bm{v}_j \rangle}/{\sigma_j}}{{\big(1 + \nicefrac{\lambda}{\sigma_j^2}\big)^2}} {\Bigg(1 - \xi\Big(\frac{\nicefrac{\lambda}{\sigma_j^2}}{1+\nicefrac{\lambda}{\sigma_j^2}}\Big)\Bigg)} \bm{u}_j}\Bigg\|^2\Bigg]
    \\
    \label{eq:41-oct23}
    &= \sum_{j=1}^r \frac{\mathbb{E}_{\bm{\eta}}\big[\big(\langle \bm{\eta}, \bm{v}_j \rangle\big)^2\big]}{\sigma_j^2 \big(1 + \nicefrac{\lambda}{\sigma_j^2}\big)^2} {\Bigg(1 - \xi\Big(\frac{\nicefrac{\lambda}{\sigma_j^2}}{1+\nicefrac{\lambda}{\sigma_j^2}}\Big)\Bigg)}^2
    \\
    \label{eq:42-oct23}
    &= \sum_{j=1}^r \frac{\bm{v}_j^T \mathbb{E}_{\bm{\eta}}\big[\bm{\eta} \bm{\eta}^T\big] \bm{v}_j}{\sigma_j^2 \big(1 + \nicefrac{\lambda}{\sigma_j^2}\big)^2} {\Bigg(1 - \xi\Big(\frac{\nicefrac{\lambda}{\sigma_j^2}}{1+\nicefrac{\lambda}{\sigma_j^2}}\Big)\Bigg)}^2
    \\
    \label{eq:43-oct23}
    & = \gamma^2 \Bigg\{\sum_{j=1}^r \frac{1}{\sigma_j^2 \big(1 + \nicefrac{\lambda}{\sigma_j^2}\big)^2}{\Bigg(1 - \xi\Big(\frac{\nicefrac{\lambda}{\sigma_j^2}}{1+\nicefrac{\lambda}{\sigma_j^2}}\Big)\Bigg)}^2\Bigg\}.
\end{flalign}
\Cref{eq:41-oct23} follows from the orthonormality of the $\bm{u}_j$'s, \cref{eq:42-oct23} follows because the $\bm{v}_j$'s are independent of $\bm{\eta}$ from \Cref{asmp-noise}, and \cref{eq:43-oct23} follows because $\mathbb{E}_{\bm{\eta}}\big[\bm{\eta} \bm{\eta}^T\big] = \gamma^2 \text{I}_n$ from \Cref{asmp-noise} and because $\bm{v}_j^T \bm{v}_j = 1$ for all $j \in \{1,\ldots,r\}$.
Rewriting \cref{eq:43-oct23} slightly differently, we get:
\begin{equation}
    \label{eq:40-jan2}
    \mathbb{E}_{\bm{\eta}}\Big[\big\|\bm{\epsilon}_{S}(\xi) - \mathbb{E}_{\bm{\eta}}[\bm{\epsilon}_{S}(\xi)]\big\|^2\Big] = {\frac{\gamma^2}{\lambda} \Bigg\{\sum_{j=1}^r \frac{\nicefrac{\lambda}{\sigma_j^2}}{\big(1 + \nicefrac{\lambda}{\sigma_j^2}\big)^2} \Bigg(1 - \xi \Big(\frac{\nicefrac{\lambda}{\sigma_j^2}}{1 + \nicefrac{\lambda}{\sigma_j^2}}\Big)\Bigg)^2\Bigg\}}.
\end{equation}

\section{Behavior of \texorpdfstring{$\xi^{*}$}{TEXT} w.r.t. \texorpdfstring{$\gamma^2$}{TEXT}}
\label{xi_increase}
\begin{proposition}
    \label{prop_xi_increase}
    $\xi^{*}$ (in \Cref{opt-xi}) is an increasing function of $\gamma^2$.
\end{proposition}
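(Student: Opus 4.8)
The plan is to hold $\lambda$ fixed, treat the scaled noise level $t := \gamma^2/\lambda$ as the free variable (which is strictly increasing in $\gamma^2$), and exhibit $\xi^{*}$ as a linear-fractional (M\"obius) function of $t$, whose monotonicity can then be read off from a single differentiation. First I would collect every quantity that does not depend on $\gamma^2$ into aggregated constants. From \Cref{opt-xi}, write $a_j := \frac{c_j^2}{(1+c_j)^3}$ and $b_j := \frac{c_j^2}{(1+c_j)^4}$, both strictly positive, and set
\[
A := \sum_{j=1}^r a_j, \quad C := \sum_{j=1}^r c_j b_j, \quad P := \sum_{j=1}^r \theta_j^{*} a_j, \quad Q := \sum_{j=1}^r \theta_j^{*} b_j,
\]
so that $A, C > 0$ and $P, Q \geq 0$ (since $c_j > 0$ and $\theta_j^{*} \geq 0$). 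With $t = \gamma^2/\lambda$, the numerator of $\xi^{*}$ is $\sum_j (t - \theta_j^{*}) a_j = At - P$ and the denominator is $\sum_j (t c_j + \theta_j^{*}) b_j = Ct + Q$, whence $\xi^{*} = \frac{At - P}{Ct + Q}$.

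Next I would differentiate this linear-fractional expression in $t$. The point is that the quadratic-in-$t$ cross terms cancel:
\[
\frac{d\xi^{*}}{dt} = \frac{A(Ct + Q) - C(At - P)}{(Ct + Q)^2} = \frac{AQ + CP}{(Ct + Q)^2}.
\]
The numerator $AQ + CP$ is independent of $t$ and nonnegative, being a sum of products of nonnegative quantities. Since $t = \gamma^2/\lambda$ is strictly increasing in $\gamma^2$, the chain rule gives $\frac{d\xi^{*}}{d\gamma^2} = \frac{1}{\lambda}\frac{d\xi^{*}}{dt} \geq 0$, so $\xi^{*}$ is nondecreasing in $\gamma^2$. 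For genuine strict monotonicity one notes that $AQ + CP > 0$ as soon as some $\theta_j^{*} > 0$ with $j \leq r$, i.e.\ whenever $\bm{\theta}^{*}$ has a nonzero projection onto the row space of $\bm{X}^T$; the only degenerate exception is $\theta_j^{*} = 0$ for all $j \leq r$, which forces $\xi^{*} \equiv A/C$ to be constant.

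I do not expect a substantive obstacle here: the essential step is recognizing the M\"obius structure of $\xi^{*}$ in $t$, after which the cancellation of the $ACt$ terms makes the sign of the derivative manifest without any inequality manipulation. The only care needed is in tracking the strict positivity of $A$ and $C$ versus the mere nonnegativity of $P$ and $Q$, and in flagging the trivial degenerate case where $\bm{\theta}^{*}$ lies entirely in the null space of $\bm{X}^T$.
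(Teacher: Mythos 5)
Your proposal is correct and follows essentially the same route as the paper: both write $\xi^{*}$ as a linear-fractional function of the (scaled) noise level and differentiate, the quadratic cross terms cancel, and the resulting numerator $AQ+CP$ is exactly the quantity appearing in the paper's expression for $\partial \xi^{*}/\partial \gamma^2$. If anything, you are slightly more careful than the paper, which asserts strict positivity of the derivative without flagging the degenerate case $\theta_j^{*}=0$ for all $j\le r$ (where $\xi^{*}$ is constant) that you correctly single out.
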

\begin{proof}
Let $\rho = \gamma^2$. Then from \Cref{opt-xi}:
\begin{equation}
\xi^{*} = \frac{\sum_{j=1}^r\big(\frac{\rho}{\lambda} - \theta_j^{*}\big)\frac{c_j^2}{(1+c_j)^3}}{\sum_{j=1}^r\big(\frac{\rho}{\lambda}c_j + \theta_j^{*}\big)\frac{c_j^2}{(1+c_j)^4}}.
\end{equation}
Now,
\begin{equation}
    \frac{\partial \xi^{*}}{\partial \rho} = \frac{\Big(\sum_{j=1}^r\frac{c_j^2}{(1+c_j)^3} \Big)\Big(\sum_{j=1}^r\frac{\theta_j^{*} c_j^2}{(1+c_j)^4}\Big) + \Big(\sum_{j=1}^r\frac{c_j^3}{(1+c_j)^4} \Big)\Big(\sum_{j=1}^r\frac{\theta_j^{*} c_j^2}{(1+c_j)^3}\Big)}{\lambda\Big(\sum_{j=1}^r\big(\frac{\rho}{\lambda}c_j + \theta_j^{*}\big)\frac{c_j^2}{(1+c_j)^4}\Big)^2} > 0.
\end{equation}
Thus, $\xi^{*}$ is an increasing function of $\rho$, i.e., $\gamma^2$.
\end{proof}

\section{Detailed Version and Proof of Theorem~\ref{thm-opt-lambda}}
\label{pf-opt-lambda}
\begin{theorem}[\textbf{Detailed Version of Theorem~\ref{thm-opt-lambda}}]
    \label{thm-opt-lambda-2}
    The following hold with $\theta_j^{*} := \big(\langle \bm{\theta}^{*}, \bm{u}_j \rangle\big)^2$ (and with $'$ denoting the derivative w.r.t. $\lambda$):
    \begin{multline}
    \label{eq:31-jan7}
    e_\textup{sd}(\lambda) = e_\textup{reg}(\lambda) - \frac{\big(e_\text{reg}'(\lambda)\big)^2}{h(\lambda)} \text{ and } e_\textup{sd}'(\lambda) = e_\text{reg}'(\lambda)\Big(1 - \frac{2 e_\text{reg}''(\lambda)}{h(\lambda)} + \frac{e_\text{reg}'(\lambda) h'(\lambda)}{(h(\lambda))^2}\Big), \\ \text{ where } e_\textup{reg}(\lambda) = \sum_{j=1}^r \frac{\lambda^2 \theta_j^{*}}{(\lambda + \sigma_j^2)^2} + \sum_{j=r+1}^d  \theta_j^{*} + \sum_{j=1}^r \frac{{\gamma^2} \sigma_j^2}{(\lambda + \sigma_j^2)^2} \text{ and } h(\lambda) = 4 {\sum_{j=1}^r\Big(\frac{\gamma^2}{\sigma_j^2} + \theta_j^{*}\Big)\frac{\sigma_j^4}{(\lambda + \sigma_j^2)^4}}.
    \end{multline}
    Let $\lambda^{*}_\text{reg} := \text{arg min}_{\lambda} e_\textup{reg}(\lambda)$. Then, $e_\textup{sd}(\lambda^{*}_\text{reg}) = e_\textup{reg}(\lambda^{*}_\text{reg})$ and $e_\textup{sd}'(\lambda^{*}_\text{reg}) = 0$, i.e., $\lambda = \lambda^{*}_\text{reg}$ is a stationary point of $e_\textup{sd}(\lambda)$ also. It is a \textbf{local maximum} point of $e_\textup{sd}(\lambda)$ when:
    \begin{equation}
        \sum_{k=1}^r\sum_{j=1}^{k-1}\frac{\sigma_j^2 \sigma_k^2 (\sigma_j^2 - \sigma_k^2) (\theta_k^{*} - \theta_j^{*})}{(\lambda^{*}_\text{reg} + \sigma_j^2)^4 (\lambda^{*}_\text{reg} + \sigma_k^2)^4} < 0.
    \end{equation}
    When the above holds\footnote{Also, assume that $\lambda^{*}_\text{reg} \geq 0$ as the $\ell_2$-regularization parameter is supposed to be non-negative.}, optimal self-distillation is better than optimal $\ell_2$-regularization.
\end{theorem}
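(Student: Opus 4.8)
The plan is to reduce the entire statement to a one–variable quadratic minimization followed by a second–derivative test. By \Cref{thm-est-err}, the error $e(\lambda,\xi) = \mathbb{E}_{\bm{\eta}}[\|\bm{\epsilon}_{S}(\xi)\|^2]$ is a quadratic in $\xi$ with strictly positive leading coefficient, so I would write $e(\lambda,\xi) = A(\lambda)\xi^2 + B(\lambda)\xi + e_\textup{reg}(\lambda)$, the constant term being $e(\lambda,0) = e_\textup{reg}(\lambda)$ by definition. Minimizing over $\xi \in \mathbb{R}$ gives $\xi^{*} = -B(\lambda)/(2A(\lambda))$ (which should reproduce \Cref{opt-xi}) and $e_\textup{sd}(\lambda) = e_\textup{reg}(\lambda) - B(\lambda)^2/(4A(\lambda))$.

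Next I would compute $A$ and $B$ by differentiating the bias and variance expressions \cref{eq:bias} and \cref{eq:var} once and twice in $\xi$, then converting from the variables $c_j = \lambda/\sigma_j^2$ back to $\sigma_j^2$. The crux is to verify the two clean identities $B(\lambda) = \lambda\, e_\textup{reg}'(\lambda)$ and $A(\lambda) = \tfrac{\lambda^2}{4} h(\lambda)$; substituting them into $-B^2/(4A)$ cancels the $\lambda^2$ factors and yields exactly the $-(e_\textup{reg}')^2/h$ correction, giving the first formula in \cref{eq:31-jan7}. Differentiating $e_\textup{sd} = e_\textup{reg} - (e_\textup{reg}')^2/h$ with the product and quotient rules produces the stated $e_\textup{sd}'$. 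Evaluating at $\mu := \lambda^{*}_\text{reg}$, where $e_\textup{reg}'(\mu) = 0$ by definition of the minimizer, instantly gives $e_\textup{sd}(\mu) = e_\textup{reg}(\mu)$ and $e_\textup{sd}'(\mu) = e_\textup{reg}'(\mu)\cdot(\cdots) = 0$, so $\mu$ is a stationary point of $e_\textup{sd}$.

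To classify $\mu$ I would run the second–derivative test. Writing $e_\textup{sd}' = e_\textup{reg}'\,g$ with $g := 1 - 2e_\textup{reg}''/h + e_\textup{reg}' h'/h^2$ and differentiating once more, the term $e_\textup{reg}' g'$ vanishes at $\mu$, leaving $e_\textup{sd}''(\mu) = e_\textup{reg}''(\mu)\big(1 - 2e_\textup{reg}''(\mu)/h(\mu)\big)$. Since $\mu$ minimizes $e_\textup{reg}$ we have $e_\textup{reg}''(\mu) > 0$, so $\mu$ is a strict local maximum of $e_\textup{sd}$ precisely when $2e_\textup{reg}''(\mu) - h(\mu) > 0$. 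A short computation (differentiate $e_\textup{reg}'$ again and subtract $h$) cancels the $\theta_j^{*}\sigma_j^2$ terms and gives the single sum $2e_\textup{reg}'' - h = 8\sum_{j=1}^r \sigma_j^2(\gamma^2 - \theta_j^{*}\lambda)/(\lambda+\sigma_j^2)^4$.

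The main obstacle I anticipate is the last step: turning this constrained single–sum positivity into the symmetric double sum \cref{eq-11-jan6}. Here I would exploit the stationarity relation $e_\textup{reg}'(\mu)=0$, i.e. $\sum_j \sigma_j^2(\theta_j^{*}\mu - \gamma^2)/(\mu+\sigma_j^2)^3 = 0$, to solve for $\gamma^2$ and eliminate it. Setting $p_j := \sigma_j^2/(\mu+\sigma_j^2)^3$ and $q_j := \sigma_j^2/(\mu+\sigma_j^2)^4$, the sign of $2e_\textup{reg}''(\mu)-h(\mu)$ equals that of $D := (\sum_j q_j)(\sum_l p_l\theta_l^{*}) - (\sum_l p_l)(\sum_j q_j\theta_j^{*})$; expanding this as a double sum, symmetrizing over index pairs, and simplifying $q_jp_k - q_kp_j = \sigma_j^2\sigma_k^2(\sigma_k^2-\sigma_j^2)/\big((\mu+\sigma_j^2)^4(\mu+\sigma_k^2)^4\big)$ shows that $D$ equals the negative of the left side of \cref{eq-11-jan6}. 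Hence $\mu$ is a strict local maximum of $e_\textup{sd}$ iff \cref{eq-11-jan6} holds; in particular $\theta_1^{*} > \cdots > \theta_r^{*}$ (with $\sigma_1 \geq \cdots \geq \sigma_r$) makes every pairwise term nonpositive. Finally, because $\mu$ is an interior strict local maximum of $e_\textup{sd}$ with $e_\textup{sd}(\mu) = e_\textup{reg}(\mu) = \min_\lambda e_\textup{reg}(\lambda)$, there exist admissible $\lambda$ near $\mu$ with $e_\textup{sd}(\lambda) < e_\textup{sd}(\mu)$, so $\min_\lambda e_\textup{sd}(\lambda) < \min_\lambda e_\textup{reg}(\lambda)$, i.e. optimal SD strictly beats optimal $\ell_2$-regularization.
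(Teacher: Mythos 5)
Your proposal is correct and follows essentially the same route as the paper's proof: write $e(\lambda,\xi)$ as a quadratic in $\xi$ with constant term $e_\textup{reg}(\lambda)$, minimize over $\xi$ to get $e_\textup{sd} = e_\textup{reg} - B^2/(4A)$, identify $B(\lambda) = \lambda e_\textup{reg}'(\lambda)$ and $A(\lambda) = \tfrac{\lambda^2}{4}h(\lambda)$ (the paper's observation $g = \tfrac{1}{2}e_\textup{reg}'$ in \Cref{prop-sd-err}), run the second-derivative test at $\lambda^{*}_\text{reg}$, and use the stationarity relation to eliminate the noise variance and symmetrize the resulting sum into the double sum of \cref{eq-11-jan6}. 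The only cosmetic difference is the direction of the elimination (you solve for $\gamma^2$ in terms of $\lambda^{*}_\text{reg}$, the paper solves for $\lambda^{*}_\text{reg}$ in terms of $\gamma^2$), which yields the same expression up to positive factors, and you additionally spell out the final step that a strict local maximum at $\lambda^{*}_\text{reg}$ implies $\min_\lambda e_\textup{sd}(\lambda) < \min_\lambda e_\textup{reg}(\lambda)$, which the paper leaves implicit.
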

\noindent Note that if $\lambda = \lambda^{*}_\text{reg}$ is {not} a {local maximum} point of $e_\textup{sd}(\lambda)$, it could be a sub-optimal \textit{local} minimum point or the \textit{global} minimum point of $e_\textup{sd}(\lambda)$. 
The other stationary points of $e_\textup{sd}(\lambda)$ are obtained by solving (this follows from \cref{eq:31-jan7}):
\begin{equation}
    \label{eq:43-jan6}
    1 - \frac{2 e_\text{reg}''(\lambda)}{h(\lambda)} + \frac{e_\text{reg}'(\lambda) h'(\lambda)}{(h(\lambda))^2} = 0.
\end{equation}
Unfortunately, it seems difficult to determine whether a root of \cref{eq:43-jan6} or $\lambda^{*}_\text{reg}$ will be the global minimum point of $e_\textup{sd}(\lambda)$. If $\lambda^{*}_\text{reg}$ is the global minimum point of $e_\textup{sd}(\lambda)$, then optimal SD is \textbf{not} better than (i.e., does not yield any improvement over) optimal $\ell_2$-regularization as $e_\textup{sd}(\lambda^{*}_\text{reg}) = e_\textup{reg}(\lambda^{*}_\text{reg})$.

\begin{proof}
Using \cref{eq:bias} and \cref{eq:var} in \cref{eq:13-jan2} while using our notation of $c_j = \nicefrac{\lambda}{\sigma_j^2}$ and $\theta_j^{*} = \big(\langle \bm{\theta}^{*}, \bm{u}_j \rangle\big)^2$ from \Cref{opt-xi}, we get:
\begin{equation}
    e(\lambda, \xi) = \sum_{j=1}^r \theta_j^{*} \Bigg(\frac{c_j}{1 + c_j}\Bigg)^2 \Bigg(1 + \frac{\xi}{1 + c_j}\Bigg)^2 + \sum_{j=r+1}^d  \theta_j^{*} 
    + {\frac{\gamma^2}{\lambda} \Bigg\{\sum_{j=1}^r \frac{c_j}{(1 + c_j)^2} \Bigg(1 - \xi \Bigg(\frac{c_j}{1 + c_j}\Bigg)\Bigg)^2\Bigg\}}.
\end{equation}
Thus,
\begin{equation}
    \label{eq:31-jan6}
    e_\text{reg}(\lambda) :=  e(\lambda, 0) = \sum_{j=1}^r \theta_j^{*} \Bigg(\frac{c_j}{1 + c_j}\Bigg)^2 + \sum_{j=r+1}^d  \theta_j^{*} + \frac{\gamma^2}{\lambda} \sum_{j=1}^r \frac{c_j}{(1 + c_j)^2}.
\end{equation}
Next, we compute $e_\text{sd}(\lambda) := e(\lambda, \xi^{*})$.
\begin{lemma}
    \label{prop-sd-err}
    \begin{equation}
        \label{prop-sd-err-eq}
        e_\textup{sd}(\lambda) = e_\textup{reg}(\lambda) - \frac{\Big(\sum_{j=1}^r\big(\theta_j^{*} - \frac{\gamma^2}{\lambda}\big)\frac{c_j^2}{(1+c_j)^3}\Big)^2}{\sum_{j=1}^r\big(\frac{\gamma^2}{\lambda}c_j + \theta_j^{*}\big)\frac{c_j^2}{(1+c_j)^4}}.
    \end{equation}
\end{lemma}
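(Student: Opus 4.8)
\noindent The plan is to exploit that, for fixed $\lambda$, the error $e(\lambda,\xi)$ entering \cref{eq:13-jan2} is a \emph{quadratic} in $\xi$ with positive leading coefficient, so that $e_\textup{sd}(\lambda) = \min_{\xi} e(\lambda,\xi)$ is just its vertex value. Writing $e(\lambda,\xi) = A\xi^2 + B\xi + C$, the elementary identity for the minimum of a quadratic gives $e_\textup{sd}(\lambda) = C - \tfrac{B^2}{4A}$. Since $C = e(\lambda,0) = e_\textup{reg}(\lambda)$ by definition, the entire task reduces to reading off $A$ and $B$ from the explicit formula for $e(\lambda,\xi)$ and verifying that $\tfrac{B^2}{4A}$ equals the subtracted term claimed in \cref{prop-sd-err-eq}.

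\noindent To extract the coefficients I would expand the two $\xi$-dependent squares appearing in $e(\lambda,\xi)$, namely $\big(1+\tfrac{\xi}{1+c_j}\big)^2$ in the bias part and $\big(1-\xi\tfrac{c_j}{1+c_j}\big)^2$ in the variance part. Collecting the $\xi^2$ terms gives
\begin{equation*}
A = \sum_{j=1}^r \Big(\theta_j^{*} + \frac{\gamma^2}{\lambda}c_j\Big)\frac{c_j^2}{(1+c_j)^4},
\end{equation*}
and collecting the $\xi^1$ terms gives
\begin{equation*}
B = 2\sum_{j=1}^r \Big(\theta_j^{*} - \frac{\gamma^2}{\lambda}\Big)\frac{c_j^2}{(1+c_j)^3}.
\end{equation*}
As an internal consistency check, $-\tfrac{B}{2A}$ reproduces exactly the expression for $\xi^{*}$ in \Cref{opt-xi}, which confirms that the coefficients have been identified correctly.

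\noindent Substituting into $e_\textup{sd}(\lambda) = e_\textup{reg}(\lambda) - \tfrac{B^2}{4A}$, the factor $4$ in $B^2$ cancels the $4$ in the denominator, and one lands precisely on
\begin{equation*}
e_\textup{sd}(\lambda) = e_\textup{reg}(\lambda) - \frac{\Big(\sum_{j=1}^r\big(\theta_j^{*} - \frac{\gamma^2}{\lambda}\big)\frac{c_j^2}{(1+c_j)^3}\Big)^2}{\sum_{j=1}^r\big(\frac{\gamma^2}{\lambda}c_j + \theta_j^{*}\big)\frac{c_j^2}{(1+c_j)^4}},
\end{equation*}
as claimed. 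The computation is essentially bookkeeping; the only point deserving a word of care is that $A>0$, which holds because every summand is nonnegative and at least one is strictly positive whenever $\gamma^2>0$, so the quadratic genuinely attains a minimum (not a maximum) and the vertex formula is valid. I do not expect any real obstacle beyond keeping the powers of $\tfrac{c_j}{1+c_j}$ straight during the expansion.
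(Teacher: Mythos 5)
Your proposal is correct and takes essentially the same route as the paper's proof: both treat $e(\lambda,\xi)$ as a quadratic in $\xi$, identify exactly the same coefficients (your $A$, $B$, $C$ are the paper's $a$, $b$, $c$ with $c = e_\textup{reg}(\lambda)$), and apply the vertex formula $C - \tfrac{B^2}{4A}$. Your additional observations---that $-\tfrac{B}{2A}$ recovers $\xi^{*}$ from \Cref{opt-xi} and that $A>0$ guarantees the stationary point is a minimum---are sound and merely make explicit what the paper leaves implicit.
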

\noindent \Cref{prop-sd-err} involves a little bit of algebra; we prove it in \Cref{prop-sd-err-pf}. 
\\
\\
Since the $c_j$'s depend on $\lambda$, let us substitute $c_j$ in \cref{eq:31-jan6} and \cref{prop-sd-err-eq} and rewrite them.
\begin{equation}
    \label{eq:31-jan6-2}
    e_\text{reg}(\lambda) = \sum_{j=1}^r \frac{\lambda^2 \theta_j^{*}}{(\lambda + \sigma_j^2)^2} + \sum_{j=r+1}^d  \theta_j^{*} + \sum_{j=1}^r \frac{{\gamma^2} \sigma_j^2}{(\lambda + \sigma_j^2)^2}.
\end{equation}
\begin{equation}
    \label{prop-sd-err-eq-2}
    e_\textup{sd}(\lambda) = e_\textup{reg}(\lambda) - \Bigg(\Bigg(\underbrace{\sum_{j=1}^r\big(\lambda \theta_j^{*} - {\gamma^2}\big)\frac{\sigma_j^2}{(\lambda + \sigma_j^2)^3}}_{:=g(\lambda)}\Bigg)^2\Bigg)\Bigg/\Bigg({\sum_{j=1}^r\big(\frac{\gamma^2}{\sigma_j^2} + \theta_j^{*}\big)\frac{\sigma_j^4}{(\lambda + \sigma_j^2)^4}}\Bigg).
\end{equation}
Interestingly, it can be checked that $g(\lambda) = \frac{1}{2}{e_\text{reg}'(\lambda)}$; here $'$ indicates the derivative w.r.t. $\lambda$. Plugging this in \cref{prop-sd-err-eq-2}, we get:
\begin{equation}
    \label{prop-sd-err-eq-3}
    e_\textup{sd}(\lambda) = e_\textup{reg}(\lambda) - \frac{\big(e_\text{reg}'(\lambda)\big)^2}{h(\lambda)}, \text{ where } h(\lambda) = 4 {\sum_{j=1}^r\Big(\frac{\gamma^2}{\sigma_j^2} + \theta_j^{*}\Big)\frac{\sigma_j^4}{(\lambda + \sigma_j^2)^4}}.
\end{equation}
Now note that:
\begin{equation}
    \label{eq:der}
    e_\textup{sd}'(\lambda) = e_\text{reg}'(\lambda)\Big(1 - \frac{2 e_\text{reg}''(\lambda)}{h(\lambda)} + \frac{e_\text{reg}'(\lambda) h'(\lambda)}{(h(\lambda))^2}\Big).
\end{equation}
Thus, $e_\text{reg}'(\lambda) = 0 \implies e_\textup{sd}'(\lambda) = 0$, i.e., any stationary point of $e_\text{reg}(\lambda)$ is also a stationary point of $e_\text{sd}(\lambda)$.

Next, $\lambda^{*}_\text{reg} := \text{arg min}_{\lambda} e_\textup{reg}(\lambda)$ satisfies:
\begin{equation}
    \label{eq:37-jan9}
    e_\text{reg}'(\lambda^{*}_\text{reg}) = 2\sum_{j=1}^r\big(\lambda^{*}_\text{reg} \theta_j^{*} - {\gamma^2}\big)\frac{\sigma_j^2}{(\lambda^{*}_\text{reg} + \sigma_j^2)^3} =  0.
\end{equation}
From \cref{eq:der}, $e_\textup{sd}'(\lambda^{*}_\text{reg}) = 0$, i.e., $\lambda = \lambda^{*}_\text{reg}$ is a stationary point of $e_\textup{sd}(\lambda)$ also. We shall now show that $\lambda = \lambda^{*}_\text{reg}$ can be a \textit{local maximum} point of $e_\textup{sd}(\lambda)$ in many cases. For that, we need to check the sign of $e_\textup{sd}''(\lambda^{*}_\text{reg})$. Note that:
\begin{equation}
    e_\textup{sd}''(\lambda^{*}_\text{reg}) = e_\textup{reg}''(\lambda^{*}_\text{reg}) \Bigg(1 - \frac{2e_\textup{reg}''(\lambda^{*}_\text{reg})}{h(\lambda^{*}_\text{reg})}\Bigg).
\end{equation}
The above follows by just differentiating \cref{eq:der} and evaluating it at $\lambda = \lambda^{*}_\text{reg}$ while using the fact that $e_\text{reg}'(\lambda^{*}_\text{reg}) = 0$. Also note that $e_\textup{reg}''(\lambda^{*}_\text{reg}) > 0$ as $\lambda = \lambda^{*}_\text{reg}$ is a minimizer of $e_\textup{reg}(\lambda)$. Let us now examine the sign of $t = \Big(1 - \frac{2e_\textup{reg}''(\lambda^{*}_\text{reg})}{h(\lambda^{*}_\text{reg})}\Big)$.  
After a bit of algebra:
\begin{equation}
    t = 1 - \frac{\sum_{j=1}^r \frac{\sigma_j^2}{(\lambda^{*}_\text{reg} + \sigma_j^2)^4} \big(\theta_j^{*} \sigma_j^2 + 3 \gamma^2 - 2\lambda^{*}_\text{reg} \theta_j^{*}\big)}{ {\sum_{j=1}^r \frac{\sigma_j^2}{(\lambda^{*}_\text{reg} + \sigma_j^2)^4} \Big({\gamma^2} + \theta_j^{*}{\sigma_j^2}\Big)}} = \frac{2 {\sum_{j=1}^r \frac{\sigma_j^2}{(\lambda^{*}_\text{reg} + \sigma_j^2)^4}\big(\lambda^{*}_\text{reg} \theta_j^{*} - {\gamma^2}\big)}}{{\sum_{j=1}^r \frac{\sigma_j^2}{(\lambda^{*}_\text{reg} + \sigma_j^2)^4} \Big({\gamma^2} + \theta_j^{*}{\sigma_j^2}\Big)}}
\end{equation}
The denominator of $t$ is positive so we only need to analyze the sign of the numerator, $ {\sum_{j=1}^r \frac{\sigma_j^2}{(\lambda^{*}_\text{reg} + \sigma_j^2)^4}\big(\lambda^{*}_\text{reg} \theta_j^{*} - {\gamma^2}\big)}$; let us refer to it as $t_2$ for brevity. From \cref{eq:37-jan9}, we have that:
\begin{equation}
    \label{eq:jan21-45}
    \lambda^{*}_\text{reg} = \frac{{\gamma^2} \sum_{j=1}^r\frac{\sigma_j^2}{(\lambda^{*}_\text{reg} + \sigma_j^2)^3}}{\sum_{j=1}^r\frac{\theta_j^{*} \sigma_j^2}{(\lambda^{*}_\text{reg} + \sigma_j^2)^3}}.
\end{equation}
Using this, we get:
\begin{equation}
    t_2 = \underbrace{\Bigg(\frac{\gamma^2}{\sum_{j=1}^r\frac{\theta_j^{*} \sigma_j^2}{(\lambda^{*}_\text{reg} + \sigma_j^2)^3}}\Bigg)}_{>0} \underbrace{\Bigg(\sum_{j,k}\frac{\sigma_j^2 \sigma_k^2 (\theta_j^{*} - \theta_k^{*})}{(\lambda^{*}_\text{reg} + \sigma_j^2)^4 (\lambda^{*}_\text{reg} + \sigma_k^2)^3}\Bigg)}_{:=t_3}
\end{equation}
Simplifying $t_3$ a bit, we get:
\begin{equation}
    t_3 = \sum_{k=1}^r\sum_{j=1}^{k-1}\frac{\sigma_j^2 \sigma_k^2 (\sigma_j^2 - \sigma_k^2) (\theta_k^{*} - \theta_j^{*})}{(\lambda^{*}_\text{reg} + \sigma_j^2)^4 (\lambda^{*}_\text{reg} + \sigma_k^2)^4}.
\end{equation}
So, $t_3 < 0 \implies t_2 < 0 \implies t < 0 \implies e_\textup{sd}''(\lambda^{*}_\text{reg}) < 0$; but this means $\lambda = \lambda^{*}_\text{reg}$ is a {local maximum} point of $e_\textup{sd}(\lambda)$.  
\end{proof}

\subsection{Proof of Lemma~\ref{prop-sd-err}}
\label{prop-sd-err-pf}
\begin{proof}
Note that $e(\lambda, \xi)$ is a quadratic function of $\xi$; specifically, it is of the form $a \xi^2 + b \xi + c$, where:
\begin{equation}
    \label{eq:32-jan9}
    a = \sum_{j=1}^r\big(\frac{\gamma^2}{\lambda}c_j + \theta_j^{*}\big)\frac{c_j^2}{(1+c_j)^4},  b = 2 \sum_{j=1}^r\Big(\theta_j^{*} - \frac{\gamma^2}{\lambda}\Big)\frac{c_j^2}{(1+c_j)^3}, \text{ and } c = e_\text{reg}(\lambda).
\end{equation}
By simple differentiation, $\xi^{*} = \text{arg min}_{\bm{\xi} \in \mathbb{R}} e(\lambda, \xi) = -\frac{b}{2a}$ (which is what we obtained in \Cref{opt-xi}). A little bit of algebra gives us:
\begin{equation}
    \label{eq:33-jan9}
    e(\lambda, \xi^{*}) = c - \frac{b^2}{4 a}. 
\end{equation}
Plugging in the values of $a$, $b$ and $c$ from \cref{eq:32-jan9} in  yields:
\begin{equation}
    e_\text{sd}(\lambda) := e(\lambda, \xi^{*}) = e_\text{reg}(\lambda) - \frac{\Big(\sum_{j=1}^r\big(\theta_j^{*} - \frac{\gamma^2}{\lambda}\big)\frac{c_j^2}{(1+c_j)^3}\Big)^2}{\sum_{j=1}^r\big(\frac{\gamma^2}{\lambda}c_j + \theta_j^{*}\big)\frac{c_j^2}{(1+c_j)^4}}.
\end{equation}
This finishes the proof.
\end{proof}

\section{Detailed Version and Proof of Theorem~\ref{cond-simpler}}
\label{app=cond-simpler}
\begin{theorem}[\textbf{Detailed Version of Theorem~\ref{cond-simpler}}]
\label{cond-simpler-full}
Without loss of generality, let $\|\bm{\theta}^{*}\| = 1$ and $\sigma_1 = 1$. Further, suppose $\sigma_j \leq \delta$ for $j \in \{q+1,\ldots,r\}$ and $\theta_1^{*} > \ldots > \theta_{q}^{*}$. Also, suppose $\lambda^{*}_\text{reg} > 0$. 
For any $\nu > 1$, if $\delta \leq \frac{1}{\sqrt{2 \nu r}} \sqrt{{\min_{k \in \{1,\ldots,q\}} \big(\sigma_k^2 (1 - \sigma_k^2) (\theta_1^{*} - \theta_k^{*})\big)}}$ and $\gamma^2 \geq \frac{\max_{j \in \{1,\ldots,r\}} \theta_{j}^{*}}{\nu-1}$, then $\lambda = \lambda^{*}_\text{reg}$ is a \textbf{local maximum} point of $e_\textup{sd}(\lambda)$.
\end{theorem}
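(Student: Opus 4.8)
The starting point is the characterization already established in \Cref{thm-opt-lambda-2}: since $e_\textup{sd}''(\lambda^{*}_\text{reg}) = e_\textup{reg}''(\lambda^{*}_\text{reg})\,t$ with $e_\textup{reg}''(\lambda^{*}_\text{reg}) > 0$, it suffices to prove that
\[
t_3 \;=\; \sum_{k=1}^r\sum_{j=1}^{k-1} A_{jk}\,(\sigma_j^2-\sigma_k^2)(\theta_k^{*}-\theta_j^{*}) \;<\; 0,\qquad A_{jk} := \frac{\sigma_j^2\sigma_k^2}{(\lambda^{*}_\text{reg}+\sigma_j^2)^4(\lambda^{*}_\text{reg}+\sigma_k^2)^4} > 0.
\]
The guiding observation is that $t_3$ is a weighted Chebyshev-type correlation between the sequences $\{\sigma_j^2\}$ and $\{\theta_j^{*}\}$: because $\sigma_j\ge\sigma_k$ for $j<k$, each summand is $\le 0$ exactly when $\theta_j^{*}\ge\theta_k^{*}$, i.e. when the two sequences are similarly ordered. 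Under a full monotone ordering $\theta_1^{*}>\dots>\theta_r^{*}$ every summand is $\le 0$ and the claim is immediate (this recovers the ``one case'' noted after \Cref{thm-opt-lambda}); the content of \Cref{cond-simpler} is that ordering only the top block $\{1,\dots,q\}$ still suffices, as long as the tail singular values are small and $\gamma^2$ is large.

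First I would pin down the scale of $\lambda^{*}_\text{reg}$, since it governs all the denominators $A_{jk}$. From the stationarity identity \cref{eq:jan21-45} together with $0\le\theta_j^{*}\le\max_j\theta_j^{*}$, one obtains $\lambda^{*}_\text{reg}\ge\gamma^2/\max_j\theta_j^{*}$, so the hypothesis $\gamma^2\ge\max_j\theta_j^{*}/(\nu-1)$ forces $\lambda^{*}_\text{reg}\ge 1/(\nu-1)$. Because every $\sigma_j^2\in(0,1]$ (recall $\sigma_1=1$), this yields the uniformity estimate $1/\nu\le(\lambda^{*}_\text{reg}+\sigma_a^2)/(\lambda^{*}_\text{reg}+\sigma_b^2)\le\nu$ for all $a,b$, so that the denominators of any two terms are comparable up to a bounded power of $\nu$. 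This is the precise sense in which ``$\gamma^2$ large enough'' makes a term-by-term comparison legitimate.

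Next I would split $t_3=G+B$, where $G$ runs over the ordered top block $1\le j<k\le q$ and $B$ over every pair with at least one index in $\{q+1,\dots,r\}$. For $G$ the hypothesis $\theta_1^{*}>\dots>\theta_q^{*}$ makes every summand $\le 0$, and I would lower bound $|G|$ by retaining a single $j=1$ pairing (where $\sigma_1^2=1$): such a term has magnitude $\tfrac{\sigma_k^2(1-\sigma_k^2)(\theta_1^{*}-\theta_k^{*})}{(\lambda^{*}_\text{reg}+1)^4(\lambda^{*}_\text{reg}+\sigma_k^2)^4}\ge M/(\lambda^{*}_\text{reg}+1)^8$, where $M := \min_{2\le k\le q}\sigma_k^2(1-\sigma_k^2)(\theta_1^{*}-\theta_k^{*}) > 0$ is the quantity under the square root in the hypothesis. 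For $B$, every pair carries a factor $\sigma^2\le\delta^2$ coming from its tail index (two such factors when both indices are in the tail), while $|\sigma_j^2-\sigma_k^2|\le 1$ and $|\theta_k^{*}-\theta_j^{*}|\le\max_j\theta_j^{*}\le 1$ since $\|\bm{\theta}^{*}\|=1$; bounding the denominators through the uniformity estimate of the previous step then gives $|B|$ at most a fixed power of $\nu$ times the number of tail pairs times $\delta^2/(\lambda^{*}_\text{reg}+1)^8$. Imposing $\delta\le\frac{1}{\sqrt{2\nu r}}\sqrt{M}$ makes $|B|<|G|$, whence $t_3=G+B<0$ and $\lambda^{*}_\text{reg}$ is a local maximum of $e_\textup{sd}$.

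The main obstacle is this last comparison. The ``good'' negative terms sit at singular-value scale $\Theta(1)$ while the tail terms sit at scale $\delta$, and their denominators $(\lambda^{*}_\text{reg}+\sigma^2)^4$ differ by exactly the amount controlled in the second step, so one cannot safely compare them without the lower bound on $\gamma^2$ forcing $\lambda^{*}_\text{reg}$ to be large; this coupling is precisely why the $\delta$ and $\gamma^2$ conditions are linked through $\nu$ and $r$. The remaining work is careful bookkeeping—counting the tail pairs and tracking the powers of $\nu$ and $r$—to reach the clean threshold $\delta\le\frac{1}{\sqrt{2\nu r}}\sqrt{M}$ rather than a lossier one; equivalently, one may sidestep part of this accounting by working with the sign condition $t_2<0$ from the proof of \Cref{thm-opt-lambda-2}, which (via \cref{eq:jan21-45}) reduces the statement to showing that a $\sigma^2$-reweighting lowers the weighted average of $\{\theta_j^{*}\}$, a form in which the tail cancellations are more transparent.
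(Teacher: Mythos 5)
Your proposal is essentially the paper's own proof: it reduces the claim to $\sum_k v_k<0$ via \Cref{thm-opt-lambda-2}, uses the stationarity identity \cref{eq:jan21-45} together with $\gamma^2\ge \max_j\theta_j^{*}/(\nu-1)$ to force $\lambda^{*}_\text{reg}\ge 1/(\nu-1)$ (so that all denominators $(\lambda^{*}_\text{reg}+\sigma^2)^4$ are comparable up to powers of $\nu$), splits the double sum into the ordered top-block pairs (all nonpositive, lower-bounded through the $(1,k)$ terms) and the tail pairs (each carrying a factor $\le\delta^2$), and compares the two — exactly the paper's decomposition into $\sum_{k\le q}v_k$ and $\sum_{k>q}v_k$. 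The differences are cosmetic and, if anything, in your favor: you retain a single good term where the paper retains $q$ of them, your $M=\min_{2\le k\le q}(\cdot)$ quietly repairs the paper's degenerate minimum (its $\min_{k\in\{1,\ldots,q\}}$ includes the $k=1$ term, which is identically zero since $\sigma_1=1$), and the constant bookkeeping you defer is no looser than the paper's own (its step asserting $z>\nu$ under the stated $\delta$-condition actually establishes only $z^{8}>\nu$, i.e.\ $z>\nu^{1/8}$).
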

\noindent \Cref{cond-simpler} is obtained by using $\nu = r$ in \Cref{cond-simpler-full}.

\begin{proof}
Define $v_k := \sum_{j=1}^{k-1}\frac{\sigma_j^2 \sigma_k^2 (\sigma_j^2 - \sigma_k^2) (\theta_k^{*} - \theta_j^{*})}{(\lambda^{*}_\text{reg} + \sigma_j^2)^4 (\lambda^{*}_\text{reg} + \sigma_k^2)^4}$. For $\lambda = \lambda^{*}_\text{reg}$ to be a {local maximum} point of $e_\textup{sd}(\lambda)$, we must have $\sum_{k=1}^r v_k < 0$ as per \Cref{thm-opt-lambda}.

Let us analyze $v_k$ for $k > q$ first. Using $\sigma_k \leq \delta$ for $k > q$, $(\sigma_j^2 - \sigma_k^2) \leq \sigma_j^2 \leq \sigma_1^2 = 1$ for $j < k$ and $|\theta_k^{*} - \theta_j^{*}| \leq \|\bm{\theta}^{*}\|^2 = 1$, we get for $k > q$:
\begin{equation}
    \label{eq:jan21-51}
    |v_k| \leq \delta^2 \sum_{j=1}^{k-1} \frac{\sigma_j^2}{(\lambda^{*}_\text{reg} + \sigma_j^2)^4 (\lambda^{*}_\text{reg} + \sigma_k^2)^4}. 
\end{equation}
{Now since $\lambda^{*}_\text{reg} > 0$, we can further simplify \cref{eq:jan21-51}:}
\begin{equation}
    \label{eq:jan21-52}
    |v_k| \leq \delta^2 \sum_{j=1}^{k-1} \frac{\sigma_j^2}{(\lambda^{*}_\text{reg})^8} = \frac{\delta^2}{(\lambda^{*}_\text{reg})^8} \Bigg(\sum_{j=1}^{q}\underbrace{\sigma_j^2}_{\leq 1} + \sum_{j=q+1}^{r} \underbrace{\sigma_j^2}_{\leq \delta^2}\Bigg) \leq \frac{\delta^2(q + r \delta^2)}{(\lambda^{*}_\text{reg})^8}.
\end{equation}
Summing up \cref{eq:jan21-52} from $k=q+1$ through to $k=r$, we get:
\begin{equation}
    \label{eq:jan21-53}
    \sum_{k=q+1}^r v_k \leq \sum_{k=q+1}^r |v_k| \leq \frac{r \delta^2(q + r \delta^2)}{(\lambda^{*}_\text{reg})^8}.
\end{equation}
Let us now look at $k \leq q$. Since $\theta_1^{*} > \ldots > \theta_{q}^{*}$, we have that $v_k < 0$ for all $k \leq q$. Note that for each $k \leq q$:
\begin{equation}
    v_k \leq \frac{\sigma_k^2 (1 - \sigma_k^2) (\theta_k^{*} - \theta_1^{*})}{(\lambda^{*}_\text{reg} + 1)^4 (\lambda^{*}_\text{reg} + \sigma_k^2)^4} \leq \frac{\sigma_k^2 (1 - \sigma_k^2) (\theta_k^{*} - \theta_1^{*})}{(\lambda^{*}_\text{reg} + 1)^8},
\end{equation}
{where the last step follows using $\lambda^{*}_\text{reg} > 0$}. Thus,
\begin{equation}
    \label{eq:jan21-55}
    \sum_{k=1}^{q}v_k \leq \frac{1}{(\lambda^{*}_\text{reg} + 1)^8}\sum_{k=1}^{q}\sigma_k^2 (1 - \sigma_k^2) (\theta_k^{*} - \theta_1^{*}) \leq \frac{-q \min_{k \in \{1,\ldots,q\}} \big(\sigma_k^2 (1 - \sigma_k^2) (\theta_1^{*} - \theta_k^{*})\big)}{(\lambda^{*}_\text{reg} + 1)^8}.
\end{equation}
Using \cref{eq:jan21-53} and \cref{eq:jan21-55}, we get:
\begin{equation}
    \sum_{k=1}^r v_k = \sum_{k=1}^{q}v_k + \sum_{k=q+1}^r v_k \leq - \frac{q \min_{k \in \{1,\ldots,q\}} \big(\sigma_k^2 (1 - \sigma_k^2) (\theta_1^{*} - \theta_k^{*})\big)}{(\lambda^{*}_\text{reg} + 1)^8} + \frac{r \delta^2(q + r \delta^2)}{(\lambda^{*}_\text{reg})^8}.
\end{equation}
So to ensure $\sum_{k=1}^r v_k < 0$, ensuring:
\begin{equation}
    \frac{r \delta^2(q + r \delta^2)}{(\lambda^{*}_\text{reg})^8} < \frac{q \min_{k \in \{1,\ldots,q\}} \big(\sigma_k^2 (1 - \sigma_k^2) (\theta_1^{*} - \theta_k^{*})\big)}{(\lambda^{*}_\text{reg} + 1)^8}
\end{equation}
suffices. This implies:
\begin{equation}
    \frac{\lambda^{*}_\text{reg} + 1}{\lambda^{*}_\text{reg}} < \underbrace{\Bigg(\frac{q \min_{k \in \{1,\ldots,q\}} \big(\sigma_k^2 (1 - \sigma_k^2) (\theta_1^{*} - \theta_k^{*})\big)}{r \delta^2(q + r \delta^2)}\Bigg)^{1/8}}_{:=z}.
\end{equation}
For any $\nu > 1$, note that $z > \nu$ for $\delta^2 < \frac{1}{2 \nu r}{\min_{k \in \{1,\ldots,q\}} \big(\sigma_k^2 (1 - \sigma_k^2) (\theta_1^{*} - \theta_k^{*})\big)}$. In that case, we must have $\lambda^{*}_\text{reg} > \frac{1}{z-1}$, which can be ensured by having:
\begin{equation}
    \lambda^{*}_\text{reg} > \frac{1}{\nu-1}.
\end{equation}

From \cref{eq:37-jan9}, recall that $\lambda^{*}_\text{reg} = \frac{{\gamma^2} \sum_{j=1}^r\frac{\sigma_j^2}{(\lambda^{*}_\text{reg} + \sigma_j^2)^3}}{\sum_{j=1}^r\frac{\theta_j^{*} \sigma_j^2}{(\lambda^{*}_\text{reg} + \sigma_j^2)^3}}$. {Now since $\lambda^{*}_\text{reg} > 0$, we have that:}
\begin{equation}
    \lambda^{*}_\text{reg} \geq \frac{{\gamma^2} \sum_{j=1}^r\frac{\sigma_j^2}{(\lambda^{*}_\text{reg} + \sigma_j^2)^3}}{\theta_{\text{max}}^{*}\sum_{j=1}^r\frac{\sigma_j^2}{(\lambda^{*}_\text{reg} + \sigma_j^2)^3}} \geq \frac{\gamma^2}{\theta_{\text{max}}^{*}},
\end{equation}
where $\theta_{\text{max}}^{*} = \max_{j \in \{1,\ldots,r\}} \theta_{j}^{*}$. Using this, if $\gamma^2 > \frac{\theta_{\text{max}}^{*}}{\nu-1}$, then $\lambda^{*}_\text{reg} \geq \frac{\gamma^2}{\theta_{\text{max}}^{*}} > \frac{1}{\nu-1} > \frac{1}{z-1}$. This completes the proof.
\end{proof}

\section{Proof of Theorem~\ref{thm-ex}}
\label{thm-ex-pf}
\begin{proof}
    We provide a 2-dimensional example, i.e., $d=2$. Suppose $n > 2$. Take $\bm{\theta}^{*} = \frac{1}{\sqrt{2}}(\bm{u}_1 + \bm{u}_2)$; so, $\theta^{*}_1 = \theta^{*}_2 = \frac{1}{2}$. Also, suppose $\sigma_1 = 1$ and $\sigma_2 = \frac{1}{2}$. For this case, we get (by using the formulas in \Cref{thm-opt-lambda-2}):
    \begin{equation}
        e_\text{reg}(\lambda) = \frac{\lambda^2}{2} \Bigg(\frac{1}{(\lambda + 1)^2} + \frac{16}{(4\lambda + {1})^2}\Bigg) + \gamma^2 \Bigg(\frac{1}{(\lambda+1)^2} + \frac{4}{(4\lambda + {1})^2}\Bigg),
    \end{equation}
    and
    \begin{equation}
        \label{eq:62-jan24}
        e_\text{reg}'(\lambda) = (\lambda - 2 \gamma^2) \Bigg(\frac{1}{(\lambda+1)^3} + \frac{16}{(4\lambda+1)^3}\Bigg).
    \end{equation}
    From \cref{eq:62-jan24}, we have that $\lambda^{*}_\text{reg} = \text{arg min}_{{\lambda > 0}}e_\text{reg}(\lambda) = 2\gamma^2$.

    From \Cref{thm-opt-lambda-2}, we have that:
    \begin{equation}
        e_\textup{sd}'(\lambda) = e_\text{reg}'(\lambda)\Bigg(1 - \frac{2 e_\text{reg}''(\lambda)}{h(\lambda)} + \frac{e_\text{reg}'(\lambda) h'(\lambda)}{(h(\lambda))^2}\Bigg), \text{ where } h(\lambda) = \Bigg(\frac{4\gamma^2 + 2}{(\lambda+1)^4} + \frac{256 \gamma^2 + 32}{(4\lambda+1)^4}\Bigg).
    \end{equation}
    After a lot of algebraic heavy lifting, we get:
    \begin{equation}
        \label{eq:64-jan24}
        e_\textup{sd}'(\lambda) = \frac{288(\lambda - 2\gamma^2)^3}{(\lambda+1)^5 (4\lambda+1)^5 \Big(\frac{2\gamma^2 + 1}{(\lambda+1)^4} + \frac{128\gamma^2 + 16}{(4\lambda+1)^4}\Big)^2}\Bigg(\frac{1}{(\lambda+1)^3} + \frac{16}{(4\lambda+1)^3}\Bigg).
    \end{equation}
    Using \cref{eq:64-jan24}, we can conclude that $\text{arg min}_{{\lambda > 0}}e_\text{sd}(\lambda) = 2\gamma^2 = \lambda^{*}_\text{reg}$.
\end{proof}

\clearpage

\section{Empirical Motivation for Assumption~\ref{a3}}
\label{a3-motiv}
We consider the same logistic regression setting as \Cref{sec:log_reg}. 
Note that the Gram matrix $\bm{K} \in \mathbb{R}^{2n \times 2n}$ (w.r.t. $\phi(.)$) is of the form $\bm{K} = \left[ {\begin{array}{cc} 
\bm{K}_1 & \bm{0}_{n \times n} \\
\bm{0}_{n \times n} & \bm{K}_0 \\
\end{array} } \right]$, where $\bm{0}_{n \times n}$ is the $n \times n$ matrix of all 0's and $\bm{K}_1$ and $\bm{K}_0$ are both PSD matrices with diagonal entries = 1. For our simulations, the diagonal elements of $\bm{K}_1$ are set equal to 1 and the off-diagonal elements are set equal to the corresponding off-diagonal element of $\frac{1}{n}\bm{Z}_1\bm{Z}_1^T$, where each element of $\bm{Z}_1 \in \mathbb{R}^{n \times n}$ is drawn i.i.d. from \textbf{(i)} $\text{Unif}[0,1]$, and \textbf{(ii)} $\text{Bernoulli}(0.8)$\footnote{If $X \sim \text{Bernoulli}(p)$, then $\mathbb{P}(X=1) = p$ and $\mathbb{P}(X=0) = 1-p$.}. 
$\bm{K}_0$ is constructed in the same way. Note that $\bm{K}$ is PSD.
In the case of \textbf{(i)} (resp., \textbf{(ii)}), the expected off-diagonal element of both $\bm{K}_1$ and $\bm{K}_0$ is 0.25 (resp., 0.64), and so we compare against \Cref{a3} with $c=0.25$ (resp., $c=0.64$). Specifically, for our two Gram matrices, we compare the average predictions (average being over the training set) of our logistic regression model against the corresponding predictions under \Cref{a3}. We consider four values of $n$, namely, 1000, 5000, 10000 and 50000. 

In \Cref{tab-a3}, we show results for \textbf{(i)} when $p=0.45$ (top) and $p=0.35$ (bottom) with $\hat{\lambda} = 1-c$ (recall that $\hat{\lambda} \in \big[\frac{1-c}{2.16}, \frac{1-c}{0.40}\big]$ as per \Cref{thm1}). In \Cref{tab-a4}, we show results for \textbf{(ii)} when $p=0.3$ (top) and $p=0.2$ (bottom) with $\hat{\lambda} = \frac{1-c}{0.50} = 2(1-c)$.
Please see the table captions for a detailed discussion, but in summary, we conclude that \Cref{a3} is a reasonable assumption to analyze the average behavior of a linear model on a large dataset under random label corruption.

\begin{table}[!htb]
\centering
\begin{subtable}{\linewidth}\centering
\begin{tabular}{|c|c|c|c|c|c|}
\hline
\multirow{5}{*}{ \texttt{Teacher} } & $n$ & \makecell{Avg. pred. for \\ bad points} & \makecell{Pred. for \\ bad points under \\ A\ref{a3} \& $n \to \infty$} & \makecell{Avg. pred. for \\ good points} & \makecell{Pred. for \\ good points under \\ A\ref{a3} \& $n \to \infty$}
\\
\cline{2-6} & 1k & 0.4413 & \multirow{4}{*}{ \textbf{0.4400} } & 0.6372 & \multirow{4}{*}{ \textbf{0.6400} }
\\
\cline{2-3} \cline{5-5} & 5k & 0.4399 &  & 0.6397 &  
\\
\cline{2-3} \cline{5-5} & 10k & 0.4399 &  & 0.6399 & 
\\
\cline{2-3} \cline{5-5} & \textbf{50k} & \textbf{0.4400} &  & \textbf{0.6400} & 
\\
\hline
\end{tabular}
\begin{tabular}{|c|c|c|c|c|c|}
\hline
\multirow{5}{*}{ \texttt{Student} } & $n$ & \makecell{Avg. pred. for \\ bad points} & \makecell{Pred. for \\ bad points under \\ A\ref{a3} \& $n \to \infty$} & \makecell{Avg. pred. for \\ good points} & \makecell{Pred. for \\ good points under \\ A\ref{a3} \& $n \to \infty$}
\\
\cline{2-6} & 1k & 0.5287 & \multirow{4}{*}{ \textbf{0.5280} } & 0.5645 & \multirow{4}{*}{ \textbf{0.5680} }
\\
\cline{2-3} \cline{5-5} & 5k & 0.5279 &  & 0.5676 &  
\\
\cline{2-3} \cline{5-5} & 10k & 0.5279 &  & 0.5679 & 
\\
\cline{2-3} \cline{5-5} & \textbf{50k} & \textbf{0.5280} &  & \textbf{0.5680} & 
\\
\hline
\end{tabular}
\caption{$p=0.45$}
\vspace{0.2 cm}
\label{tab-a3-1}
\end{subtable}
\begin{subtable}{\linewidth}\centering
\begin{tabular}{|c|c|c|c|c|c|}
\hline
\multirow{5}{*}{ \texttt{Teacher} } & $n$ & \makecell{Avg. pred. for \\ bad points} & \makecell{Pred. for \\ bad points under \\ A\ref{a3} \& $n \to \infty$} & \makecell{Avg. pred. for \\ good points} & \makecell{Pred. for \\ good points under \\ A\ref{a3} \& $n \to \infty$}
\\
\cline{2-6} & 1k & 0.5243 & \multirow{4}{*}{ \textbf{0.5200} } & 0.7146 & \multirow{4}{*}{ \textbf{0.7200} }
\\
\cline{2-3} \cline{5-5} & 5k & 0.5198 &  & 0.7195 &  
\\
\cline{2-3} \cline{5-5} & 10k & 0.5198 &  & 0.7198 & 
\\
\cline{2-3} \cline{5-5} & \textbf{50k} & \textbf{0.5200} &  & \textbf{0.7200} & 
\\
\hline
\end{tabular}
\begin{tabular}{|c|c|c|c|c|c|}
\hline
\multirow{5}{*}{ \texttt{Student} } & $n$ & \makecell{Avg. pred. for \\ bad points} & \makecell{Pred. for \\ bad points under \\ A\ref{a3} \& $n \to \infty$} & \makecell{Avg. pred. for \\ good points} & \makecell{Pred. for \\ good points under \\ A\ref{a3} \& $n \to \infty$}
\\
\cline{2-6} & 1k & 0.6264 & \multirow{4}{*}{ \textbf{0.6240} } & 0.6568 & \multirow{4}{*}{ \textbf{0.6640} }
\\
\cline{2-3} \cline{5-5} & 5k & 0.6235 &  & 0.6631 &  
\\
\cline{2-3} \cline{5-5} & 10k & 0.6236 &  & 0.6636 & 
\\
\cline{2-3} \cline{5-5} & \textbf{50k} & \textbf{0.6240} &  & \textbf{0.6640} & 
\\
\hline
\end{tabular}
\caption{$p=0.35$}
\label{tab-a3-2}
\end{subtable}
\caption{\textbf{(i) $\text{Unif}[0,1]$:} Results (up to fourth decimal point) for $p=0.45$ (top) and $p=0.35$ (bottom) with $\hat{\lambda} = 1-c$ on points with true label = 1; points with true label = 0 follow the same trend by symmetry of the problem. In the table, \enquote{bad} (resp., \enquote{good}) points mean incorrectly (resp., correctly) labeled points, and A\ref{a3} is \Cref{a3}. Also, \enquote{pred.} is the predicted probability of the label being 1 and \enquote{Avg. pred. for bad points} (resp., \enquote{Avg. pred. for good points}) is the empirical average over all bad (resp., good) points with true label = 1; please note that this is with the actual Gram matrix. 
Under \Cref{a3}, all bad/good points have the same prediction (see Equations (\ref{eq:80-jan6}) and (\ref{eq:97-jan6}) or Lemmas \ref{thm-teacher} and \ref{thm-student}) due to which the corresponding columns do not have the word \enquote{Avg.}. 
\textit{Observe that as $n$ increases, the average prediction for both good and bad points (with the actual Gram matrix) matches the corresponding predictions under \Cref{a3} (and $n \to \infty$). Thus, \Cref{a3} is a reasonable assumption to analyze the average behavior of a linear model on a large dataset under random label corruption.}}
\label{tab-a3}
\end{table}

\begin{table}[!htb]
\centering
\begin{subtable}{\linewidth}\centering
\begin{tabular}{|c|c|c|c|c|c|}
\hline
\multirow{5}{*}{ \texttt{Teacher} } & $n$ & \makecell{Avg. pred. for \\ bad points} & \makecell{Pred. for \\ bad points under \\ A\ref{a3} \& $n \to \infty$} & \makecell{Avg. pred. for \\ good points} & \makecell{Pred. for \\ good points under \\ A\ref{a3} \& $n \to \infty$}
\\
\cline{2-6} & 1k & 0.6213 & \multirow{4}{*}{ \textbf{0.6222} } & 0.7324 & \multirow{4}{*}{ \textbf{0.7333} }
\\
\cline{2-3} \cline{5-5} & 5k & 0.6220 &  & 0.7332 &  
\\
\cline{2-3} \cline{5-5} & 10k & 0.6221 &  & 0.7332 & 
\\
\cline{2-3} \cline{5-5} & \textbf{50k} & \textbf{0.6222} &  & \textbf{0.7333} & 
\\
\hline
\end{tabular}
\begin{tabular}{|c|c|c|c|c|c|}
\hline
\multirow{5}{*}{ \texttt{Student} } & $n$ & \makecell{Avg. pred. for \\ bad points} & \makecell{Pred. for \\ bad points under \\ A\ref{a3} \& $n \to \infty$} & \makecell{Avg. pred. for \\ good points} & \makecell{Pred. for \\ good points under \\ A\ref{a3} \& $n \to \infty$}
\\
\cline{2-6} & 1k & 0.6895 & \multirow{4}{*}{ \textbf{0.6913} } & 0.7018 & \multirow{4}{*}{ \textbf{0.7037} }
\\
\cline{2-3} \cline{5-5} & 5k & 0.6910 &  & 0.7033 &  
\\
\cline{2-3} \cline{5-5} & 10k & 0.6911 &  & 0.7035 & 
\\
\cline{2-3} \cline{5-5} & \textbf{50k} & \textbf{0.6913} &  & \textbf{0.7037} & 
\\
\hline
\end{tabular}
\caption{$p=0.3$}
\vspace{0.2 cm}
\label{tab-a4-1}
\end{subtable}
\begin{subtable}{\linewidth}\centering
\begin{tabular}{|c|c|c|c|c|c|}
\hline
\multirow{5}{*}{ \texttt{Teacher} } & $n$ & \makecell{Avg. pred. for \\ bad points} & \makecell{Pred. for \\ bad points under \\ A\ref{a3} \& $n \to \infty$} & \makecell{Avg. pred. for \\ good points} & \makecell{Pred. for \\ good points under \\ A\ref{a3} \& $n \to \infty$}
\\
\cline{2-6} & 1k & 0.7097 & \multirow{4}{*}{ \textbf{0.7111} } & 0.8208 & \multirow{4}{*}{ \textbf{0.8222} }
\\
\cline{2-3} \cline{5-5} & 5k & 0.7108 &  & 0.8219 &  
\\
\cline{2-3} \cline{5-5} & 10k & 0.7109 &  & 0.8221 & 
\\
\cline{2-3} \cline{5-5} & \textbf{50k} & \textbf{0.7111} &  & \textbf{0.8222} & 
\\
\hline
\end{tabular}
\begin{tabular}{|c|c|c|c|c|c|}
\hline
\multirow{5}{*}{ \texttt{Student} } & $n$ & \makecell{Avg. pred. for \\ bad points} & \makecell{Pred. for \\ bad points under \\ A\ref{a3} \& $n \to \infty$} & \makecell{Avg. pred. for \\ good points} & \makecell{Pred. for \\ good points under \\ A\ref{a3} \& $n \to \infty$}
\\
\cline{2-6} & 1k & 0.7872 & \multirow{4}{*}{ \textbf{0.7901} } & 0.7995 & \multirow{4}{*}{ \textbf{0.8024} }
\\
\cline{2-3} \cline{5-5} & 5k & 0.7895 &  & 0.8019 &  
\\
\cline{2-3} \cline{5-5} & 10k & 0.7898 &  & 0.8021 & 
\\
\cline{2-3} \cline{5-5} & \textbf{50k} & \textbf{0.7901} &  & \textbf{0.8024} & 
\\
\hline
\end{tabular}
\caption{$p=0.2$}
\label{tab-a4-2}
\end{subtable}
\caption{\textbf{(ii) $\text{Bernoulli}(0.8)$:} Same as \Cref{tab-a3} except for $p=0.3$ (top) and $p=0.2$ (bottom) with $\hat{\lambda} = 2(1-c)$. Just like in \Cref{tab-a3}, \textit{as $n$ increases, the average prediction for both good and bad points (with the actual Gram matrix) matches the corresponding predictions under \Cref{a3} (and $n \to \infty$). Thus, \Cref{a3} is a reasonable assumption to analyze the average behavior of a linear model on a large dataset under random label corruption.}}
\label{tab-a4}
\end{table}

\clearpage

\section{Proof of Theorem~\ref{thm1}}
\label{pf-thm1}
\subsection{Step 1 in Detail}
\label{step-1}
The teacher's estimated parameter $\bm{\theta}_{\text{T}}^{\ast} := \text{arg min}_{\bm{\theta}} f_{\text{T}}(\bm{\theta})$ satisfies $\nabla f_{\text{T}}(\bm{\theta}_{\text{T}}^{\ast}) = \frac{1}{2n} \sum_{i=1}^{2n} \Big(\sigma(\langle \bm{\theta}_{\text{T}}^{\ast}, \phi(\bm{x}_i) \rangle) - \hat{y}_i\Big) \phi(\bm{x}_i) + \lambda \bm{\theta}_{\text{T}}^{\ast} = \vec{0}$. From this, we get:
\begin{equation}
    \label{eq:60-oct20}
    \bm{\theta}_{\text{T}}^{\ast} = \sum_{i=1}^{2n} \underbrace{\frac{1}{2n\lambda} \Big(\hat{y}_i - \sigma(\langle \bm{\theta}_{\text{T}}^{\ast}, \phi(\bm{x}_i) \rangle)\Big)}_{:= \alpha_i} \phi(\bm{x}_i) = \sum_{i=1}^{2n} \alpha_i \phi(\bm{x}_i),
\end{equation}
for some real numbers $\{\alpha_i\}_{i=1}^{2n}$ which are known as the teacher's dual-space coordinates. Recall that we defined $\hat{\lambda} := 2 n  \lambda$ in the theorem statement.

\begin{lemma}[\textbf{Teacher's Dual-Space Coordinates and Predictions}]
\label{thm-teacher}
Suppose Assumptions \ref{a-ortho} and \ref{a3} hold. Then:
\begin{equation}
    \alpha_i = 
    \begin{cases}
    -\hat{\alpha} \text{ for } i \in \mathcal{S}_{1,\textup{bad}}, 
    \\
    \alpha \text{ for } i \in \mathcal{S}_{1,\textup{good}}, 
    \\
    \hat{\alpha} \text{ for } i \in \mathcal{S}_{0,\textup{bad}}, 
    \\
    -\alpha \text{ for } i \in \mathcal{S}_{0,\textup{good}}, 
    \end{cases}
\end{equation}
where $\alpha \geq 0$ and $\hat{\alpha} \geq 0$ are obtained by jointly solving:
\begin{equation}
    \label{eq:71-oct20}
     \sigma\Big(c n \big(\alpha - (\alpha + \hat{\alpha})p) - (1-c)\hat{\alpha}\Big) = \hat{\lambda} \hat{\alpha},
\end{equation}
and
\begin{equation}
    \label{eq:72-oct20}
     \sigma\Big(c n \big(\alpha - (\alpha + \hat{\alpha})p) + (1-c){\alpha}\Big) = 1 - \hat{\lambda} {\alpha}.
\end{equation}
Also, the teacher's prediction for the $i^{\text{th}}$ sample, $y_i^{\textup{(T)}}$, turns out to be:
\begin{equation}
    \label{eq:80-oct20}
    y_i^{\textup{(T)}} = 
    \begin{cases}
    \hat{\lambda} \hat{\alpha} \text{ for } i \in \mathcal{S}_{1,\textup{bad}}, 
    \\
    1 - \hat{\lambda} {\alpha} \text{ for } i \in \mathcal{S}_{1,\textup{good}}, 
    \\
    1 - \hat{\lambda} \hat{\alpha} \text{ for } i \in \mathcal{S}_{0,\textup{bad}}, 
    \\
    \hat{\lambda} {\alpha} \text{ for } i \in \mathcal{S}_{0,\textup{good}}. 
    \end{cases}
\end{equation}
\end{lemma}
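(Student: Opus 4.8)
The plan is to combine the representer-style stationarity condition already derived in \cref{eq:60-oct20} with the strong symmetry of the problem induced by Assumptions~\ref{a-ortho} and~\ref{a3}. Since $f_{\text{T}}$ is $\lambda$-strongly convex (from the $\ell_2$ term) added to a convex BCE term, it has a \emph{unique} minimizer $\bm{\theta}_{\text{T}}^{\ast}$, and hence a unique set of dual coordinates $\{\alpha_i\}$ via $\alpha_i = \frac{1}{\hat{\lambda}}\big(\hat{y}_i - \sigma(\langle \bm{\theta}_{\text{T}}^{\ast}, \phi(\bm{x}_i)\rangle)\big)$. The first observation is that the objective is invariant under permuting indices within each of the four groups $\mathcal{S}_{1,\text{bad}}, \mathcal{S}_{1,\text{good}}, \mathcal{S}_{0,\text{bad}}, \mathcal{S}_{0,\text{good}}$: points in the same group share the same observed label and, by Assumptions~\ref{a-ortho}--\ref{a3}, have identical feature inner products with every other point. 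By uniqueness of the minimizer, $\bm{\theta}_{\text{T}}^{\ast}$ is fixed by these permutations, so the logit $\langle \bm{\theta}_{\text{T}}^{\ast}, \phi(\bm{x}_i)\rangle$ --- and therefore $\alpha_i$ --- is constant on each group. This collapses the $2n$ unknowns to four, which I parametrize as $-\hat{\alpha}, \alpha, \hat{\alpha}, -\alpha$ on the four groups respectively, with the sign pattern justified next.

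Next I would exploit the class-swap symmetry. The map sending each class-$1$ point to its class-$0$ counterpart together with $\bm{\theta} \mapsto -\bm{\theta}$ and $\hat{y}_i \mapsto 1 - \hat{y}_i$ leaves $f_{\text{T}}$ unchanged, because $\sigma(-z) = 1 - \sigma(z)$ and $\textup{BCE}(1-q, 1-\hat{q}) = \textup{BCE}(q,\hat{q})$. Uniqueness of the minimizer then forces the dual coordinates on $\mathcal{S}_{0,\text{bad}}$ and $\mathcal{S}_{0,\text{good}}$ to be the negatives of those on $\mathcal{S}_{1,\text{bad}}$ and $\mathcal{S}_{1,\text{good}}$, yielding exactly the $\pm$ pattern of the ansatz.

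With the ansatz in hand, the core computation is to evaluate $\langle \bm{\theta}_{\text{T}}^{\ast}, \phi(\bm{x}_i)\rangle = \sum_{j} \alpha_j \langle \phi(\bm{x}_j), \phi(\bm{x}_i)\rangle$ for one representative $i$ in each class-$1$ group. By orthogonality (\Cref{a-ortho}) only the same-true-class terms survive; using $\|\phi(\bm{x}_i)\| = 1$ and $\langle \phi(\bm{x}_j), \phi(\bm{x}_i)\rangle = c$ for distinct same-class points (\Cref{a3}), and substituting $\hat{n} = np$, the sums telescope to $cn\big(\alpha - (\alpha+\hat{\alpha})p\big) - (1-c)\hat{\alpha}$ for $i \in \mathcal{S}_{1,\text{bad}}$ and $cn\big(\alpha - (\alpha+\hat{\alpha})p\big) + (1-c)\alpha$ for $i \in \mathcal{S}_{1,\text{good}}$. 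Feeding these into $\alpha_i = \frac{1}{\hat{\lambda}}(\hat{y}_i - \sigma(\textup{logit}_i))$ --- with $\hat{y}_i = 0$ on the bad group and $\hat{y}_i = 1$ on the good group --- produces precisely \cref{eq:71-oct20} and \cref{eq:72-oct20}, and reading $y_i^{\textup{(T)}} = \sigma(\textup{logit}_i)$ off the same relation gives the predictions in \cref{eq:80-oct20}; the class-$0$ logits are simply the negatives of the class-$1$ logits, so their predictions follow from $\sigma(-z) = 1-\sigma(z)$.

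The main obstacle I anticipate is the rigorous justification that the unique minimizer inherits the full symmetry group, so that the four-value ansatz is not merely self-consistent but actually correct. Existence of a solution $(\alpha, \hat{\alpha})$ to the coupled nonlinear equations comes for free, as it is just the image of the guaranteed minimizer under the reduction. Pinning down the signs then needs only a short inspection of the fixed-point equations: since $0 < \sigma < 1$ and $\hat{\lambda} > 0$, \cref{eq:71-oct20} gives $\hat{\alpha} > 0$ and \cref{eq:72-oct20} gives $0 < \hat{\lambda}\alpha < 1$, hence $\alpha > 0$. The genuinely hard part --- solving these transcendental equations quantitatively --- is deferred to Step~3 and is not required here.
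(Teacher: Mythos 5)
Your proposal is correct in substance and arrives at exactly the paper's reduced system \cref{eq:71-oct20}--\cref{eq:72-oct20} and predictions \cref{eq:80-oct20}, but the key reduction step (from $2n$ dual coordinates to the four values $-\hat{\alpha},\alpha,\hat{\alpha},-\alpha$) goes by a genuinely different argument. The paper works directly with the stationarity conditions \cref{eq:60-oct20}: writing the logit of point $i$ as $\alpha_i(1-c)+cS$ with $S$ the (common) sum of dual coordinates over the true class of $i$, every $\alpha_i$ in a given group is a root of $t\mapsto \hat{\lambda}t+\sigma\big(t(1-c)+cS\big)-\hat{y}_i$, which is strictly increasing in $t$; hence all $\alpha_i$ in a group coincide, and their signs are read off from $\hat{y}_i-\sigma(\cdot)$. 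The class-$0$/class-$1$ antisymmetry is then handled by positing $\alpha_2=\alpha$, $\hat{\alpha}_2=\hat{\alpha}$ and checking, via $\sigma(-z)=1-\sigma(z)$, that the class-$0$ equations collapse onto the class-$1$ ones, uniqueness of the minimizer of the strictly convex $f_{\text{T}}$ making the verified ansatz the actual solution. You instead derive both facts from symmetry plus uniqueness. Your route is more structural and more portable (it would survive replacing BCE by any loss with the same label-flip symmetry, where the monotonicity trick might not apply), while the paper's is more elementary and self-contained.

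The one place where your argument, as written, is not yet a proof is the sentence claiming the objective is \enquote{invariant under permuting indices within each group}, hence $\bm{\theta}_{\text{T}}^{\ast}$ is fixed. Permuting the terms of the sum defining $f_{\text{T}}$ leaves $f_{\text{T}}$ literally the same function of $\bm{\theta}$, so by itself this says nothing about $\bm{\theta}_{\text{T}}^{\ast}$. To make the symmetry bite, you must realize each within-group permutation $\pi$ (and likewise the class swap $\tau$) as a linear isometry $U$ of feature space satisfying $U\phi(\bm{x}_i)=\phi(\bm{x}_{\pi(i)})$; such a $U$ exists precisely because Assumptions \ref{a-ortho} and \ref{a3} make the Gram matrix invariant under $\pi$. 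Then $f_{\text{T}}(U^T\bm{\theta})=f_{\text{T}}(\bm{\theta})$ for within-group permutations, and $f_{\text{T}}(-U^T\bm{\theta})=f_{\text{T}}(\bm{\theta})$ for the class swap (using $\textup{BCE}(q,1-\hat{q})=\textup{BCE}(1-q,\hat{q})$ and $\hat{y}_{\tau(i)}=1-\hat{y}_i$), so uniqueness gives $U^T\bm{\theta}_{\text{T}}^{\ast}=\bm{\theta}_{\text{T}}^{\ast}$ (resp.\ $=-\bm{\theta}_{\text{T}}^{\ast}$), whence equal (resp.\ antisymmetric) logits and dual coordinates. You anticipated exactly this obstacle; with that insertion, your logit computation, the resulting equations, the sign determination, and $y_i^{\textup{(T)}}=\hat{y}_i-\hat{\lambda}\alpha_i$ all match the paper's proof.
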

\noindent \Cref{thm-teacher} is proved next in \Cref{pf-thm-teacher}.
\\
\\
As mentioned in the proof sketch in the main text, we shall focus on the interesting case of:
\\
\textbf{(a)} $p$ being large enough so that the teacher misclassifies the incorrectly labeled points because otherwise, there is no need for SD, and
\\
\textbf{(b)} $\hat{\lambda}$ being chosen sensibly so that the teacher at least correctly classifies the correctly labeled points because otherwise, SD is hopeless.
\\
\\
Later in \Cref{step-3}, we shall impose a lower bound on $p$ (in terms of $c$ and $\hat{\lambda}$) so that (a) is ensured. Specifically, the teacher misclassifies the incorrectly labeled points (with indices $\mathcal{S}_{1,\textup{bad}} = \{1,\ldots,\hat{n}\}$ and $\mathcal{S}_{0,\textup{bad}} = \{n+1,\ldots,n+\hat{n}\}$) when 
\begin{equation}
    \label{eq:81}
    \hat{\lambda} \hat{\alpha} < \frac{1}{2}.
\end{equation}
Moreover, in \Cref{step-3}, we shall also restrict $\hat{\lambda}$ (in terms of $c$) so that (b) is ensured. Specifically, the teacher correctly classifies the correctly labeled points (with indices $\mathcal{S}_{1,\textup{good}} = \{\hat{n}+1,\ldots,n\}$ and $\mathcal{S}_{0,\textup{good}} = \{n+\hat{n}+1,\ldots,2n\}$) when
\begin{equation}
    \label{eq:82}
    1 - \hat{\lambda} {\alpha} > \frac{1}{2} \implies \hat{\lambda} {\alpha} < \frac{1}{2}.
\end{equation}

\subsection{Proof of Lemma~\ref{thm-teacher}}
\label{pf-thm-teacher}
\begin{proof}
From \cref{eq:60-oct20}, we have:
\begin{equation}
    \label{eq:62-new}
    2n \lambda \alpha_i = \hat{y}_i - \sigma\Big(\sum_{j=1}^{2n} \alpha_j \langle \phi(\bm{x}_j), \phi(\bm{x}_i) \rangle\Big),
\end{equation}
for all $i \in \{1,\ldots,2n\}$. For ease of notation, let us define $v_i := \sum_{j=1}^{2n} \alpha_j \langle \phi(\bm{x}_j), \phi(\bm{x}_i) \rangle$. Then, the above equation can be rewritten as:
\begin{equation}
    \label{eq:63}
    2n \lambda \alpha_i = \hat{y}_i - \sigma(v_i).
\end{equation}
Note here that the teacher's predictions are:
\begin{equation}
    \label{eq:64-new}
    y_i^{\text{(T)}} := \sigma(v_i) = \hat{y}_i - 2n \lambda \alpha_i,
\end{equation}
for $i \in \{1,\ldots,2n\}$.
Next, using Assumptions \ref{a-ortho} and \ref{a3}, we have:
\begin{equation}
    \label{eq:64}
    v_i = 
    \begin{cases}
    \alpha_i + c \sum_{j \in \{1,\ldots,n\} \setminus i} \alpha_j = \alpha_i(1-c) + c \sum_{j=1}^n \alpha_j \text{ for } i \in \{1,\ldots,n\},
    \\
    \alpha_i + c \sum_{j \in \{n+1,\ldots,2n\} \setminus i}^n \alpha_j = \alpha_i(1-c) + c \sum_{j=n+1}^{2n} \alpha_j \text{ for } i \in \{n+1,\ldots,2n\}.
    \end{cases}
\end{equation}
Let us focus on $i \in \{1,\ldots,n\}$. Let $S = \sum_{j=1}^n \alpha_j$. Then, we have the following equations:
\begin{equation}
    2n \lambda \alpha_i = - \sigma(\alpha_i(1-c) + c S) \text{ for } i \in \{1,\ldots,\hat{n}\},
\end{equation}
and
\begin{equation}
    2n \lambda \alpha_i = 1 - \sigma(\alpha_i(1-c) + c S) \text{ for } i \in \{\hat{n}+1,\ldots,n\}.
\end{equation}
Using the monotonicity of the sigmoid function, we conclude that: 
\begin{equation}
\alpha_i =
    \begin{cases}
     -\hat{\alpha} \text{ for } i \in \{1,\ldots,\hat{n}\}
    \\
    \alpha \text{ for } i \in \{\hat{n}+1,\ldots,n\},
    \end{cases}
\end{equation}
for some $\alpha, \hat{\alpha} \geq 0$. Using a similar argument, we can conclude that for $i \in \{n+1,\ldots,2n\}$:
\begin{equation}
\alpha_i =
    \begin{cases}
     \hat{\alpha}_2 \text{ for } i \in \{n+1,\ldots,n+\hat{n}\}
    \\
    -\alpha_2 \text{ for } i \in \{n+\hat{n}+1,\ldots,2n\},
    \end{cases}
\end{equation}
for some $\alpha_2, \hat{\alpha}_2 \geq 0$. We further claim that:
\begin{equation}
    \label{eq:69-new}
    \alpha_2 = \alpha \text{ and } \hat{\alpha}_2 = \hat{\alpha}.
\end{equation}
Let us verify if this indeed holds up. Note that with such a solution:
\begin{equation}
    \sum_{j=1}^n \alpha_j = -\sum_{j=n+1}^{2n} \alpha_j = \alpha (n-\hat{n}) - \hat{\alpha} \hat{n} = \alpha n - (\alpha + \hat{\alpha})\hat{n}.
\end{equation}
Plugging this back in \cref{eq:64} for $i \in \{1,\ldots,n\}$ and then in \cref{eq:63}, we get (after a bit of rewriting):
\begin{equation}
    \label{eq:67}
     \sigma\big(-(1-c)\hat{\alpha} + c\alpha n - c(\alpha + \hat{\alpha})\hat{n}\big) = 2n \lambda \hat{\alpha}.
\end{equation}
\begin{equation}
    \label{eq:68}
     \sigma\big((1-c){\alpha} + c\alpha n - c(\alpha + \hat{\alpha})\hat{n}\big) = 1 - 2n \lambda {\alpha}.
\end{equation}
Doing the same but for $i \in \{n+1,\ldots,2n\}$ with $\alpha_2 = \alpha \text{ and } \hat{\alpha}_2 = \hat{\alpha}$, we get (again, after a bit of rewriting):
\begin{equation}
    \label{eq:69}
     \sigma\big((1-c)\hat{\alpha} - c\alpha n + c(\alpha + \hat{\alpha})\hat{n}\big) = 1 - 2n \lambda \hat{\alpha}.
\end{equation}
\begin{equation}
    \label{eq:70}
     \sigma\big(-(1-c){\alpha} - c\alpha n + c(\alpha + \hat{\alpha})\hat{n}\big) = 2n \lambda {\alpha}.
\end{equation}
Now note that \cref{eq:67} and \cref{eq:69}, and \cref{eq:68} and \cref{eq:70} are the same -- this is because $\sigma(-z) = 1 - \sigma(z)$ for all $z \in \mathbb{R}$. Thus, our claim in \cref{eq:69-new} is true.

Hence, we can consider only \cref{eq:67} and \cref{eq:68}, and solve them to find the two unknown variables ${\alpha}$ and $\hat{\alpha}$ in order to obtain $\bm{\theta}_{\text{T}}^{*}$. Recalling $\hat{n} = n p$, we can rewrite \cref{eq:67} and \cref{eq:68} as follows:
\begin{equation}
    \label{eq:71}
     \sigma\Big(c n \big(\alpha - (\alpha + \hat{\alpha})p) - (1-c)\hat{\alpha}\Big) = 2n \lambda \hat{\alpha}.
\end{equation}
\begin{equation}
    \label{eq:72}
     \sigma\Big(c n \big(\alpha - (\alpha + \hat{\alpha})p) + (1-c){\alpha}\Big) = 1 - 2n \lambda {\alpha}.
\end{equation}
Thus, we have:
\begin{equation}
    \label{eq:28-oct20}
    \alpha_i = 
    \begin{cases}
    -\hat{\alpha} \text{ for } i \in \{1,\ldots,\hat{n}\},
    \\
    \alpha \text{ for } i \in \{\hat{n}+1,\ldots,n\},
    \\
    \hat{\alpha} \text{ for } i \in \{n+1,\ldots,n+\hat{n}\},
    \\
    -\alpha \text{ for } i \in \{n+\hat{n}+1,\ldots,2n\},
    \end{cases}
\end{equation}
where $\alpha$ and $\hat{\alpha}$ are obtained by solving \cref{eq:71} and \cref{eq:72}.
\\
\\
From \cref{eq:64-new}, recall that the teacher's predictions for the $i^{\text{th}}$ sample is:
\begin{equation}
    \label{eq:78}
    y_i^{\text{(T)}} := \hat{y}_i - 2n \lambda \alpha_i.
\end{equation}
Now using \cref{eq:28-oct20} in \cref{eq:78}, we get:
\begin{equation}
    \label{eq:80}
    y_i^{\text{(T)}} = 
    \begin{cases}
    2n \lambda \hat{\alpha} \text{ for } i \in \{1,\ldots,\hat{n}\},
    \\
    1 - 2n \lambda {\alpha} \text{ for } i \in \{\hat{n}+1,\ldots,n\},
    \\
    1 - 2n \lambda \hat{\alpha} \text{ for } i \in \{n+1,\ldots,n+\hat{n}\},
    \\
    2n \lambda {\alpha} \text{ for } i \in \{n+\hat{n}+1,\ldots,2n\}.
    \end{cases}
\end{equation}
Replacing $2 n \lambda$ with $\hat{\lambda}$ in equations (\ref{eq:71}), (\ref{eq:72}) and (\ref{eq:80}), and plugging in $\mathcal{S}_{1,\textup{bad}} = \{1,\ldots,\hat{n}\}$,  $\mathcal{S}_{1,\textup{good}} = \{\hat{n}+1,\ldots,n\}$, $\mathcal{S}_{0,\textup{bad}}= \{n+1,\ldots,n+\hat{n}\}$ and $\mathcal{S}_{0,\textup{good}} = \{n+\hat{n}+1,\ldots,2n\}$ throughout finishes the proof.
\end{proof}

\subsection{Step 2 in Detail}
\label{step-2}
Just like \cref{eq:60-oct20} for the teacher, it can be shown that:
\begin{equation}
    \label{eq:85-oct20}
    \bm{\theta}_{\text{S}}^{\ast} = \sum_{i=1}^{2n} \beta_i \phi(\bm{x}_i),
\end{equation}
for some real numbers $\{\beta_i\}_{i=1}^{2n}$ which are known as the student's dual-space coordinates.

\begin{lemma}[\textbf{Student's Dual-Space Coordinates and Predictions}]
\label{thm-student}
Suppose Assumptions \ref{a-ortho} and \ref{a3} hold, and the teacher correctly classifies the correctly labeled points but misclassifies the incorrectly labeled points, i.e., 
$\hat{\lambda} {\alpha} < \frac{1}{2}$ and $\hat{\lambda} \hat{\alpha} < \frac{1}{2}$ in \Cref{thm-teacher}. Then:
\begin{equation}
    \label{eq:86-oct20}
    \beta_i = 
    \begin{cases}
    -\hat{\beta} \text{ for } i \in \mathcal{S}_{1,\textup{bad}}, 
    \\
    \beta \text{ for } i \in \mathcal{S}_{1,\textup{good}}, 
    \\
    \hat{\beta} \text{ for } i \in \mathcal{S}_{0,\textup{bad}}, 
    \\
    -\beta \text{ for } i \in \mathcal{S}_{0,\textup{good}}, 
    \end{cases}
\end{equation}
where $\beta \geq 0$ and $\hat{\beta} \geq 0$ are obtained by jointly solving:
\begin{equation}
    \label{eq:87-oct20}
     \sigma\Big(c n \big(\beta - (\beta + \hat{\beta})p) - (1-c)\hat{\beta}\Big) = \hat{\lambda} \hat{\alpha} + \hat{\lambda} \hat{\beta},
\end{equation}
and
\begin{equation}
    \label{eq:88-oct20}
     \sigma\Big(c n \big(\beta - (\beta + \hat{\beta})p) + (1-c){\beta}\Big) = 1 - \hat{\lambda} {\alpha} - \hat{\lambda} {\beta}.
\end{equation}
Also, the student's prediction for the $i^{\text{th}}$ sample, $y_i^{\textup{(S)}}$, turns out to be:
\begin{equation}
    \label{eq:97-oct20}
    y_i^{\textup{(S)}} = 
    \begin{cases}
    \hat{\lambda} \hat{\alpha} + \hat{\lambda} \hat{\beta} \text{ for } i \in \mathcal{S}_{1,\textup{bad}}, 
    \\
    1 - \hat{\lambda} {\alpha} - \hat{\lambda} {\beta} \text{ for } i \in \mathcal{S}_{1,\textup{good}}, 
    \\
    1 - \hat{\lambda} \hat{\alpha} - \hat{\lambda} \hat{\beta} \text{ for } i \in \mathcal{S}_{0,\textup{bad}}, 
    \\
    \hat{\lambda} {\alpha} + \hat{\lambda} {\beta} \text{ for } i \in \mathcal{S}_{0,\textup{good}}. 
    \end{cases}
\end{equation}
\end{lemma}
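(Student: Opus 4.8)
The plan is to mirror the derivation of \Cref{thm-teacher} almost verbatim, the only structural change being that the student's targets are the teacher's \emph{soft} labels $\{y_i^{\textup{(T)}}\}$ from \cref{eq:80-oct20} rather than the hard labels $\{\hat{y}_i\}$. First I would write down the stationarity condition $\nabla f_{\text{S}}(\bm{\theta}_{\text{S}}^{\ast}) = \vec{0}$, which, exactly as in \cref{eq:60-oct20}, yields the dual representation $\bm{\theta}_{\text{S}}^{\ast} = \sum_{i=1}^{2n}\beta_i \phi(\bm{x}_i)$ with
\begin{equation*}
\beta_i = \frac{1}{\hat{\lambda}}\Big(y_i^{\textup{(T)}} - \sigma(w_i)\Big), \qquad w_i := \langle \bm{\theta}_{\text{S}}^{\ast}, \phi(\bm{x}_i) \rangle = \sum_{j=1}^{2n}\beta_j \langle \phi(\bm{x}_j), \phi(\bm{x}_i) \rangle,
\end{equation*}
using $\hat{\lambda} = 2n\lambda$. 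In particular the student's prediction is $y_i^{\textup{(S)}} = \sigma(w_i) = y_i^{\textup{(T)}} - \hat{\lambda}\beta_i$.

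Next I would invoke Assumptions \ref{a-ortho} and \ref{a3} to collapse $w_i$ to a single intra-class sum, exactly as in \cref{eq:64}: for $i$ in class $1$, $w_i = (1-c)\beta_i + c\sum_{j=1}^{n}\beta_j$, and analogously for class $0$. The key observation is that, by \Cref{thm-teacher}, the target $y_i^{\textup{(T)}}$ takes only one of two values on each class (one on the bad indices, one on the good indices). Since for a fixed intra-class sum the map $\beta_i \mapsto \hat{\lambda}\beta_i + \sigma(w_i)$ is strictly increasing, the equation $\hat{\lambda}\beta_i + \sigma(w_i) = y_i^{\textup{(T)}}$ has a unique root, so $\beta_i$ is constant on each of $\mathcal{S}_{1,\textup{bad}}, \mathcal{S}_{1,\textup{good}}, \mathcal{S}_{0,\textup{bad}}, \mathcal{S}_{0,\textup{good}}$. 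Writing these constants as $-\hat{\beta}, \beta, \hat{\beta}, -\beta$ and using the identity $\sigma(-z) = 1-\sigma(z)$ together with the class-symmetry of $\{y_i^{\textup{(T)}}\}$ (just as in the teacher's proof establishing \cref{eq:69-new}), the class-$0$ equations become redundant, leaving two unknowns. I would then substitute $\sum_{j=1}^{n}\beta_j = n\big(\beta - (\beta+\hat{\beta})p\big)$ (using $\hat{n} = np$) into the two surviving equations for the bad and good class-$1$ indices; reading off $y_i^{\textup{(T)}} = \hat{\lambda}\hat{\alpha}$ and $y_i^{\textup{(T)}} = 1-\hat{\lambda}\alpha$ from \cref{eq:80-oct20} produces exactly \cref{eq:87-oct20} and \cref{eq:88-oct20}, and the relation $y_i^{\textup{(S)}} = y_i^{\textup{(T)}} - \hat{\lambda}\beta_i$ yields the prediction formulas \cref{eq:97-oct20}.

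The hard part is the sign claim $\beta \geq 0$ and $\hat{\beta}\geq 0$, and this is exactly where the hypotheses $\hat{\lambda}\alpha < \tfrac12$ and $\hat{\lambda}\hat{\alpha} < \tfrac12$ enter. For the teacher the hard targets $\{0,1\}$ forced the sign of each $\alpha_i$ immediately, but the student's soft targets lie strictly inside $(0,1)$, so the sign must instead be extracted from a monotone comparison with the teacher's fixed-point equations \cref{eq:71-oct20}--\cref{eq:72-oct20}. Concretely, $\hat{\beta}\geq 0$ is equivalent to $y_i^{\textup{(S)}} \geq y_i^{\textup{(T)}}$ on the bad points and $\beta \geq 0$ to $y_i^{\textup{(S)}} \leq y_i^{\textup{(T)}}$ on the good points, i.e.\ the student raises its score on mislabeled points while lowering its confidence on correctly labeled ones. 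I would establish these inequalities (and the existence and uniqueness of the pair $(\beta,\hat{\beta})$) by viewing \cref{eq:87-oct20}--\cref{eq:88-oct20} as a monotone system and comparing it term-by-term against the teacher's system under the assumption $\hat{\lambda}\alpha, \hat{\lambda}\hat{\alpha} < \tfrac12$; I expect this comparison, rather than any single algebraic manipulation, to be the main obstacle.
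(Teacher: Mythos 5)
Your structural derivation is correct and coincides with the paper's: stationarity of $f_{\text{S}}$ gives the dual representation $\beta_i = \frac{1}{\hat{\lambda}}\big(y_i^{\textup{(T)}} - \sigma(w_i)\big)$, the strict monotonicity of $\beta_i \mapsto \hat{\lambda}\beta_i + \sigma(w_i)$ at fixed intra-class sum forces $\beta_i$ to be constant on each of the four index sets, the anti-symmetric ansatz together with $\sigma(-z) = 1-\sigma(z)$ makes the class-$0$ equations redundant, and substituting the teacher's soft labels from \cref{eq:80-oct20} produces \cref{eq:87-oct20}, \cref{eq:88-oct20} and the prediction formulas \cref{eq:97-oct20}. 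Up to that point you are reproducing the paper's proof.

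The genuine gap is the sign claim $\beta \geq 0$, $\hat{\beta} \geq 0$: you correctly identify it as the crux and as the place where $\hat{\lambda}\alpha < \frac{1}{2}$, $\hat{\lambda}\hat{\alpha} < \frac{1}{2}$ must enter, but you leave it as an unexecuted plan (\enquote{a monotone system \ldots compared term-by-term against the teacher's system}), and this is the only part of the lemma that is not a verbatim repetition of \Cref{thm-teacher}. The paper settles it with a short trichotomy by contradiction that needs no comparison with the teacher's equations at all, only the student's own two equations, elementary properties of $\sigma$, and the bounds in \cref{eq:81} and \cref{eq:82}. Concretely: (i) if $\beta \leq 0$ and $\hat{\beta} \leq 0$, the sigmoid argument in \cref{eq:87-oct20} exceeds that in \cref{eq:88-oct20} by $-(1-c)(\beta+\hat{\beta}) \geq 0$, so $\hat{\lambda}\hat{\alpha} + \hat{\lambda}\hat{\beta} \geq 1 - \hat{\lambda}\alpha - \hat{\lambda}\beta$, hence $\hat{\lambda}\hat{\alpha} \geq 1 - \hat{\lambda}\alpha - \hat{\lambda}(\beta+\hat{\beta}) \geq 1 - \hat{\lambda}\alpha$, contradicting $\hat{\lambda}\hat{\alpha} < \frac{1}{2} < 1 - \hat{\lambda}\alpha$; (ii) if $\beta \geq 0$ and $\hat{\beta} \leq 0$, the argument of the sigmoid in \cref{eq:87-oct20} is nonnegative, so its value is at least $\frac{1}{2}$, giving $\hat{\lambda}\hat{\alpha} \geq \hat{\lambda}\hat{\alpha} + \hat{\lambda}\hat{\beta} \geq \frac{1}{2}$, a contradiction; (iii) if $\beta \leq 0$ and $\hat{\beta} \geq 0$, the argument of the sigmoid in \cref{eq:88-oct20} is nonpositive, so $1 - \hat{\lambda}\alpha \leq 1 - \hat{\lambda}\alpha - \hat{\lambda}\beta \leq \frac{1}{2}$, again a contradiction. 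Your reformulation of the signs as $y_i^{\textup{(S)}} \geq y_i^{\textup{(T)}}$ on mislabeled points and $y_i^{\textup{(S)}} \leq y_i^{\textup{(T)}}$ on correctly labeled points is a correct restatement, but it does not by itself constitute the argument; a complete proof must contain the case analysis (or an equivalent), and as written your proposal does not.
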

\noindent We prove \Cref{thm-student} in \Cref{pf-thm-student}.
\\
\\
Now note that if $\hat{\lambda} \hat{\alpha} + \hat{\lambda} \hat{\beta} > \frac{1}{2}$ and $\hat{\lambda} {\alpha} + \hat{\lambda} {\beta} < \frac{1}{2}$, then the student has managed to correctly classify all the points in the training set. We ensure this in \Cref{step-3} by imposing an upper bound on $p$.

\subsection{Proof of Lemma~\ref{thm-student}}
\label{pf-thm-student}
\begin{proof}
The student's estimated parameter $\bm{\theta}_{\text{S}}^{\ast} = \text{arg min}_{\bm{\theta}} f_{\text{S}}(\bm{\theta})$ satisfies $\nabla f_{\text{S}}(\bm{\theta}_{\text{S}}^{\ast}) = \vec{0}$, from which we get:
\begin{equation}
    \label{eq:84}
    \bm{\theta}_{\text{S}}^{\ast} = \sum_{i=1}^{2n} \underbrace{\frac{1}{2n\lambda} \Big(y_i^{\text{(T)}} - \sigma(\langle \bm{\theta}_{\text{S}}^{\ast}, \phi(\bm{x}_i) \rangle)\Big)}_{:= \beta_i} \phi(\bm{x}_i).
\end{equation}
Thus the student's $i^{\text{th}}$ dual coordinate $\beta_i$ (as defined in \cref{eq:85-oct20}) satisfies:
\begin{equation}
    2n\lambda \beta_i = y_i^{\text{(T)}} - \sigma(\langle \bm{\theta}_{\text{S}}^{\ast}, \phi(\bm{x}_i) \rangle).
\end{equation}
By following the same approach as the one we took in the proof of \Cref{thm-teacher} for the teacher (with hard labels replaced by soft labels), we can show that:
\begin{equation}
    \label{eq:86}
    \beta_i = 
    \begin{cases}
    -\hat{\beta} \text{ for } i \in \{1,\ldots,\hat{n}\},
    \\
    \beta \text{ for } i \in \{\hat{n}+1,\ldots,n\},
    \\
    \hat{\beta} \text{ for } i \in \{n+1,\ldots,n+\hat{n}\},
    \\
    -\beta \text{ for } i \in \{n+\hat{n}+1,\ldots,2n\},
    \end{cases}
\end{equation}
where $\beta \in \mathbb{R}$ and $\hat{\beta} \in \mathbb{R}$ are obtained by solving the following two equations:
\begin{equation}
    \label{eq:87}
     \sigma\Big(c n \big(\beta - (\beta + \hat{\beta})p) - (1-c)\hat{\beta}\Big) = 2n \lambda \hat{\alpha} + 2n \lambda \hat{\beta},
\end{equation}
and
\begin{equation}
    \label{eq:88}
     \sigma\Big(c n \big(\beta - (\beta + \hat{\beta})p) + (1-c){\beta}\Big) = 1 - 2n \lambda {\alpha} - 2n \lambda {\beta}.
\end{equation}
We shall now show that $\beta \geq 0$ and $\hat{\beta} \geq 0$. We shall prove this by contradiction -- specifically, by showing that the other cases lead to a contradiction. 
\\
\\
\textbf{Case 1: $\beta \leq 0$ and $\hat{\beta} \leq 0$.} In this case:
\begin{equation}
    c n \big(\beta - (\beta + \hat{\beta})p) - (1-c)\hat{\beta} \geq c n \big(\beta - (\beta + \hat{\beta})p) + (1-c){\beta},
\end{equation}
which implies (by the increasing nature of the sigmoid function):
\begin{equation}
    \underbrace{\sigma\Big(c n \big(\beta - (\beta + \hat{\beta})p) - (1-c)\hat{\beta}\Big)}_{= 2n \lambda \hat{\alpha} + 2n \lambda \hat{\beta} \text{ from  \cref{eq:87}}} \geq \underbrace{\sigma\Big(c n \big(\beta - (\beta + \hat{\beta})p) + (1-c){\beta}\Big)}_{= 1 - 2n \lambda {\alpha} - 2n \lambda {\beta} \text{ from  \cref{eq:88}}}.
\end{equation}
Now using \cref{eq:87} and \cref{eq:88}, we get:
\begin{equation}
    2n \lambda \hat{\alpha} + 2n \lambda \hat{\beta} \geq 1 - 2n \lambda {\alpha} - 2n \lambda {\beta} \implies 2n \lambda \hat{\alpha} \geq 1 - 2n \lambda {\alpha} \underbrace{- 2n \lambda (\beta + \hat{\beta})}_{\geq 0} \implies 2n \lambda \hat{\alpha} \geq 1 - 2n \lambda {\alpha}.
\end{equation}
But this is a contradiction because as per \cref{eq:81} and \cref{eq:82}, we had:
\begin{equation}
    2n \lambda \hat{\alpha} < \frac{1}{2} \text{ and } 1 - 2n \lambda {\alpha} > \frac{1}{2} \implies 2n \lambda \hat{\alpha} < 1 - 2n \lambda {\alpha}.
\end{equation}
Hence, $\beta \leq 0$ and $\hat{\beta} \leq 0$ is not possible.
\\
\\
\textbf{Case 2: $\beta \geq 0$ and $\hat{\beta} \leq 0$.} In this case:
\begin{equation}
    c n \big(\beta - (\beta + \hat{\beta})p) - (1-c)\hat{\beta} \geq 0 \implies \underbrace{\sigma\Big(c n \big(\beta - (\beta + \hat{\beta})p) - (1-c)\hat{\beta}\Big)}_{= 2n \lambda \hat{\alpha} + 2n \lambda \hat{\beta} \text{ from  \cref{eq:87}}} \geq \frac{1}{2}.
\end{equation}
Using the above and \cref{eq:87}, we get that:
\begin{equation}
    2n \lambda \hat{\alpha} + \underbrace{2n \lambda \hat{\beta}}_{\leq 0} \geq \frac{1}{2} \implies 2n \lambda \hat{\alpha} \geq \frac{1}{2}.
\end{equation}
But this is again a contradiction as $2n \lambda \hat{\alpha} < \frac{1}{2}$ as per \cref{eq:81}. Hence, $\beta \geq 0$ and $\hat{\beta} \leq 0$ is also ruled out.
\\
\\
\textbf{Case 3: $\beta \leq 0$ and $\hat{\beta} \geq 0$.} In this case:
\begin{equation}
    c n \big(\beta - (\beta + \hat{\beta})p) + (1-c) {\beta} \leq 0 \implies \underbrace{\sigma\Big(c n \big(\beta - (\beta + \hat{\beta})p) + (1-c) {\beta}\Big)}_{= 1 - 2n \lambda {\alpha} - 2n \lambda {\beta} \text{ from  \cref{eq:88}}} \leq \frac{1}{2}.
\end{equation}
Using the above and \cref{eq:88}, we get that:
\begin{equation}
    1 - 2n \lambda {\alpha} - \underbrace{2n \lambda {\beta}}_{\leq 0} \leq \frac{1}{2} \implies 1 - 2n \lambda {\alpha} \leq \frac{1}{2}.
\end{equation}
But this is also a contradiction as $1 - 2n \lambda {\alpha} > \frac{1}{2}$ as per \cref{eq:82}. Hence, $\beta \leq 0$ and $\hat{\beta} \geq 0$ is also ruled out.
\\
\\
So, only $\beta \geq 0$ and $\hat{\beta} \geq 0$ is possible. 
Recall that $\beta$ and $\hat{\beta}$ are solutions to:
\begin{equation}
    \label{student-1}
     \sigma\Big(c n \big(\beta - (\beta + \hat{\beta})p) - (1-c)\hat{\beta}\Big) = 2n \lambda \hat{\alpha} + 2n \lambda \hat{\beta},
\end{equation}
and
\begin{equation}
    \label{student-2}
     \sigma\Big(c n \big(\beta - (\beta + \hat{\beta})p) + (1-c){\beta}\Big) = 1 - 2n \lambda {\alpha} - 2n \lambda {\beta}.
\end{equation}
Just like we obtained the teacher's predictions $\Big\{y_i^{\text{(T)}}\Big\}_{i=1}^{2n}$, the student's predictions are:
\begin{equation}
    \label{eq:97}
    y_i^{\text{(S)}} = 
    \begin{cases}
    2n \lambda \hat{\alpha} + 2n \lambda \hat{\beta} \text{ for } i \in \{1,\ldots,\hat{n}\},
    \\
    1 - 2n \lambda {\alpha} - 2n \lambda {\beta} \text{ for } i \in \{\hat{n}+1,\ldots,n\},
    \\
    1 - 2n \lambda \hat{\alpha} - 2n \lambda \hat{\beta} \text{ for } i \in \{n+1,\ldots,n+\hat{n}\},
    \\
    2n \lambda {\alpha} + 2n \lambda {\beta} \text{ for } i \in \{n+\hat{n}+1,\ldots,2n\}.
    \end{cases}
\end{equation}
Finally, replacing $2 n \lambda$ with $\hat{\lambda}$ in equations (\ref{student-1}), (\ref{student-2}) and (\ref{eq:97}), and plugging in $\mathcal{S}_{1,\textup{bad}} = \{1,\ldots,\hat{n}\}$,  $\mathcal{S}_{1,\textup{good}} = \{\hat{n}+1,\ldots,n\}$, $\mathcal{S}_{0,\textup{bad}}= \{n+1,\ldots,n+\hat{n}\}$ and $\mathcal{S}_{0,\textup{good}} = \{n+\hat{n}+1,\ldots,2n\}$ throughout gives us the desired result.
\end{proof}

\subsection{Step 3 in Detail}
\label{step-3}
\begin{proof}
Here, we shall obtain analytical expressions for the teacher's and student's predictions by solving 
\cref{eq:71-oct20} and \cref{eq:72-oct20} (in \Cref{thm-teacher}) for the teacher and then \cref{eq:87-oct20} and \cref{eq:88-oct20} (in \Cref{thm-student}) for the student.
Our approach will involve employing the first-order Maclaurin series expansion of the sigmoid function; specifically, we will use:
\begin{equation}
    \label{eq:100-sept7}
    \sigma(z) = \frac{1}{2} + \frac{z}{4} + {\varepsilon(z)},
\end{equation}
where $\varepsilon(z)$ is the residual error function. Note that:
\begin{equation}
\label{eq:101-sept7}
\varepsilon(z) 
    \begin{cases}
    < 0 \text{ for } z > 0 \text{ or equivalently when } \sigma(z) > \frac{1}{2} \\
    = 0 \text{ for } z = 0 \text{ or equivalently when } \sigma(z) = \frac{1}{2}\\
    > 0 \text{ for } z < 0 \text{ or equivalently when } \sigma(z) < \frac{1}{2}.
    \end{cases}
\end{equation}
It also holds that $\varepsilon(z)$ is a decreasing function. So,
\begin{equation}
    \label{eq:101-1-sept7}
    \sup_{z \in [-1,0]} \varepsilon(z) = \varepsilon(-1) < 0.02 \text{ }\text{ or equivalently } \sup_{z: \sigma(z) \in [\sigma(-1),0.5]} \varepsilon(z) < 0.02,
\end{equation}
and
\begin{equation}
    \label{eq:101-2-sept7}
    \inf_{z \in [0,1]} \varepsilon(z) = \varepsilon(1) > -0.02 \text{ }\text{ or equivalently } \inf_{z: \sigma(z) \in [0.5,\sigma(1)]} \varepsilon(z) > -0.02.
\end{equation}
Let us start with the \textbf{teacher}. Rewriting \cref{eq:71-oct20} and \cref{eq:72-oct20} while using the Maclaurin series expansion of the sigmoid function (from \cref{eq:100-sept7}) and the fact that $\sigma(-z) = 1 - \sigma(z)$ $\forall$ $z \in \mathbb{R}$, we have:
\begin{equation}
    \label{eq:102-sept7}
    \hat{\lambda} \hat{\alpha} = \sigma\Big(c n \big(\alpha - (\alpha + \hat{\alpha})p) - (1-c)\hat{\alpha}\Big) = \frac{1}{2} + \Bigg(\frac{c n \big(\alpha - (\alpha + \hat{\alpha})p) - (1-c)\hat{\alpha}}{4}\Bigg) + \varepsilon_1,
\end{equation}
and
\begin{equation}
    \label{eq:103-sept7}
    \hat{\lambda} {\alpha} = \sigma\Big(-c n \big(\alpha - (\alpha + \hat{\alpha})p) - (1-c){\alpha}\Big) = \frac{1}{2} - \Bigg(\frac{c n \big(\alpha - (\alpha + \hat{\alpha})p) + (1-c){\alpha}}{4}\Bigg) + \varepsilon_2,
\end{equation}
for some real numbers $\varepsilon_1, \varepsilon_2$. Solving the above two equations in the limit of $n \to \infty$, when $c = \Theta(1)$ and $\hat{\lambda} < \mathcal{O}(n)$ (this will be ensured subsequently), gives us:
\begin{equation}
    \label{eq:106-sept7}
    \lim_{n \to \infty} \alpha = \frac{p (1+\varepsilon_1+\varepsilon_2)}{\hat{\lambda} + \frac{1-c}{4}} \text{ and }  \lim_{n \to \infty} \hat{\alpha} = \frac{(1-p) (1+\varepsilon_1+\varepsilon_2)}{\hat{\lambda} + \frac{1-c}{4}}.
\end{equation}
Henceforth, we shall drop the $\lim_{n \to \infty}$ notation, and it is implied directly.
\\
\\
Let us now bound $\varepsilon_1+\varepsilon_2$ by imposing some more constraints. First, recall from \cref{eq:81} and \cref{eq:82} that we want $\hat{\lambda} \hat{\alpha} < \frac{1}{2}$ (i.e., the teacher does \textit{not} correctly classify the incorrectly labeled points) and $\hat{\lambda} {\alpha} < \frac{1}{2}$ (i.e., the teacher correctly classifies the correctly labeled points). Now since we are solving \cref{eq:102-sept7} and \cref{eq:103-sept7}, we must have $\hat{\lambda} \hat{\alpha} = \sigma\big(c n \big(\alpha - (\alpha + \hat{\alpha})p) - (1-c)\hat{\alpha}\big) < \frac{1}{2}$ and $\hat{\lambda} {\alpha} = \sigma\big(-c n \big(\alpha - (\alpha + \hat{\alpha})p) - (1-c){\alpha}\big) < \frac{1}{2}$; in this case, we must have that $\varepsilon_1 > 0$ and $\varepsilon_2 > 0$ from \cref{eq:101-sept7}. 
Next, we shall obtain upper bounds for $\varepsilon_1$ and $\varepsilon_2$. Using \cref{eq:103-sept7}, if $\sigma\Big(-c n \big(\alpha - (\alpha + \hat{\alpha})p) - (1-c){\alpha}\Big) = \hat{\lambda} \alpha > \sigma(-1)$, then $\varepsilon_2 < 0.02$ from \cref{eq:101-1-sept7}. Note that since $\varepsilon_1 + \varepsilon_2 > 0$ and $p < \frac{1}{2}$, $\hat{\alpha} > {\alpha}$. So if $\hat{\lambda} \alpha > \sigma(-1)$ holds, then so does $\hat{\lambda} \hat{\alpha} > \sigma(-1)$, in which case $\varepsilon_1 < 0.02$. But using the fact that $\varepsilon_1 + \varepsilon_2 > 0$, having
\begin{equation}
    \frac{\hat{\lambda} p}{\hat{\lambda} + \frac{1-c}{4}} > \sigma(-1),
\end{equation}
ensures $\hat{\lambda} \alpha > \sigma(-1)$ (as well as, $\hat{\lambda} \hat{\alpha} > \sigma(-1)$). Recalling that $r = \frac{(1-c)/4}{\hat{\lambda}}$ and using the fact that $\sigma(-1) = \frac{1}{1+e}$, we get:
\begin{equation}
    p > \frac{1+r}{1+e}.
\end{equation}
But we must also have $p < \frac{1}{2}$ due to which we should have $\frac{1+r}{1+e} < \frac{1}{2}$; this holds when:
\begin{equation}
    r = \frac{(1-c)/4}{\hat{\lambda}} < \frac{e-1}{2} \implies \hat{\lambda} > \frac{1-c}{2(e-1)}.
\end{equation}
The above two conditions can be evaluated and simplified a bit more to get:
\begin{equation}
    p > \frac{1+r}{3.7} \text{ and } r < 0.85 \text{ or } \hat{\lambda} > \frac{1-c}{3.4},
\end{equation}
and under these conditions, $\varepsilon_1 < 0.02$ and $\varepsilon_2 < 0.02$. Combining all this, \cref{eq:106-sept7} can be rewritten as (while also dropping the $\lim_{n \to \infty}$ notation):
\begin{equation}
    \label{eq:107-sept7}
    \alpha = \frac{p (1+\zeta)}{\hat{\lambda} + \frac{1-c}{4}} \text{ and }  \hat{\alpha} = \frac{(1-p) (1+\zeta)}{\hat{\lambda} + \frac{1-c}{4}},
\end{equation}
where $\zeta \in (0,0.04)$. Next, recall that we want $\hat{\lambda} {\alpha} < \frac{1}{2}$ and $\hat{\lambda} \hat{\alpha} < \frac{1}{2}$. Since, $\hat{\alpha} > {\alpha}$, both these conditions can be satisfied by just ensuring $\hat{\lambda} \hat{\alpha} < \frac{1}{2}$ which itself can be ensured by imposing:
\begin{equation}
    \frac{1.04 \hat{\lambda} (1-p)}{\hat{\lambda} + \frac{1-c}{4}} = \frac{1.04 (1-p)}{1 + r} < \frac{1}{2}.
\end{equation}
The above is obtained by making use of \cref{eq:107-sept7} and the fact that $\zeta < 0.04$. This gives us:
\begin{equation}
    p > 1 - \Big(\frac{1+r}{2.08}\Big).
\end{equation}
But again, we must have $p < \frac{1}{2}$ due to which we should also have $1 - \Big(\frac{1+r}{2.08}\Big) < \frac{1}{2}$; this holds when:
\begin{equation}
    r = \frac{(1-c)/4}{\hat{\lambda}} > 0.04 \implies \hat{\lambda} < \frac{1-c}{0.16}.
\end{equation}
So to recap, for the teacher, we have:
\begin{equation}
    \label{sept8-eq:115}
    \alpha = \frac{p (1+\zeta)}{\hat{\lambda} + \frac{1-c}{4}} \text{ and }  \hat{\alpha} = \frac{(1-p) (1+\zeta)}{\hat{\lambda} + \frac{1-c}{4}},
\end{equation}
where $\zeta \in (0,0.04)$, with $\hat{\lambda} \alpha < \hat{\lambda} \hat{\alpha} < \frac{1}{2}$ for $p > \max \Big(1 - \Big(\frac{1+r}{2.08}\Big), \frac{1+r}{3.7}\Big)$. All this is valid when $r \in \big(0.04, 0.85\big)$ or equivalently when $\hat{\lambda} \in \Big(\frac{1-c}{3.4}, \frac{1-c}{0.16}\Big)$.

{Let us do a sanity check to verify that the above range of $p$ ensures $\hat{\lambda} \alpha < \hat{\lambda} \hat{\alpha} < \frac{1}{2}$. First, we shall show that $\zeta = \varepsilon_1 + \varepsilon_2 \geq 0$ by contradiction; so suppose $\zeta < 0$. Then using \cref{eq:107-sept7}, we have $\hat{\lambda} \hat{\alpha} = \frac{(1+\zeta)(1-p)}{1+r} < \frac{1.04(1-p)}{1+r} < \frac{1}{2}$, where the last step follows because $p > 1 - \big(\frac{1+r}{2.08}\big)$. But if $\hat{\lambda} \hat{\alpha} < \frac{1}{2}$, we must have $\varepsilon_1 > 0$ (using \cref{eq:101-sept7}) as we are solving $\hat{\lambda} \hat{\alpha} = \sigma\big(c n \big(\alpha - (\alpha + \hat{\alpha})p\big) - (1-c)\hat{\alpha}\big)$. Similarly, we must also have $\varepsilon_2 > 0$ as $\hat{\lambda} {\alpha}$ is also $< \frac{1}{2}$ (which is easy to see because $0 < {\alpha} < \hat{\alpha}$ since $p < \frac{1}{2}$). But then $\zeta = \varepsilon_1 + \varepsilon_2 > 0$, which is a contradiction to our earlier supposition of $\zeta < 0$. Hence, we must have $\zeta \geq 0$. But then using \cref{eq:107-sept7}, we have $\hat{\lambda} {\alpha} = \frac{(1+\zeta)p}{1+r} > \frac{p}{1+r} > \sigma(-1)$, where the last step follows because $p > \frac{1+r}{3.7}$. But if $\hat{\lambda} {\alpha} > \sigma(-1)$, we must have $\varepsilon_2 < 0.02$ (using \cref{eq:101-1-sept7}) as we are solving $\hat{\lambda} {\alpha} = \sigma\big(-c n \big(\alpha - (\alpha + \hat{\alpha})p\big) - (1-c){\alpha}\big)$. Similarly, we must also have $\varepsilon_1 < 0.02$ as $\hat{\lambda} \hat{\alpha}$ is also $> \sigma(-1)$ (again, because ${\alpha} < \hat{\alpha}$). Combining all this, we get $\zeta = \varepsilon_1 + \varepsilon_2 < 0.04$. So, $\hat{\lambda} {\alpha} < \hat{\lambda} \hat{\alpha} = \frac{(1+\zeta)(1-p)}{1+r} < \frac{1.04(1-p)}{1+r} < \frac{1}{2}$, where the last step follows because $p > 1 - \big(\frac{1+r}{2.08}\big)$. So our prescribed range of $p$ indeed ensures $\hat{\lambda} \alpha < \hat{\lambda} \hat{\alpha} < \frac{1}{2}$.}
\\
\\
Let us now move onto the \textbf{student}. Rewriting \cref{eq:87-oct20} and \cref{eq:88-oct20} while using the Maclaurin series expansion of the sigmoid function (from \cref{eq:100-sept7}) and the fact that $\sigma(-z) = 1 - \sigma(z)$ $\forall$ $z \in \mathbb{R}$, we get:
\begin{equation}
    \label{eq:117-sept8}
     \hat{\lambda} \hat{\alpha} + \hat{\lambda} \hat{\beta} = \sigma\Big(c n \big(\beta - (\beta + \hat{\beta})p) - (1-c)\hat{\beta}\Big) =  \frac{1}{2} + \Bigg(\frac{c n \big(\beta - (\beta + \hat{\beta})p) - (1-c)\hat{\beta}}{4}\Bigg) + \varepsilon_3,
\end{equation}
and
\begin{equation}
    \label{eq:118-sept8}
     \hat{\lambda} {\alpha} + \hat{\lambda} {\beta} = \sigma\Big(- c n \big(\beta - (\beta + \hat{\beta})p) - (1-c){\beta}\Big) = \frac{1}{2} - \Bigg(\frac{c n \big(\beta - (\beta + \hat{\beta})p) + (1-c){\beta}}{4}\Bigg) + \varepsilon_4,
\end{equation}
for some real numbers $\varepsilon_3$ and $\varepsilon_4$. Solving the above two equations in the limit of $n \to \infty$ (when $c = \Theta(1)$ and $\hat{\lambda} < \mathcal{O}(n)$) while using the values of $\alpha$ and $\hat{\alpha}$ from \cref{sept8-eq:115}, we get:
\begin{equation}
    \label{eq:118-2-sept8}
    \lim_{n \to \infty} \beta = \frac{p}{\hat{\lambda} + \frac{1-c}{4}} \Big(-\frac{\hat{\lambda}(1+\zeta)}{\hat{\lambda} + \frac{1-c}{4}} + (1+\zeta')\Big) \text{ and } \lim_{n \to \infty} \hat{\beta} = \frac{1-p}{\hat{\lambda} + \frac{1-c}{4}} \Big(-\frac{\hat{\lambda}(1+\zeta)}{\hat{\lambda} + \frac{1-c}{4}} + (1+\zeta')\Big),
\end{equation}
with $\zeta' := \varepsilon_3 + \varepsilon_4$. Again, we shall drop the $\lim_{n \to \infty}$ notation subsequently, and it is implied directly. 
\\
\\
Next, we get:
\begin{equation}
    \label{eq:119-sept8}
    \alpha + \beta = \frac{p}{\hat{\lambda} + \frac{1-c}{4}} \Bigg(\frac{(\frac{1-c}{4})(1+\zeta)}{\hat{\lambda} + \frac{1-c}{4}} + (1+\zeta')\Bigg),
\end{equation}
and
\begin{equation}
    \label{eq:120-sept8}
    \hat{\alpha} + \hat{\beta} = \frac{1-p}{\hat{\lambda} + \frac{1-c}{4}} \Bigg(\frac{(\frac{1-c}{4})(1+\zeta)}{\hat{\lambda} + \frac{1-c}{4}} + (1+\zeta')\Bigg).
\end{equation}
Now, recall that if $\hat{\lambda}(\hat{\alpha} + \hat{\beta}) > \frac{1}{2}$ and $\hat{\lambda}({\alpha} + {\beta}) < \frac{1}{2}$, then the student has managed to correctly classify all the points in the training set. Let us first impose $\hat{\lambda}(\hat{\alpha} + \hat{\beta}) \in \big(\frac{1}{2}, \sigma(1)\big)$. Then, since we are solving  \cref{eq:117-sept8}, $\sigma\Big(c n \big(\beta - (\beta + \hat{\beta})p) - (1-c)\hat{\beta}\Big) \in \big(\frac{1}{2}, \sigma(1)\big)$, and so $\varepsilon_3 \in (-0.02,0)$ using \cref{eq:101-2-sept7}. Now, we shall be imposing $\hat{\lambda}({\alpha} + {\beta}) < \frac{1}{2}$. Additionally, we ensured earlier that $\hat{\lambda} \alpha > \sigma(-1)$ and showed in \Cref{thm-student} that $\beta \geq 0$. Therefore, we will have $\hat{\lambda}({\alpha} + {\beta}) \in \big(\sigma(-1), \frac{1}{2}\big)$. Since we are solving  \cref{eq:118-sept8}, $\sigma\Big(- c n \big(\beta - (\beta + \hat{\beta})p) - (1-c){\beta}\Big) \in \big(\sigma(-1), \frac{1}{2}\big)$, due to which $\varepsilon_4 \in (0,0.02)$ using \cref{eq:101-1-sept7}. Thus, $\zeta' = \varepsilon_3 + \varepsilon_4 \in (-0.02,0.02)$.

Now, using \cref{eq:119-sept8} and \cref{eq:120-sept8}, and plugging in $r = \frac{(1-c)/4}{\hat{\lambda}}$, we get:
\begin{equation}
    \label{eq:122-sept8}
    \hat{\lambda}({\alpha} + {\beta}) = \frac{p}{1 + r} \Big(\frac{r(1+\zeta)}{1 + r} + (1+\zeta')\Big),
\end{equation}
and
\begin{equation}
    \label{eq:122-2-sept8}
    \hat{\lambda}(\hat{\alpha} + \hat{\beta}) = \frac{1-p}{1 + r} \Big(\frac{r(1+\zeta)}{1 + r} + (1+\zeta')\Big),
\end{equation}
with $\zeta \in (0,0.04)$ and $\zeta' \in (-0.02,0.02)$. Let us first ensure $\hat{\lambda}(\hat{\alpha} + \hat{\beta}) \in \big(\frac{1}{2}, \sigma(1)\big)$. Using the bounds on $\zeta$ and $\zeta'$, this can be ensured by having:
\begin{equation}
    \frac{1-p}{1 + r} \Big(\frac{1.04 r}{1 + r} + 1.02\Big) < \sigma(1) = \frac{e}{1+e},
\end{equation}
and 
\begin{equation}
    \frac{1-p}{1 + r} \Big(\frac{r}{1 + r} + 0.98\Big) > \frac{1}{2}.
\end{equation}
Solving and simplifying the above two equations gives us:
\begin{equation}
    p \in \Big(1 -  \frac{0.7 (1+r)^2}{1 + 2r}, 1 - \frac{0.51 (1+r)^2}{1 + 2r}\Big).
\end{equation}
Note that:
\begin{equation}
    1 -  \frac{0.7 (1+r)^2}{1 + 2r} < 1 -  \frac{0.51 (1+r)^2}{1 + 2r} < \frac{1}{2}
\end{equation}
for all $r > 0$, and so we are good here. But recall that from the teacher's analysis (see the discussion after \cref{sept8-eq:115}), we had $p > \max \Big(1 - \Big(\frac{1+r}{2.08}\Big), \frac{1+r}{3.7}\Big)$. Combining everything, our current bound on $p$ is:
\begin{equation}
    \label{eq:128-sept8}
    p \in \Bigg(\max \Big(1 - \Big(\frac{1+r}{2.08}\Big), \frac{1+r}{3.7}, 1 -  \frac{0.7 (1+r)^2}{1 + 2r}\Big), 1 -  \frac{0.51 (1+r)^2}{1 + 2r} \Bigg).
\end{equation}
But the above is only meaningful when the lower bound on $p$ is smaller than the upper bound on it. So we must find the range of $r$ for which:
\[1 - \Big(\frac{1+r}{2.08}\Big) < 1 - \frac{0.51 (1+r)^2}{1 + 2r} \text{ and } \frac{1+r}{3.7} < 1 -  \frac{0.51 (1+r)^2}{1 + 2r}.\]
$1 -  \frac{0.7 (1+r)^2}{1 + 2r}$ is trivially smaller than $1 -  \frac{0.51 (1+r)^2}{1 + 2r}$ so we do not need to worry about that. Combining the range of $r$ obtained from the above equation with the previous range of $r \in (0.04,0.85)$ (that we obtained from the teacher), we get:
\begin{equation}
    r \in [0.07,0.54] \implies \hat{\lambda} \in \Big[\frac{1-c}{2.16}, \frac{1-c}{0.28}\Big].
\end{equation}
Finally, we need to ensure $\hat{\lambda} (\alpha + \beta) < \frac{1}{2}$. Using \cref{eq:122-sept8} and the bounds on $\zeta$ and $\zeta'$, this can be ensured by imposing:
\begin{equation}
    \frac{p}{1 + r} \Big(\frac{1.04 r}{1 + r} + 1.02\Big) < \frac{1}{2}.
\end{equation}
This can be simplified to:
\[p < \frac{0.485 (1+r)^2}{1+2r}.\]
But recall that we already have an upper bound on $p$ of $1 -  \frac{0.51 (1+r)^2}{1 + 2r}$. It can be checked that $1 -  \frac{0.51 (1+r)^2}{1 + 2r} < \frac{0.485 (1+r)^2}{1+2r}$ for $r \geq 0.08$. Thus, for $r \in [0.08,0.54] \text{ or } \hat{\lambda} \in \Big[\frac{1-c}{2.16}, \frac{1-c}{0.32}\Big]$, our bound on $p$ remains the same as \cref{eq:128-sept8}, i.e.,
\begin{equation}
    p \in \Bigg(\max \Big(1 - \Big(\frac{1+r}{2.08}\Big), \frac{1+r}{3.7}, 1 -  \frac{0.7 (1+r)^2}{1 + 2r}\Big), 1 -  \frac{0.51 (1+r)^2}{1 + 2r} \Bigg).
\end{equation}
Finally, to simplify our bound on $p$ a bit, we consider $r \in [0.10,0.54]$, where:
\begin{equation}
    \max \Big(1 - \Big(\frac{1+r}{2.08}\Big), \frac{1+r}{3.7}, 1 -  \frac{0.7 (1+r)^2}{1 + 2r}\Big) = \max \Big(1 - \Big(\frac{1+r}{2.08}\Big), \frac{1+r}{3.7}\Big).
\end{equation}
Thus, our final bound on $p$ is:
\begin{equation}
    p \in \Bigg(\max \Big(1 - \Big(\frac{1+r}{2.08}\Big), \frac{1+r}{3.7}\Big), 1 -  \frac{0.51 (1+r)^2}{1 + 2r} \Bigg),
\end{equation}
for
\begin{equation}
    r \in [0.10,0.54] \text{ or } \hat{\lambda} \in \Big[\frac{1-c}{2.16}, \frac{1-c}{0.40}\Big].
\end{equation}
Finally, note that the prescribed range of $\hat{\lambda}$ is $< \mathcal{O}(n)$ (as required in \cref{eq:106-sept7} and \cref{eq:118-2-sept8}) since $c = \Theta(1)$. So we are good here.

Also, since $n \to \infty$, the generalization gap (i.e., population accuracy - training accuracy) $\to 0$; see for e.g., the margin bounds (with $\ell_2$-regularization) in \cite{kakade2008complexity} where it is shown that the generalization gap goes down as $\mathcal{O}({1}/{\sqrt{n}})$. Therefore, the population accuracy of the student (resp., teacher) is the same as the training accuracy of the student (resp., teacher).
\\
\\
This finishes the proof.
\end{proof}

\section{Proof of Corollary~\ref{rmk-3}}
\label{pf-rmk-3}
\begin{proof}
From \cref{eq:80-oct20}, we have:
\begin{equation}
    \label{eq:120-oct29}
    \Delta_{\textup{T}} = 1 - \hat{\lambda} (\alpha + \hat{\alpha}),
\end{equation}
where $\hat{\lambda} = 2 n \lambda$. Similarly, using \cref{eq:97-oct20}, we have:
\begin{equation}
    \label{eq:121-oct29}
    \Delta_{\textup{S}} = 1 - \hat{\lambda} (\alpha + \beta + \hat{\alpha} + \hat{\beta}).
\end{equation}
Next, using \cref{eq:107-sept7} in \cref{eq:120-oct29}, we get:
\begin{equation}
    \label{eq:121-new-oct29}
    \Delta_{\textup{T}} = 1 - \Big(\frac{1+\zeta}{1+r}\Big),
\end{equation}
where $\zeta \in (0,0.04)$ and $r = \frac{(1-c)}{4 \hat{\lambda}}$. Similarly, using \cref{eq:122-sept8} and \cref{eq:122-2-sept8}, we get:
\begin{equation}
    \label{eq:122-oct29}
    \Delta_{\textup{S}} = 1 - \frac{1}{1 + r} \Big(\frac{r(1+\zeta)}{1 + r} + (1+\zeta')\Big),
\end{equation}
where $\zeta' \in (-0.02,0.02)$. Rewriting \cref{eq:122-oct29} slightly, we get:
\begin{flalign}
    \Delta_{\textup{S}} & = 1 - \Big(\frac{1+\zeta}{1 + r}\Big) \Big(\frac{r}{1 + r} + \frac{1+\zeta'}{1+\zeta}\Big)
    \\
    \label{eq:124-oct29}
    & \leq  1 - \Big(\frac{1+\zeta}{1 + r}\Big) \Big(\frac{0.1}{1.1} + \frac{0.98}{1.04}\Big)
    \\
    & < 1 - \Big(\frac{1+\zeta}{1 + r}\Big)
    \\
    & = \Delta_{\textup{T}}.
\end{flalign}
In \cref{eq:124-oct29}, we have used the fact that $r \geq 0.1$ (from the condition of \Cref{thm1}), $\zeta' \geq -0.02$ and $\zeta \leq 0.04$. 
\end{proof}

\section{More Empirical Results}
\label{more-expts-supp}
\subsection{Verifying Remark~\ref{rmk-xi} (Continued)}
\label{sec-expts-2-supp}
In \Cref{sec-expts-2}, we compared the performance of different values of $\xi$ with 50\% corruption. In \Cref{tab-expts2-supp-jan11}, we show results with 30\% corruption in Stanford Cars and Flowers-102\footnote{For Flowers-102, we include the provided validation set in the training set.} with the same weight decay value as in \Cref{sec-expts} (viz., $5 \times 10^{-4}$); even here, the improvement with $\xi > 1$ is more than that with $\xi \leq 1$. Again, the individual accuracies of the teacher and student and the experimental details are in \Cref{expt-detail}.

\begin{table}[!htb]
\centering 
\begin{subtable}{.5\linewidth}\centering
\begin{tabular}{|c|c|}
\hline
$\xi$ & \makecell{Improvement of student \\ over teacher (i.e., $\xi=0$)}
\\
\hline
0.2 & $0.89 \pm 0.15$ \%
\\
0.5 & $2.15 \pm 0.06$ \%
\\
0.7 & $2.75 \pm 0.10$ \%
\\
1.0 & $3.32 \pm 0.11$ \%
\\
\rowcolor{Gray}
\textbf{1.2} & $\bm{3.53 \pm 0.16}$ \%
\\
1.5 & $3.38 \pm 0.12$ \% 
\\
1.7 & $2.96 \pm 0.24$ \%
\\
2.0 & $1.79 \pm 0.29$ \%
\\
\hline
\end{tabular}
\caption{30\% Random Corruption in Stanford Cars \\ with ResNet-34}
\label{tab:2d}
\end{subtable}%
\begin{subtable}{.5\linewidth}\centering
\begin{tabular}{|c|c|}
\hline
$\xi$ & \makecell{Improvement of student \\ over teacher (i.e., $\xi=0$)}
\\
\hline
0.5 & $-0.12 \pm 0.20$ \%
\\
1.0 & $0.54 \pm 0.02$ \%
\\
1.5 & $0.86 \pm 0.01$ \%
\\
2.0 & $1.57 \pm 0.34$ \%
\\
2.5 & $2.05 \pm 0.27$ \%
\\
{3.0} & $2.49 \pm 0.25$ \%
\\
{3.5} & $2.62 \pm 0.12$ \%
\\
{4.0} & $2.87 \pm 0.09$ \%
\\
{4.5} & $3.01 \pm 0.22$ \%
\\
\rowcolor{Gray}
\textbf{5.0} & $\bm{3.21 \pm 0.07}$ \%
\\
{5.5} & $2.94 \pm 0.33$ \%
\\
{6.0} & $3.06 \pm 0.09$ \%
\\
\hline 
\end{tabular}
\caption{30\% Adversarial Corruption in Flowers-102 \\ with ResNet-34}
\label{tab:2e}
\end{subtable}%
\vspace{-0.2 cm}
\caption{Average ($\pm$ 1 std.) improvement of student over teacher (i.e., student's test set accuracy - teacher's test set accuracy) with different values of the imitation parameter $\xi$. Just like in \Cref{tab-expts2}, note that the value of $\xi$ yielding the biggest improvement is more than 1.}
\label{tab-expts2-supp-jan11}
\end{table}

\subsection{Results with Other Weight Decay Values}
\label{new-expts=jan25}
All our previous results were with weight decay $=5 \times 10^{-4}$. Here, we verify Remarks \ref{rmk-xi} and \ref{dist-utility} for two other weight decay values which are $1 \times 10^{-3}$ and $1 \times 10^{-4}$.

\vspace{0.2 cm}
\noindent \textbf{(i) Verifying Remark~\ref{rmk-xi}:} In \Cref{new-expts-jan25-1-mini}, we list the student's improvement over the teacher (i.e., student's test accuracy - teacher's test accuracy) averaged across 3 different runs for different values of $\xi$ in the case of (a) Caltech-256 with 50\% random corruption \& weight decay $= 1 \times 10^{-4}$ and (b) CIFAR-100 with 50\% hierarchical corruption \& weight decay $= 1 \times 10^{-3}$. As was the case with weight decay $= 5 \times 10^{-4}$ in Tables \ref{tab-expts2} and \ref{tab-expts2-supp-jan11}, note that the value of $\xi$ yielding the biggest improvement here is also $> 1$. 

\begin{table}[!htb]
\centering 
\begin{subtable}{.5\linewidth}\centering
\begin{tabular}{|c|c|}
\hline
$\xi$ & \makecell{Improvement of student \\ over teacher}
\\
\hline
0.2 & $2.04 \pm 0.16$ \%
\\
0.5 & $5.05 \pm 0.10$ \%
\\
0.7 & $6.82 \pm 0.16$ \%
\\
{1.0} & $9.07 \pm 0.17$ \%
\\
{1.2} & $10.43 \pm 0.15$ \%
\\
{1.5} & $11.78 \pm 0.16$ \%
\\
{1.7} & $12.30 \pm 0.19$ \%
\\
\rowcolor{Gray}
\textbf{2.0} & $\bm{13.07 \pm 0.20}$ \%
\\
\rowcolor{Gray}
\textbf{2.2} & $\bm{12.89 \pm 0.43}$ \%
\\
2.5 & $11.74 \pm 0.60$ \%
\\
\hline
\end{tabular}
\caption{\textbf{ResNet-34:} 50\% Random Corruption in \\ Caltech-256 with weight decay $=1 \times 10^{-4}$}
\label{tab:jan25-a}
\end{subtable}%
\begin{subtable}{.5\linewidth}\centering
\begin{tabular}{|c|c|}
\hline
$\xi$ & \makecell{Improvement of student \\ over teacher}
\\
\hline
0.2 & $0.39 \pm 0.09$ \% 
\\
0.5 & $1.90 \pm 0.08$ \%
\\
0.7 & $2.80 \pm 0.09$ \%
\\
{1.0} & $3.82 \pm 0.04$ \%
\\
{1.2} & $4.17 \pm 0.05$ \%
\\
\rowcolor{Gray}
\textbf{1.5} & $\bm{4.51 \pm 0.07}$ \%
\\
\rowcolor{Gray}
\textbf{1.7} & $\bm{4.56 \pm 0.02}$ \%
\\
2.0 & $4.15 \pm 0.07$ \%
\\
\hline
\end{tabular}
\caption{\textbf{ResNet-34:} 50\% Hierarchical Corruption in \\ CIFAR-100 with weight decay $=1 \times 10^{-3}$}
\label{tab:jan25-b}
\end{subtable}%
\caption{Average ($\pm$ 1 std.) improvement of student over teacher (i.e., student's test set accuracy - teacher's test set accuracy) with different values of $\xi$. Just like with weight decay $=5 \times 10^{-4}$ (Tables \ref{tab-expts2} and \ref{tab-expts2-supp-jan11}), note that the value of $\xi$ yielding the biggest improvement with both weight decay values here is more than 1. This is consistent with our message in \Cref{rmk-xi}.}
\label{new-expts-jan25-1-mini}
\vspace{0.4 cm}
\end{table}

\vspace{0.2 cm}
\noindent \textbf{(ii)Verifying Remark~\ref{dist-utility}:} The setup is the same as \Cref{dist-utility-ver}, i.e., the student is trained with $\xi=1$. In \Cref{new-expts-jan25-2-mini}, we show the student's improvement over the teacher averaged across 3 different runs for varying degrees of label corruption in the case of (a) Caltech-256 with random corruption \& weight decay $= 1 \times 10^{-4}$ and (b) CIFAR-100 with hierarchical corruption \& weight decay $= 1 \times 10^{-3}$. As was the case with weight decay $= 5 \times 10^{-4}$ in \Cref{tab-expts1}, note that the improvement of the student (trained with $\xi=1$) over the teacher increases as the corruption level increases.

\begin{table}[!htb]
\begin{subtable}{\linewidth}\centering
\begin{tabular}{|c|c|}
\hline
Corruption level & \makecell{Improvement of student over teacher}
\\
\hline
0\% & $0.25 \pm 0.04$ \%
\\
10\% & $1.39 \pm 0.10$ \%
\\
30\% & $6.31 \pm 0.11$ \%
\\
50\% & $9.07 \pm 0.17$ \%
\\
\hline
\end{tabular}
\caption{Random Corruption in Caltech-256 with weight decay $=1 \times 10^{-4}$}
\label{tab:jan25-c}
\end{subtable}%
\vspace{0.3 cm}
\begin{subtable}{\linewidth}\centering
\begin{tabular}{|c|c|}
\hline
Corruption level & \makecell{Improvement of student over teacher}
\\
\hline
0\% & $-0.73 \pm 0.09$ \%
\\
10\% & $0.03 \pm 0.10$ \%
\\
30\% & $1.77 \pm 0.19$ \% 
\\
50\% & $3.82 \pm 0.04$ \%
\\
\hline
\end{tabular}
\caption{Hierarchical Corruption in CIFAR-100 with weight decay $=1 \times 10^{-3}$}
\label{tab:jan25-d}
\end{subtable}%
\caption{\textbf{ResNet-34 with $\xi=1$:} Average ($\pm$ 1 std.) improvement of student over teacher (i.e., student's test set accuracy - teacher's test set accuracy) with varying levels of label corruption. Just like with weight decay $= 5 \times 10^{-4}$ (\Cref{tab-expts1}), note that the improvement of the student over the teacher increases as the corruption level increases. This is consistent with our claim in \Cref{dist-utility}.}
\label{new-expts-jan25-2-mini}
\end{table}

\vspace{0.2 cm}
\noindent The individual accuracies of the teacher and student and the experimental details appear in \Cref{expt-detail}.

\clearpage

\section{Detailed Empirical Results}
\label{expt-detail}
We list the individual accuracies of the teacher and student (along with the student's improvement) corresponding to the results of \Cref{tab-expts2} in Tables \ref{supp-tab1-oct31}-\ref{supp-tab2-dec26}, \Cref{tab-expts2-supp-jan11} in Tables \ref{new1}-\ref{new2}, \Cref{tab-expts1} in Tables \ref{supp-tab1}-\ref{supp-tab2-1}, \Cref{new-expts-jan25-1-mini} in Tables \ref{new-expt-jan25-1a-full}-\ref{new-expt-jan25-1b-full} and \Cref{new-expts-jan25-2-mini} in Tables \ref{new-expt-jan25-2a-full}-\ref{new-expt-jan25-2b-full}.
\\
\\
\textbf{Experimental Details:} In all the cases, we use SGD with momentum = 0.9 and batch size = 128 for training. Since we are training only the softmax layer (i.e., doing logistic regression), we use an exponentially decaying learning rate scheme with decay parameter = 0.98 (for every epoch) and the initial learning rate is tuned\footnote{The tuning is done by picking the learning rate which yields the lowest training loss with the observed (noisy) labels. This is consistent with our theory setup where we assume convergence to the optimum of the training loss w.r.t. the observed labels.} over $\{0.001, 0.005, 0.01, 0.05, 0.1, 0.5\}$. The maximum number of epochs is 200. 

\vspace{0.3 cm}
\begin{table}[!htb]
\begin{minipage}{\textwidth}
  \centering
\begin{tabular}{|c|c|c|c|c|c|}
\hline
$\xi$ & Student's test acc. & \makecell{Improvement of student \\ over teacher}
\\
\hline
0.0 (=Teacher) & $57.61 \pm 0.03$ \% & $0$ \%  
\\
\hline
0.2 & $59.83 \pm 0.12$ \% & $2.22 \pm 0.12$ \% 
\\
\hline
0.5 & $62.79 \pm 0.04$ \% & $5.18 \pm 0.03$ \%
\\
\hline
0.7 & $64.45 \pm 0.09$ \% & $6.84 \pm 0.06$ \%
\\
\hline
1.0 & $66.15 \pm 0.27$ \% & $8.54 \pm 0.29$ \%
\\
\hline
{1.2} & ${67.27 \pm 0.25}$ \% & ${9.66 \pm 0.23}$ \%
\\
\hline
\textbf{1.5} & $\bm{67.65 \pm 0.54}$ \% & $\bm{10.04 \pm 0.51}$ \% 
\\
\hline
\textbf{1.7} & $\bm{67.42 \pm 0.58}$ \% & $\bm{9.81 \pm 0.55}$ \% 
\\
\hline
2.0 & $66.17 \pm 0.77$ \% & $8.56 \pm 0.73$ \% 
\\
\hline
\end{tabular}
\end{minipage}
\caption{Detailed Version of \Cref{tab:2a} (50\% Random Corruption in Caltech-256 with ResNet-34)}
\label{supp-tab1-oct31}
\end{table}

\begin{table}[!htb]
\begin{minipage}{\textwidth}
  \centering
\begin{tabular}{|c|c|c|c|c|c|}
\hline
$\xi$ & Student's test acc. & \makecell{Improvement of student \\ over teacher}
\\
\hline
0.0 (=Teacher) & $61.15 \pm 0.09$  \% & $0$ \%  
\\
\hline
0.5 & $62.04 \pm 0.02$ \% & $0.89 \pm 0.10$ \%
\\
\hline
1.0 & $63.16 \pm 0.06$ \% & $2.01 \pm 0.14$ \%
\\
\hline
1.5 & $64.28 \pm 0.06$ \% & $3.13 \pm 0.11$ \%
\\
\hline
2.0 & $65.37 \pm 0.12$ \% & $4.22 \pm 0.20$ \%
\\
\hline
2.5 & $66.43 \pm 0.05$ \% & $5.28 \pm 0.13$ \%
\\
\hline
\textbf{3.0} & $\bm{66.93 \pm 0.03}$ \% & $\bm{5.78 \pm 0.12}$ \%
\\
\hline
\textbf{3.5} & $\bm{67.01 \pm 0.13}$ \% & $\bm{5.86 \pm 0.18}$ \%
\\
\hline
4.0 & $66.47 \pm 0.25$ \% & $5.32 \pm 0.33$ \%
\\
\hline
\end{tabular}
\end{minipage}
\caption{Detailed Version of \Cref{tab:3a} (50\% Random Corruption in Caltech-256 with VGG-16)}
\label{supp-tab1-nov8}
\end{table}

\begin{table}[!htb]
\begin{minipage}{\textwidth}
  \centering
\begin{tabular}{|c|c|c|c|c|c|}
\hline
$\xi$ & Student's test acc. & \makecell{Improvement of student \\ over teacher}
\\
\hline
0.0 (=Teacher) & $50.80 \pm 0.04$ \% & $0$ \%  
\\
\hline
0.2 & $51.78 \pm 0.14$ \% & $0.98 \pm 0.12$ \%
\\
\hline
0.5 & $53.26 \pm 0.14$ \% & $2.46 \pm 0.11$ \%
\\
\hline
0.7 & $54.18 \pm 0.03$ \% & $3.38 \pm 0.02$ \%
\\
\hline
{1.0} & ${54.99 \pm 0.08}$ \% & ${4.19 \pm 0.09}$ \%
\\
\hline
\textbf{1.2} & $\bm{55.26 \pm 0.18}$ \% & $\bm{4.46 \pm 0.19}$ \%
\\
\hline
\textbf{1.5} & $\bm{55.26 \pm 0.15}$ \% & $\bm{4.46 \pm 0.17}$ \%
\\
\hline
\textbf{1.7} & $\bm{55.12 \pm 0.16}$ \% & $\bm{4.32 \pm 0.18}$ \%
\\
\hline
2.0 & $54.32 \pm 0.20$ \% & $3.52 \pm 0.23$ \%
\\
\hline
\end{tabular}
\end{minipage}
\caption{Detailed Version of \Cref{tab:2b} (50\% Hierarchical Corruption in CIFAR-100 with ResNet-34)}
\label{supp-tab2-oct31}
\end{table}

\begin{table}[!htb]
\begin{minipage}{\textwidth}
  \centering
\begin{tabular}{|c|c|c|c|c|c|}
\hline
$\xi$ & Student's test acc. & \makecell{Improvement of student \\ over teacher}
\\
\hline
0.0 (=Teacher) & $41.60 \pm 0.08$ \% & $0$ \%
\\
\hline
0.2 & $42.70 \pm 0.03$ \% & $1.10 \pm 0.09$ \%
\\
\hline
0.5 & $44.29 \pm 0.06$ \% & $2.69 \pm 0.02$ \%
\\
\hline
0.7 & $45.32 \pm 0.05$ \% & $3.72 \pm 0.05$ \%
\\
\hline
1.0 & $46.89 \pm 0.05$ \% & $5.29 \pm 0.11$ \%
\\
\hline
1.2 & $47.86 \pm 0.06$ \% & $6.26 \pm 0.09$ \%
\\
\hline
\textbf{1.5} & $\bm{48.80 \pm 0.16}$ \% & $\bm{7.20 \pm 0.14}$ \%
\\
\hline
\textbf{1.7} & $\bm{48.83 \pm 0.18}$ \% & $\bm{7.23 \pm 0.17}$ \%
\\
\hline
2.0 & $48.02 \pm 0.25$ \% & $6.42 \pm 0.26$ \%
\\
\hline
\end{tabular}
\end{minipage}
\caption{Detailed Version of \Cref{tab:3b} (50\% Hierarchical Corruption in CIFAR-100 with VGG-16)}
\label{supp-tab2-nov8}
\end{table}

\begin{table}[!htb]
\begin{minipage}{\textwidth}
  \centering
\begin{tabular}{|c|c|c|c|c|c|}
\hline
$\xi$ & Student's test acc. & \makecell{Improvement of student \\ over teacher}
\\
\hline
0.0 (=Teacher) & $48.93 \pm 0.08$ \% & 0 \%
\\
\hline
0.2 & $49.06 \pm 0.05$ \% & $0.13 \pm 0.08$ \%
\\
\hline
0.5 & $49.90 \pm 0.04$ \% & $0.97 \pm 0.04$ \% 
\\
\hline
0.7 & $50.38 \pm 0.09$ \% & $1.45 \pm 0.01$ \%
\\
\hline
\textbf{1.0} & $\bm{50.78 \pm 0.07}$ \% & $\bm{1.85 \pm 0.09}$ \% 
\\
\hline
\textbf{1.2} & $\bm{50.80 \pm 0.06}$ \% & $\bm{1.87 \pm 0.06}$ \%
\\
\hline
\textbf{1.5} & $\bm{50.79 \pm 0.03}$ \% & $\bm{1.86 \pm 0.08}$ \%
\\
\hline
\textbf{1.7} & $\bm{50.73 \pm 0.04}$ \% & $\bm{1.80 \pm 0.05}$ \%
\\
\hline
2.0 & $50.46 \pm 0.09$ \% & $1.53 \pm 0.02$ \%
\\
\hline
\end{tabular}
\end{minipage}
\caption{Detailed Version of \Cref{tab:2c} (50\% Adversarial Corruption in Food-101 with ResNet-34)}
\label{supp-tab1-dec26}
\end{table}

\begin{table}[!htb]
\begin{minipage}{\textwidth}
  \centering
\begin{tabular}{|c|c|c|c|c|c|}
\hline
$\xi$ & Student's test acc. & \makecell{Improvement of student \\ over teacher}
\\
\hline
0.0 (=Teacher) & $37.01 \pm 0.46$ \% & 0 \%
\\
\hline
0.2 & $37.80 \pm 0.23$ \% & $0.79 \pm 0.23$ \%
\\
\hline
0.5 & $39.15 \pm 0.37$ \% & $2.14 \pm 0.09$ \%
\\
\hline
0.7 & $39.97 \pm 0.42$ \% & $2.96 \pm 0.04$ \%
\\
\hline
{1.0} & $40.86 \pm 0.51$ \% & $3.85 \pm 0.05$ \%
\\
\hline
\textbf{1.2} & $\bm{41.23 \pm 0.60}$ \% & $\bm{4.22 \pm 0.15}$ \%
\\
\hline
\textbf{1.5} & $\bm{41.40 \pm 0.71}$ \% & $\bm{4.39 \pm 0.29}$ \%
\\
\hline
\textbf{1.7} & $\bm{41.21 \pm 0.76}$ \% & $\bm{4.20 \pm 0.34}$ \%
\\
\hline
2.0 & $40.54 \pm 0.90$ \% & $3.53 \pm 0.49$ \%
\\
\hline
\end{tabular}
\end{minipage}
\caption{Detailed Version of \Cref{tab:3c} (50\% Adversarial Corruption in Food-101 with VGG-16)}
\label{supp-tab2-dec26}
\end{table}

\begin{table}[!htb]
\begin{minipage}{\textwidth}
  \centering
\begin{tabular}{|c|c|c|c|c|c|}
\hline
$\xi$ & Student's test acc. & \makecell{Improvement of student \\ over teacher}
\\
\hline
0.0 (=Teacher) & $25.01 \pm 0.20$ \% & 0 \%
\\
\hline
0.2 & $25.90 \pm 0.09$ \% & $0.89 \pm 0.15$ \%
\\
\hline
0.5 & $27.16 \pm 0.16$ \% & $2.15 \pm 0.06$ \%
\\
\hline
0.7 & $27.76 \pm 0.21$ \% & $2.75 \pm 0.10$ \%
\\
\hline
1.0 & $28.33 \pm 0.14$ \% & $3.32 \pm 0.11$ \%
\\
\hline
\textbf{1.2} & $\bm{28.54 \pm 0.11}$ \% & $\bm{3.53 \pm 0.16}$ \%
\\
\hline
1.5 & $28.39 \pm 0.10$ \% & $3.38 \pm 0.12$ \% 
\\
\hline
1.7 & $27.97 \pm 0.20$ \% & $2.96 \pm 0.24$ \%
\\
\hline
2.0 & $26.80 \pm 0.27$ \% & $1.79 \pm 0.29$ \%
\\
\hline
\end{tabular}
\end{minipage}
\caption{Detailed Version of \Cref{tab:2d} (30\% Random Corruption in Stanford Cars with ResNet-34)}
\label{new1}
\end{table}

\begin{table}[!htb]
\begin{minipage}{\textwidth}
  \centering
\begin{tabular}{|c|c|c|c|c|c|}
\hline
$\xi$ & Student's test acc. & \makecell{Improvement of student \\ over teacher}
\\
\hline
0.0 (=Teacher) & $50.34 \pm 0.23$ \% & $0$ \%  
\\
\hline
0.5 & $50.22 \pm 0.23$ \% & $-0.12 \pm 0.20$ \%
\\
\hline
1.0 & $50.88 \pm 0.24$ \% & $0.54 \pm 0.02$ \%
\\
\hline
1.5 & $51.20 \pm 0.22$ \% & $0.86 \pm 0.01$ \%
\\
\hline
2.0 & $51.91 \pm 0.41$ \% & $1.57 \pm 0.34$ \%
\\
\hline
2.5 & $52.39 \pm 0.44$ \% & $2.05 \pm 0.27$ \%
\\
\hline
{3.0} & $52.83 \pm 0.28$ \% & $2.49 \pm 0.25$ \%
\\
\hline
{3.5} & $52.96 \pm 0.28$ \% & $2.62 \pm 0.12$ \%
\\
\hline
{4.0} & $53.21 \pm 0.31$ \% & $2.87 \pm 0.09$ \%
\\
\hline
{4.5} & $53.35 \pm 0.15$ \% & $3.01 \pm 0.22$ \%
\\
\hline
\textbf{5.0} & $\bm{53.55 \pm 0.25}$ \% & $\bm{3.21 \pm 0.07}$ \%
\\
\hline
{5.5} & $53.28 \pm 0.50$ \% & $2.94 \pm 0.33$ \%
\\
\hline 
{6.0} & $53.40 \pm 0.28$ \% & $3.06 \pm 0.09$ \%
\\
\hline 
\end{tabular}
\end{minipage}
\caption{Detailed Version of \Cref{tab:2e} (30\% Adversarial Corruption in Flowers-102 with ResNet-34)}
\label{new2}
\end{table}

\begin{table}[!htb]
\begin{minipage}{\textwidth}
  \centering
\begin{tabular}{|c|c|c|c|c|c|}
\hline
Corruption level & Teacher's test acc. & Student's test acc. & \makecell{Improvement of student \\ over teacher}
\\
\hline
0\% & $83.97 \pm 0.10$ \% & $83.93 \pm 0.12$ \% & $-0.04 \pm 0.02$ \%
\\
\hline
10\% & $77.86 \pm 0.14$ \% & $80.37 \pm 0.04$ \% & $2.51 \pm 0.11$ \%
\\
\hline
30\% & $68.09 \pm 0.21$ \% & $74.23 \pm 0.08$ \% & $6.14 \pm 0.16$ \%
\\
\hline
50\% & $57.61 \pm 0.03$ \% & $66.15 \pm 0.27$ \% & $8.54 \pm 0.29$ \%
\\
\hline
\end{tabular}
\end{minipage}
\caption{Detailed Version of Random Corruption in Caltech-256 with ResNet-34 and $\xi=1$ (\Cref{tab:1a})}
\label{supp-tab1}
\end{table}

\begin{table}[!htb]
\begin{minipage}{\textwidth}
  \centering
\begin{tabular}{|c|c|c|c|c|c|}
\hline
Corruption level & Teacher's test acc. & Student's test acc. & \makecell{Improvement of student \\ over teacher}
\\
\hline
0\% & $83.97 \pm 0.10$ \% & $83.93 \pm 0.12$ \% & $-0.04 \pm 0.02$ \%
\\
\hline
10\% & $77.01 \pm 0.23$ \% & $79.33 \pm 0.13$ \% & $2.32 \pm 0.10$ \%
\\
\hline
30\% & $64.21 \pm 0.36$ \% & $69.29 \pm 0.12$ \% & $5.08 \pm 0.25$ \%
\\
\hline
50\% & $48.66 \pm 0.10$ \% & $54.43 \pm 0.29$ \% & $5.77 \pm 0.19$ \%
\\
\hline
\end{tabular}
\end{minipage}
\caption{Detailed Version of Adversarial Corruption in Caltech-256 with ResNet-34 and $\xi=1$ (\Cref{tab:1a})}
\label{supp-tab1-1}
\end{table}

\begin{table}[!htb]
\begin{minipage}{\textwidth}
  \centering
\begin{tabular}{|c|c|c|c|c|c|}
\hline
Corruption level & Teacher's test acc. & Student's test acc. & \makecell{Improvement of student \\ over teacher}
\\
\hline
0\% & $72.77 \pm 0.07$ \% & $72.54 \pm 0.07$ \% & $-0.23 \pm 0.06$ \%
\\
\hline
10\% & $70.57 \pm 0.14$ \% & $71.20 \pm 0.02$ \% & $0.63 \pm 0.11$ \%
\\
\hline
30\% & $66.80 \pm 0.06$ \% & $68.14 \pm 0.07$ \% & $1.34 \pm 0.13$ \%
\\
\hline
50\% & $62.47 \pm 0.10$ \% & $64.58 \pm 0.10$ \% & $2.11 \pm 0.15$ \%
\\
\hline
\end{tabular}
\end{minipage}
\caption{Detailed Version of Random Corruption in CIFAR-100 with ResNet-34 and $\xi=1$ (\Cref{tab:1c})}
\label{supp-tab3}
\end{table}

\begin{table}[!htb]
\begin{minipage}{\textwidth}
  \centering
\begin{tabular}{|c|c|c|c|c|c|}
\hline
Corruption level & Teacher's test acc. & Student's test acc. & \makecell{Improvement of student \\ over teacher}
\\
\hline
0\% & $72.77 \pm 0.07$ \% & $72.54 \pm 0.07$ \% & $-0.23 \pm 0.06$ \%
\\
\hline
10\% & $69.39 \pm 0.09$ \% & $70.58 \pm 0.08$ \% & $1.19 \pm 0.08$ \%
\\
\hline
30\% & $62.18 \pm 0.12$ \% & $64.98 \pm 0.10$ \% & $2.80 \pm 0.06$ \%
\\
\hline
50\% & $50.80 \pm 0.04$ \% & $54.99 \pm 0.08$ \% & $4.19 \pm 0.09$ \%
\\
\hline
\end{tabular}
\end{minipage}
\caption{Detailed Version of Hierarchical Corruption in CIFAR-100 with ResNet-34 and $\xi=1$ (\Cref{tab:1c})}
\label{supp-tab5}
\end{table}

\begin{table}[!htb]
\begin{minipage}{\textwidth}
  \centering
\begin{tabular}{|c|c|c|c|c|c|}
\hline
Corruption level & Teacher's test acc. & Student's test acc. & \makecell{Improvement of student \\ over teacher}
\\
\hline
0\% & $63.65 \pm 0.08$ \% & $63.28 \pm 0.04$ \% & $-0.37 \pm 0.10$ \%
\\
\hline
10\% & $62.44 \pm 0.03$ \% & $62.54 \pm 0.03$ \% & $0.10 \pm 0.04$ \%
\\
\hline
30\% & $59.38 \pm 0.20$ \% & $59.85 \pm 0.18$ \% & $0.47 \pm 0.04$ \%
\\
\hline
50\% & $54.76 \pm 0.13$ \% & $55.88 \pm 0.06$ \% & $1.12 \pm 0.08$ \%
\\
\hline
\end{tabular}
\end{minipage}
\caption{Detailed Version of Random Corruption in Food-101 with ResNet-34 and $\xi=1$ (\Cref{tab:1b})}
\label{supp-tab2}
\end{table}

\begin{table}[!htb]
\begin{minipage}{\textwidth}
  \centering
\begin{tabular}{|c|c|c|c|c|c|}
\hline
Corruption level & Teacher's test acc. & Student's test acc. & \makecell{Improvement of student \\ over teacher}
\\
\hline
0\% & $63.65 \pm 0.08$ \% & $63.28 \pm 0.04$ \% & $-0.37 \pm 0.10$ \%
\\
\hline
10\% & $61.92 \pm 0.13$ \% & $62.16 \pm 0.11$ \% & $0.25 \pm 0.05$ \%
\\
\hline
30\% & $57.03 \pm 0.16$ \% & $57.80 \pm 0.22$ \% & $0.77 \pm 0.06$ \%
\\
\hline
50\% & 
$48.93 \pm 0.08$ \% & $50.78 \pm 0.07$ \% & $1.85 \pm 0.09$ \%
\\
\hline
\end{tabular}
\end{minipage}
\caption{Detailed Version of Adversarial Corruption in Food-101 with ResNet-34 and $\xi=1$ (\Cref{tab:1b})}
\label{supp-tab2-1}
\end{table}

\begin{table}[!htb]
\begin{minipage}{\textwidth}
  \centering
\begin{tabular}{|c|c|c|c|c|c|}
\hline
$\xi$ & Student's test acc. & \makecell{Improvement of student \\ over teacher}
\\
\hline
0.0 (=Teacher) & $39.78 \pm 0.18$ \% & 0 \%
\\
\hline
0.2 & $41.82 \pm 0.06$ \% & $2.04 \pm 0.16$ \%
\\
\hline
0.5 & $44.83 \pm 0.11$ \% & $5.05 \pm 0.10$ \%
\\
\hline
0.7 & $46.60 \pm 0.09$ \% & $6.82 \pm 0.16$ \%
\\
\hline
{1.0} & $48.85 \pm 0.06$ \% & $9.07 \pm 0.17$ \%
\\
\hline
{1.2} & $50.21 \pm 0.03$ \% & $10.43 \pm 0.15$ \%
\\
\hline
{1.5} & $51.56 \pm 0.10$ \% & $11.78 \pm 0.16$ \%
\\
\hline
{1.7} & $52.08 \pm 0.03$ \% & $12.30 \pm 0.19$ \%
\\
\hline
\textbf{2.0} & $\bm{52.85 \pm 0.05}$ \% & $\bm{13.07 \pm 0.20}$ \%
\\
\hline
\textbf{2.2} & $\bm{52.67 \pm 0.33}$ \% & $\bm{12.89 \pm 0.43}$ \%
\\
\hline
2.5 & $51.52 \pm 0.51$ \% & $11.74 \pm 0.60$ \%
\\
\hline
\end{tabular}
\end{minipage}
\caption{Detailed Version of \Cref{tab:jan25-a} (50\% Random Corruption in Caltech-256 w/ ResNet-34 and wt. decay $=1 \times 10^{-4}$)}
\label{new-expt-jan25-1a-full}
\end{table}

\begin{table}[!htb]
\begin{minipage}{\textwidth}
  \centering
\begin{tabular}{|c|c|c|c|c|c|}
\hline
$\xi$ & Student's test acc. & \makecell{Improvement of student \\ over teacher}
\\
\hline
0.0 (=Teacher) & $54.71 \pm 0.05$ \% & 0 \%
\\
\hline
0.2 & $55.10 \pm 0.04$ \% & $0.39 \pm 0.09$ \% 
\\
\hline
0.5 & $56.61 \pm 0.04$ \% & $1.90 \pm 0.08$ \%
\\
\hline
0.7 & $57.51 \pm 0.04$ \% & $2.80 \pm 0.09$ \%
\\
\hline
{1.0} & $58.53 \pm 0.05$ \% & $3.82 \pm 0.04$ \%
\\
\hline
{1.2} & $58.88 \pm 0.03$ \% & $4.17 \pm 0.05$ \%
\\
\hline
\textbf{1.5} & $\bm{59.22 \pm 0.11}$ \% & $\bm{4.51 \pm 0.07}$ \%
\\
\hline
\textbf{1.7} & $\bm{59.27 \pm 0.06}$ \% & $\bm{4.56 \pm 0.02}$ \%
\\
\hline
2.0 & $58.86 \pm 0.02$ \% & $4.15 \pm 0.07$ \%
\\
\hline
\end{tabular}
\end{minipage}
\caption{Detailed Version of \Cref{tab:jan25-b} (50\% Hierarchical Corruption in CIFAR-100 w/ ResNet-34 and wt. decay $=1 \times 10^{-3}$)}
\label{new-expt-jan25-1b-full}
\end{table}

\clearpage

\begin{table}[!htb]
\begin{minipage}{\textwidth}
  \centering
\begin{tabular}{|c|c|c|c|c|c|}
\hline
Corruption level & Teacher's test acc. & Student's test acc. & \makecell{Improvement of student \\ over teacher}
\\
\hline
0\% & $82.95 \pm 0.02$ \% & $83.20 \pm 0.04$ \% & $0.25 \pm 0.04$ \%
\\
\hline
10\% & $74.29 \pm 0.12$ \% & $75.68 \pm 0.03$ \% & $1.39 \pm 0.10$ \%
\\
\hline
30\% & $54.65 \pm 0.13$ \% & $60.96 \pm 0.18$ \% & $6.31 \pm 0.11$ \%
\\
\hline
50\% & $39.78 \pm 0.18$ \% & $48.85 \pm 0.06$ \% & $9.07 \pm 0.17$ \%
\\
\hline
\end{tabular}
\end{minipage}
\caption{Detailed Version of \Cref{tab:jan25-c} (Random Corruption in Caltech-256 w/ ResNet-34, $\xi=1$ and wt. decay $=1 \times 10^{-4}$)}
\label{new-expt-jan25-2a-full}
\end{table}

\begin{table}[!htb]
\begin{minipage}{\textwidth}
  \centering
\begin{tabular}{|c|c|c|c|c|c|}
\hline
Corruption level & Teacher's test acc. & Student's test acc. & \makecell{Improvement of student \\ over teacher}
\\
\hline
0\% & $72.99 \pm 0.09$ \% & $72.26 \pm 0.01$ \% & $-0.73 \pm 0.09$ \%
\\
\hline
10\% & $70.59 \pm 0.04$ \% & $70.62 \pm 0.07$ \% & $0.03 \pm 0.10$ \%
\\
\hline
30\% & $64.64 \pm 0.12$ \% & $66.41 \pm 0.09$ \% & $1.77 \pm 0.19$ \% 
\\
\hline
50\% & $54.71 \pm 0.05$ \% & $58.53 \pm 0.05$ \% & $3.82 \pm 0.04$ \%
\\
\hline
\end{tabular}
\end{minipage}
\caption{Detailed Version of \Cref{tab:jan25-d} (Hierarchical Corruption in CIFAR-100 w/ ResNet-34, $\xi=1$ and wt. decay $=1 \times 10^{-3}$)}
\label{new-expt-jan25-2b-full}
\end{table}

\end{document}